%% file: arxiv.tex
\documentclass[11pt]{article}
\usepackage[T1]{fontenc}
\usepackage{lmodern}
\usepackage[margin=1in]{geometry}
\usepackage{amsmath}
\usepackage{amssymb}
\usepackage{amsthm}
\usepackage{mathtools}
\usepackage{enumitem}
\usepackage{microtype}
\usepackage[x11names]{xcolor}
\usepackage[numbers,square]{natbib}
\usepackage{url}
\usepackage{placeins}
\usepackage[algo2e,ruled]{algorithm2e}
\usepackage{hyperref}
\input{macros.tex}
\hypersetup{
  colorlinks=true,
  linkcolor=blue!60!black,
  citecolor=blue!60!black,
  urlcolor=blue!60!black,
  pdftitle={\papertitle},
  pdfauthor={Vanessa Kosoy, Vinayak Pathak}
}
\newtheorem{theorem}{Theorem}
\newtheorem{lemma}[theorem]{Lemma}
\newtheorem{remark}[theorem]{Remark}
\newcommand{\acks}[1]{\section*{Acknowledgments}#1}

\title{\papertitle}
\author{%
  \textbf{Vanessa Kosoy}\textsuperscript{1,2}\hspace{2em}\textbf{Vinayak Pathak}\textsuperscript{2,}\thanks{Corresponding author: \href{mailto:path.vinayak@gmail.com}{\nolinkurl{path.vinayak@gmail.com}}.}\\[1.5ex]
  {\small\textsuperscript{1}Faculty of Mathematics, Technion, Haifa, Israel}\\
  {\small\textsuperscript{2}CORAL\thanks{Computational Rational Agents Laboratory, Delaware, USA. \url{https://coral-research.org/}}}}
\date{}

\begin{document}
\maketitle

\input{abstract.tex}

\input{manuscript.tex}
\end{document}

%% file: macros.tex
\newcommand{\papertitle}{On the Computational Tractability of Robust Bandits}
\makeatletter
\DeclareRobustCommand{\citewithinparens}[1]{%
  \ifNAT@numbers\citet{#1}\else\citealp{#1}\fi
}
\makeatother
\definecolor{draftblue}{HTML}{0057D9}
\long\def\draft#1{{#1}}
\newcommand{\credal}{%
  \mathord{\raisebox{0.055em}{%
    \setlength{\fboxsep}{0pt}\setlength{\fboxrule}{0.055em}%
    \fbox{\rule{0pt}{0.57em}\rule{0.57em}{0pt}}%
  }}\hspace{0.04em}%
}
\SetAlgoCaptionSeparator{.}
\DontPrintSemicolon
\setlist[enumerate]{label=\arabic*.,leftmargin=2em}

%% file: abstract.tex
\begin{abstract}%
Learning when the environment does not belong to the
learner's hypothesis class is typically handled using agnostic learning guarantees. However, for anything beyond supervised learning, agnostic guarantees are difficult to come by. Recently, \emph{imprecise bandits}~\citep{kosoy2025imprecise} (later renamed to \emph{robust bandits} in~\citewithinparens{appel2025regret}) were introduced as another approach to unrealizable learning in the bandits setting and a $\Theta(\sqrt{T})$ regret learner was shown for a large class. However, no computational guarantees were provided. In this
paper we identify a special case that admits a polynomial-time learner with
$\tilde{O}(\sqrt{T})$ regret. We also show that several small
generalizations of this special case are NP-hard thus indicating that the special case is at the boundary of what is tractable. It has been recently suggested~\citep{kosoy2018learningagenda} that computationally efficient learners for unrealizable learning problems are crucial for solving the AI alignment problem. This work is a small step in that direction.
\end{abstract}

%% file: manuscript.tex
% Translated from the unchanged Typst manuscript; hardness proofs moved to Appendix D.

\section{Introduction}
Making sure that an artificial superintelligence does not pursue goals that are unaligned with humans is a challenging problem, and arguably one of the most important of our times~\citep{ngo2024alignment,bengio2024managing}. Several attempts have been made to get a handle on it~\citep{russell2015research,amodei2016concrete,everitt2018agi,ji2023alignment} and a recurring theme is that at its core it is a learning problem~\citep{dewey2011learning,bostrom2014superintelligence,soares2016value,russell2019human,ng2000algorithms,hadfieldmenell2016cooperative,hadfieldmenell2017inverse,milli2017obedient,jeon2020reward,christiano2017deep,leike2018scalable,ouyang2022training,bai2022training,shah2019preferences,armstrong2018occam,hubinger2019risks,casper2023open}. Indeed, imagine several agents coexisting in the universe and interacting with each other. Suppose that some of these agents (the AIs) are trying to act in the interests of other groups of agents (their users). The agents only have partial information about each other's behaviours and goals as well as about the universe, and must ensure the universe evolves towards desirable states. Given the role of learning, it is not unreasonable to imagine that learning theory might help us pave the way towards solving the problem~\citep{kosoy2018learningagenda}.

However, the currently known results in learning theory are likely insufficient to solve alignment. In particular, it has been suggested that designing better learners for unrealizable learning is crucial for solving the problem~\citep{kosoy2018learningagenda, demski2019embedded}. Realizability is a strong assumption in general, but it becomes even more questionable in the multiagent learning scenarios described above.

One concrete issue with realizable learning is the grain of truth problem~\citep{hutter2009open,leike2016grain,wyeth2025grain,kosoy2023agenda}. Suppose Alice and Bob are playing a known game. Neither knows the other's strategy but assumes that they lie in a class $H$ of strategies~\citep{kalai1993rational,noguchi2015bayesian}. Can each player simultaneously achieve a low regret? The issue is that the strategy that achieves a low regret against all strategies in $H$ may not itself be a member of $H$~\citep{nachbar1997prediction,nachbar2005beliefs}. In that case, if Alice's strategy is in $H$, then Bob's strategy will have to be outside of $H$ and at least one of the players needs to solve an unrealizable learning problem. Finding a general and natural class $H$ whose learner lies within $H$ is called the grain of truth problem that remains open despite some recent progress using \emph{reflective oracles}~\citep{fallenstein2015reflective,leike2016grain,wyeth2025grain}.

More generally, realizability works only when the agent is more expressive than its environment. However, if the environment contains other agents that are similar or more powerful than the agent doing the learning then realizability fails. This can happen, for example, when Alice and Bob are running similar source codes~\citep{tennenholtz2004program,barasz2014robust}, or when Bob is able to predict Alice's actions~\citep{critch2019bounded,oesterheld2019robust}. 

The usual response to unrealizability is to provide agnostic learning guarantees. Here, one makes no assumption about the environment but guarantees that the learner's performance is competitive among a given class of strategies~\citep{kearns1994agnostic,bendavid2009agnostic}. However, designing learners with agnostic learning guarantees has been difficult. In supervised learning, there are many natural examples where agnostic learning is computationally intractable even though the realizable version is easy~\citep{daniely2016halfspaces,daniely2016dnf}. In more sophisticated learning frameworks, agnostic learning is even statistically hard~\citep{krishnamurthy2016pac,sekhari2021agnostic}. In online learning, one can generally get a much better regret for the realizable case than for agnostic~\citep{littlestone1988learning,bendavid2009agnostic}. In reinforcement learning, the agnostic case is usually statistically intractable~\citep{li2025agnostic,jia2023agnostic,sekhari2021agnostic} and only very simple problems are known to have good sample complexity bounds.

Closer to our current work, in an ``agnostic'' version of the bandits problem, one makes no assumptions about the mapping from arms to reward, but competes against the best policy in a given class. This can work if the class contains a small number of policies~\citep{auer2002nonstochastic}, however, impossibility results are easy to prove for infinite policy classes~\citep{bubeck2011pure,bubeck2011xarmed}. Moreover, even for finite classes, since the regret grows as the square root of the size of the policy class, the situation is hopeless if the class is finite but large. 

\paragraph{Our results} In this paper, we build on the framework of robust bandits as introduced in~\citet{kosoy2025imprecise}, which can be seen as an alternative to agnostic learning in the unrealizable case. The general idea is to say that we still have a known hypothesis class that contains a ground truth hypothesis, however, each hypothesis in our class provides only a partial specification of the environment. Given such a hypothesis, an adversary can pick any environment that is consistent with the specifications. The task of the learner is to compete with the optimal strategy that knows the true hypothesis but does not know the choices made by the adversary. Details can be found in Section~\ref{sec:setting}. In~\citet{kosoy2025imprecise}, a class of instances of the robust bandits problem, which can be seen as a generalization of stochastic linear bandits, was shown to have a $\Theta(\sqrt{T})$ regret. Here, we investigate the computational complexity of the learner. We demonstrate a special case of the setting of~\citet{kosoy2025imprecise} in which a polynomial time learner achieves $O(\sqrt{T})$ regret. We also show that generalizing this special instance in minor ways quickly leads to computational intractability, thus demonstrating that it is at the boundary of what is tractable.

\section{Setting}
\label{sec:setting}
A \emph{credal set} over a domain $D$ is a nonempty closed convex set of probability distributions over $D$, where closedness is with respect to the weak topology. We denote by $\credal D$ the set of all credal sets over $D$.

An instance of the robust bandits problem is a tuple $\mathcal{M} = \left( X , D , H , r \right)$, where $X$ is a set of arms, $D$ is a set of possible outcomes, $H$ is a set of mappings of type $X \to \credal D$, and $r : X \times D \to \mathbb{R}$ is a reward function.

The robust bandits game is played between a learner and an adversary for $T$ rounds (both adversary and learner know $T$). At the beginning of the game, the adversary picks some $h^{\star} \in H$. The learner knows $X , D , H , r$, and $T$, but does not have knowledge of $h^{\star}$. Then, at each round $t$:

\begin{enumerate}
\item Learner picks an arm $x_{t} \in X$.

\item Adversary picks a distribution $P_{t} \in h^{\star} \left( x_{t} \right)$.

\item An outcome $y_{t} \sim P_{t}$ is drawn and shown to the learner as feedback.

\item Learner gets reward $r \left( x_{t} , y_{t} \right)$.

\end{enumerate}
Given a hypothesis $h \in H$, we define the value functions
$v_{h} \left( x \right) \coloneqq \operatorname*{min}_{P \in h \left( x \right)} \mathbb{E}_{y \sim P} \left[ r \left( x , y \right) \right]$
and $V_{h} \coloneqq \operatorname*{max}_{x \in X} v_{h} \left( x \right)$.
We use the shorthand $v^{\ast} \left( x \right)$ and $V^{\star}$ to denote the value functions corresponding to the true hypothesis $h^{\star}$.

The regret over horizon $T$ is

\begin{equation*}
R_{T} \coloneqq T V^{\star} - \mathbb{E} \left[ \sum_{t = 1}^{T} r \left( x_{t} , y_{t} \right) \right] .
\end{equation*}

Here the expectation in the second term is taken wrt the randomness in the learner's and the adversary's strategies as well as the randomness arising from sampling $y_{t} \sim P_{t}$.

We are interested in learner strategies that can achieve a sublinear (in $T$) regret for all true hypotheses $h^{\star} \in H$. We say that a robust bandits instance $\mathcal{M} = \left( X , D , H , r \right)$ has a \emph{statistically efficient} learner if such a learner strategy exists.

\textbf{Computational tractability.} In this paper, we are interested in studying the computational complexity of the learner, and thus we need to precisely specify the computational problem being solved. We assume that the learner is specialized to a parameterized family $\{\mathcal{M}_\phi\}$ of robust bandits instances. Then, a \emph{learner policy} is a randomized algorithm that takes, as input, a history $h_t=((x_i,y_i))_{i=1}^t$ and an encoding of the parameters $\phi$, and outputs the next arm $x_{t+1}$. We use $|\phi|$ to denote the total encoding length
of $\phi$. We will say that a learner is \emph{computationally efficient} if its policy runs in time polynomial in $|\phi|$ and $T$ and achieves a regret $R_{T} \le \operatorname{poly} \left( | \phi | \right) T^{1 - \alpha}$ for some fixed constant $\alpha > 0$.\footnote{\label{fn:delay}
Note that this is a standard way of defining computationally efficient
no-regret learners~\citep[Section~3]{foster2016online}. In particular, it is important to require a $T^{1-\alpha}$ regret as opposed to merely sublinear in $T$ since a sublinear regret, exponential time learner can be easily converted into a sublinear regret, polynomial time one using Pitt's delaying trick: take an arbitrary learner and postpone its expensive computation until enough data have arrived
that the computation becomes polynomial in the accumulated input
size~\citep[Section~2.3.2]{eyraud2016efficiency}.
}

\subsection{Linear Robust Bandits}\label{sec:linear-imprecise-bandits}

For most of this work, we study the following linear specialization of the
instance $\mathcal{M} = \left( X , D , H , r \right)$ defined above. Let
$X \subseteq \mathbb{R}^{d_{X}}$ be the arm set, and let
$D \subseteq \mathbb{R}^{d_{D}}$ be the outcome set. We assume that $X$ and $D$ are
Euclidean balls of radii $R_{X}$ and $R_{D}$ (not necessarily centred at the origin).
The reward is linear in the arm and outcome,
$r \left( x , y \right) \coloneqq a^{T} x + b^{T} y$.
Write $C_{r} \coloneqq \operatorname*{max}_{X \times D} r - \operatorname*{min}_{X \times D} r$ for the range of the reward.

{\draft{Fix $d_C\in \mathbb{N}$.} Each hypothesis $h$ uses a set of $d_C$ affine constraints as follows. Let
$B\in\mathbb{R}^{d_C\times d_X}$, $C\in\mathbb{R}^{d_C\times d_D}$ be matrices, and $d\in\mathbb{R}^{d_C}$ a vector. Define
\begin{equation}
K_h(x)\coloneqq\{y\in D:Cy+Bx+d=0\},\label{eq:setting-compatible-set}
\end{equation}
and $h(x)\coloneqq\{P\in\Delta D:\mathbb{E}_{y\sim P}[y]\in K_h(x)\}$.
Let $H$ consist of all hypotheses of this form for which $K_h(x)$ is
nonempty for every $x\in X$} \draft{and which satisfy the transversality
condition of~\citet{kosoy2025imprecise} with a cutoff
$S \in \left( 0 , 1 \right]$. That is, $h$ admits a representation $B,C,d$
such that, for every $x \in X$ and every $p \in \mathbb{R}^{d_{D}}$
satisfying $C p + B x + d = 0$ with $p \notin D$, we have
$\operatorname{dist} \left( p , D \right) \ge S \operatorname{dist} \left( p , K_h \left( x \right) \right)$.}
For a Euclidean ball, this says that the affine solution spaces defined by
the \draft{constraints} remain uniformly bounded away from tangency to $D$.

\draft{We write $B^{\star},C^{\star},d^{\star}$ for such a representation of
the true hypothesis $h^{\star}$, and} define $K^{\star}(x)\coloneqq K_{h^{\star}}(x)$.

Because $r$ is linear in its outcome argument, \draft{the value function
$v^{\ast}$ defined above satisfies
$v^{\ast}(x)=\min_{y\in K^{\star}(x)}r(x,y)$.}

\draft{For the computational problem, the parameters $\phi$ of
Section~\ref{sec:setting} are the dimensions $d_X,d_D,d_C$, the centres and
radii of $X$ and $D$, the reward coefficients $a,b$, the cutoff $S$, and an
integer upper bound $C_{\max}\ge C_r$ on the reward range. They determine
$X$, $D$, $r$, and $H$.}
{We assume these numerical data are rational and given explicitly.}
\draft{Our regret bounds grow polynomially with $S^{-1}$ and $C_r$, so we want
$|\phi|$ to grow at least as fast. We therefore require $S^{-1}$ to be an
integer and give $S^{-1}$ and $C_{\max}$ in unary. Then $S^{-1}\le|\phi|$ and
$C_r\le C_{\max}\le|\phi|$. Requiring $S^{-1}$ to be an integer loses
nothing, since lowering $S$ only weakens the transversality condition.}

% \draft{The unary encoding also lets the running time grow polynomially with
% $S^{-1}$ and $C_{\max}$, but this hardly weakens the requirement on the
% learner. Suppose that $S^{-1}$ and $C_{\max}$ were given in binary, and that a
% learner runs in time polynomial in $S^{-1}$, $C_{\max}$, the encoding
% length, and $T$. The delaying trick of Footnote~\ref{fn:delay} makes its
% running time polynomial in the encoding length and $T$ alone. We play an
% arbitrary arm in the first $S^{-1}+C_{\max}$ rounds and only then start the
% learner. Each of these rounds costs at most $C_r\le C_{\max}$, so the regret
% grows by at most $C_{\max}(S^{-1}+C_{\max})$, which a regret bound polynomial
% in $S^{-1}$ and $C_{\max}$ already allows.}

Running time is measured in bit operations.

Now we state our main result.

\begin{theorem}[efficient $\sqrt{T}$ learning]\label{thm:efficient-upper-bound}
There exists a policy for linear robust bandits as defined above that,
  for every time horizon $T$ and every true hypothesis $h^{\star} \in H$,
  runs in time polynomial in $| \phi |$ and $T$ and achieves regret
  $R_{T} \le \tilde{O} \left( \sqrt{T} \right)$.
  The hidden factor depends polynomially on the dimensions, $S^{- 1}$,
  and $C_{r}$. The precise dependence is given in
  Theorem~\ref{thm:soft-square-root-regret}.

\end{theorem}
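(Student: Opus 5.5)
The plan is an optimism-style learner: estimate the unknown affine structure from the observed outcomes and play the arm that maximizes an optimistic value over a confidence set. Each step must reduce to a convex program of polynomial size. Two facts drive the argument.

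First, since the adversary picks $P_t$ with $\mathbb{E}_{P_t}[y]\in K^\star(x_t)$, the observations satisfy
\[
C^\star y_t + B^\star x_t + d^\star = C^\star\eta_t ,
\]
where $\eta_t\coloneqq y_t-\mathbb{E}_{P_t}[y]$ is a bounded martingale-difference noise. So the unknown object is the affine constraint map $(x,y)\mapsto C^\star y+B^\star x+d^\star$. Equivalently, it is the affine subspace $L^\star\subseteq\mathbb{R}^{d_X+d_D}$ of pairs $(x,p)$ with $C^\star p+B^\star x+d^\star=0$, whose slice over $D$ at $x$ is $K^\star(x)$.

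Second, each round's mean outcome $\bar y_t\coloneqq\mathbb{E}_{P_t}[y]$ lies in $K^\star(x_t)$. Hence the expected reward is $r(x_t,\bar y_t)\ge v^\ast(x_t)$, and the regret is at most $\sum_t (V^\star-v^\ast(x_t))$. We never need to identify the adversary's choices, only $L^\star$ up to the accuracy that matters for $v^\ast$.

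The concrete steps are as follows.

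\emph{(i) Confidence set.} Build a confidence set for $L^\star$ by least squares. Normalize the representation so that the rows of $[C\;B]$ are orthonormal, which loses nothing since $K_h$ is invariant under left multiplication by invertible matrices; rank deficiencies are handled by enumerating $d_C'\le d_C$. The unknown is then a point of a Grassmannian-type set, and the squared residual $\sum_t\|C y_t+Bx_t+d\|^2$ is a quadratic form in $(B,C,d)$. A self-normalized martingale bound, as for linear bandits (Abbasi-Yadkori et al.), gives a confidence ellipsoid of radius $\tilde O(\sqrt{d\log T})$ in the empirical design norm.

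To keep things computable, I would replace the nonconvex set of subspaces by its convex relaxation: the set of projection-like matrices $M\succeq 0$ with trace bounds. The relaxed optimistic problem is then a semidefinite program of size polynomial in the dimensions.

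\emph{(ii) Optimistic arm choice.} Choose
\[
x_t\in\arg\max_x\;\max_{h\in\mathcal C_t}\;\min_{y\in K_h(x)} r(x,y).
\]
The inner minimum over the ball slice is a second-order-cone program. By LP/SOCP duality, the inner $\min$ becomes a $\max$ over dual multipliers, so the three-level problem collapses to a single maximization.

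This collapse is where transversality enters. Because $K_h(x)$ stays a distance proportional to $S$ away from tangency, the value $v_h(x)$ is Lipschitz in $h$ with constant $\mathrm{poly}(S^{-1},C_r)$, and strong duality holds with dual multipliers bounded by $\mathrm{poly}(S^{-1},C_r,R_D)$. The joint maximization is bilinear in (dual variables)$\times$(constraints); I would try to linearize it by optimizing over the products, using the relaxation from (i). The ellipsoid method then gives polynomial time in bits, with solutions accurate to within $1/\sqrt T$.

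\emph{(iii) Regret.} By optimism, $V^\star-v^\ast(x_t)\le v_{\tilde h_t}(x_t)-v^\ast(x_t)$. The Lipschitz bound from transversality controls this by $S^{-1}C_r$ times the width of the confidence set in the direction of the observed data $(x_t,y_t)$. The elliptical potential lemma then yields $\sum_t$ widths $\le\tilde O(\sqrt{dT})$, which gives the bound of Theorem~\ref{thm:soft-square-root-regret}.

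\emph{Main obstacle.} The main obstacle is the step in (ii)–(iii) that reconciles computation with the analysis. Width must be measured along directions $(x_t,y_t)$ we actually observe, whereas the optimistic value depends on $h$ through the nonlinear map $h\mapsto\min_{K_h(x)}r$. On top of that, the convex relaxation must not inflate the widths.

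My first attempt would be to parametrize everything by the dual certificate. Write $v_h(x)=\max_{\lambda,\mu}\{\text{dual objective}\}$, where the dual objective is linear in $\lambda^\top(Bx+d)$, $\lambda^\top C$ and a norm term from the ball. Optimism over $h$ then becomes a maximization of a function that is linear in the unknown products $\lambda^\top[C\;B\;d]$ over an ellipsoid. That is a tractable SOCP, and its widths are directly controlled by the elliptical potential along the observed $(x_t,y_t)$. The noise $\eta_t$ enters only through $C^\star\eta_t$; I would handle it with the bounded multipliers, so that the deviation is at most $\|\lambda\|R_D$ times a martingale term.
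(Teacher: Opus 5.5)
Your final idea, parametrizing by the dual certificate $z=\lambda^\top[d\;B\;C]$ under an ellipsoidal constraint, is close to the object the paper uses. However, three steps fail as stated.

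\textbf{Step (iii) bounds the wrong quantity.} The pseudo-regret $\sum_t(V^\star-v^\ast(x_t))$ can be linear for your algorithm. The gap $v_{\tilde h_t}(x_t)-v^\ast(x_t)$ is not controlled by the width along the observed data, because the adversary chooses which point of $K^\star(x_t)$ you see, and $v^\ast(x_t)$ is set by a point it never has to reveal. Take $X=[-1,1]$, $D$ the unit disk, $r(x,y)=y_1-\epsilon x$ with $0<\epsilon<0.05$, and true constraint $y_2=(x+1)/4$. Let the adversary always reveal the midpoint $(0,(x+1)/4)$ of the chord $K^\star(x)$. The diameter $y_1=0$ is transversal and fits all data exactly. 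So once two distinct arms have been played, your optimistic rule settles on $x=-1$. There $v_{\tilde h_t}(-1)-v^\ast(-1)=1$, while the width along $w(x_t,y_t)$ is zero. Yet the realized regret is negative, since the reward received, $\epsilon$, exceeds $V^\star=-\epsilon-\sqrt3/2$. You must compare $V^\star$ with the reward of what is actually revealed. The paper does this in Lemma~\ref{lem:soft-outcome-regret}. Using transversality only of $h^\star$ (Lemma~\ref{lem:subspace-reward-bound}), it gets $v^\star(x')\le r(x',y)+c\sqrt{p_V(x',y)}$ for all $x'$ and $y$. So the penalized score is optimistic at the comparator arm, and $V^\star-r(x,\bar y)\le\lambda p_V(x,\bar y)+c^2/(4\lambda)$ at the block mean $\bar y$. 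No estimated hypothesis ever needs to be transversal, which a relaxed confidence set of subspaces could not guarantee.

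\textbf{The confidence set in (i) is invalid.} The noise sits in the regressors: $[d^\star\ B^\star\ C^\star]\,w(x_t,y_t)=C^\star\eta_t$. The self-normalized bound needs noise that is centred given a predictable regressor, so it does not apply, and your orthonormally normalized least squares is biased. In the same instance, let the adversary reveal $(0,\pm s)$ at $x=-1$ with probability $\tfrac12$ each. The true constraint then accumulates residual of order $s^2$ per round, while $y_1=0$ has residual $0$. So $h^\star$ drops out of every $\tilde O(\sqrt{d\log T})$ confidence set, and optimism loses its justification. The missing device is the block structure of Algorithm~\ref{alg:soft-square-root}. You hold the arm fixed, average the outcomes, and add $n\,\bar w\,\bar w^T$ once $np_V(x,\bar y_n)\ge1$. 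Then each block adds at most $h_T^2$ to $u^TVu$ for $u\in\mathcal L^\perp$ (Lemma~\ref{lem:interval-noise}). The determinant doubles with each block, so there are $O(q\log T)$ blocks, and together these give Equation~\ref{eq:soft-subspace-certificate}.

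\textbf{The computational step is not a convex program.} With $z=\lambda^\top[d\;B\;C]$, the objective contains the bilinear term $z_X^\top x$. Equivalently, the optimistic value is a supremum of affine functions of $x$, hence convex in $x$, and you must maximize it over $X$. The SOCP and the ellipsoid method therefore handle only a fixed arm, and an SDP relaxation over the products has no tightness guarantee. This is the obstacle you flag, and it is not a technicality: maximizing such a convex function of the arm becomes NP-hard when $X$ is a polytope (Appendix~\ref{app:hardness-polytope}). The paper's ingredient is Theorem~\ref{thm:bienstock}. By Lemmas~\ref{lem:soft-dual-form} and~\ref{lem:soft-qcqp}, $\max_xU_{V,\lambda}(x)$ is a QCQP in $x$, $z$, and one scalar. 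It has six quadratic constraints, one of them a bounded ellipsoid, and the dual vector is already confined to $\|z\|_2\le4\lambda$. Hence it can be solved to additive accuracy $1/T$ in polynomial time (Lemma~\ref{lem:soft-approx-score}), which costs at most $1$ in regret.
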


\draft{The dimensions, $S^{-1}$, and $C_r\le C_{\max}$ are all at most
$|\phi|$. So this policy is computationally efficient in the
sense of Section~\ref{sec:setting}.}

This dependence on $T$ is tight up to logarithmic factors. We first observe that
linear robust bandits contain stochastic linear bandits as a special case, and then
transfer the known lower bound for stochastic linear bandits.

\begin{remark}\label{rem:stochastic-linear-bandits}
Linear robust bandits contain stochastic linear bandits as a special case. In a
stochastic linear bandit, the learner picks arms $x_{t}$ from the unit ball
$X \subseteq \mathbb{R}^{d}$ and observes a reward in $\left[ - 1 , 1 \right]$ whose mean is
$\theta^{T} x_{t}$, for an unknown $\theta$ in the unit ball of $\mathbb{R}^{d}$. To write
this as an instance of linear robust bandits, we let the outcome be the reward itself:
take $D = \left[ - 1 , 1 \right]$, $r \left( x , y \right) = y$, and
{for each $\theta$ in the unit ball of $\mathbb{R}^{d}$, define $h_{\theta}$ by}
$K_{h_{\theta}} \left( x \right) \coloneqq \left\{ y \in D : y - \theta^{T} x = 0 \right\} = \left\{ \theta^{T} x \right\}$.
This has the required form with $C_{\theta} = 1$, $B_{\theta} = - \theta^{T}$, and
$d_{\theta} = 0$. Since $\left| \theta^{T} x \right| \le 1$, the only solution of the constraint
already lies in $D$, so transversality holds with $S = 1$. The hypothesis
$h_{\theta} \left( x \right)$ now contains every distribution on $\left[ - 1 , 1 \right]$ with mean $\theta^{T} x$.
In particular, it contains the reward distribution of the stochastic linear
bandit at $x$, and the adversary may choose that distribution at every round.
Against this adversary, the learner sees exactly what it would see in the
stochastic linear bandit. The regret is also the same in both problems, since
$v^{\ast} \left( x \right) = \theta^{T} x$ and $V^{\star} = \operatorname*{max}_{x \in X} \theta^{T} x$. So any learner for linear
robust bandits is also a learner for stochastic linear bandits with the
same regret.
\end{remark}

\begin{theorem}[lower bound, informal]\label{thm:lower-bound-informal}
Every learner for linear robust bandits has worst-case regret $\Omega \left( \sqrt{T} \right)$.
\end{theorem}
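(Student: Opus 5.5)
The plan is to reduce the statement to the classical minimax lower bound for stochastic linear bandits through Remark~\ref{rem:stochastic-linear-bandits}. That remark shows that for every $\theta$ in the unit ball of $\mathbb{R}^{d}$, the hypothesis $h_{\theta}$ lies in $H$ with cutoff $S=1$. So the family $\{h_{\theta}\}$ is admissible. We fix the dimension $d\ge 1$ (even $d=1$ suffices), take $D=[-1,1]$ and $r(x,y)=y$, and consider an arbitrary learner $\mathcal{A}$ for this linear robust bandits instance. The reward range is $C_r=2$, so all instance parameters are constants. Any $\Omega(\sqrt{T})$ lower bound we produce therefore cannot be absorbed into the polynomial prefactors.

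The next step fixes the adversary. For each $\theta$, the adversary picks $h^{\star}=h_{\theta}$ and at every round plays a fixed distribution $P_{\theta}(x)\in h_{\theta}(x)$ with mean $\theta^{T}x$. We take $P_{\theta}(x)$ to be the Bernoulli-type law on $\{-1,1\}$ with $\Pr[y=1]=(1+\theta^{T}x)/2$, and this distribution is supported in $D$. Against this adversary the learner's observation process is exactly a stochastic linear bandit with parameter $\theta$ and bounded rewards. As the remark notes, $v^{\ast}(x)=\theta^{T}x$ and $V^{\star}=\max_{x\in X}\theta^{T}x$. Hence the robust-bandit regret $R_T$ of $\mathcal{A}$ equals its stochastic linear bandit regret.

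Finally we invoke a known minimax lower bound for stochastic linear bandits on the unit ball with bounded (e.g.\ $\pm1$) rewards. One standard choice is the $\Omega(d\sqrt{T})$ bound of Dani--Hayes--Kakade or of Lattimore--Szepesv\'ari, Thm.~24.2 type, whose proof is usually stated for Gaussian noise. If no off-the-shelf statement covers our exact reward model, we would prove the $d=1$ case directly by a two-point argument. Take $\theta_{\pm}=\pm\epsilon$ with $\epsilon=T^{-1/2}$. Playing $x$ with the wrong sign incurs regret at least $\epsilon|x|$, and playing near $0$ costs $\epsilon(1-|x|)$ under both hypotheses. The KL divergence between the two observation laws over $T$ rounds is $O(T\epsilon^{2})=O(1)$, because each round contributes $O(\epsilon^{2}x_t^{2})$ for the $\pm1$ Bernoulli rewards. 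Pinsker's inequality or the Bretagnolle--Huber inequality then gives $\max_{\pm}R_T\ge c\,\epsilon T=c\sqrt{T}$.

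The main obstacle is the noise model. Most textbook lower bounds use Gaussian noise, whereas our outcomes must lie in $D=[-1,1]$. We would handle this with the bounded Bernoulli construction above and recheck the per-round KL bound $\mathrm{KL}\bigl(P_{\epsilon}(x)\,\|\,P_{-\epsilon}(x)\bigr)=O(\epsilon^{2}x^{2})$, which holds uniformly since $|\epsilon x|\le 1/2$ for $T\ge4$. The same bound also covers randomized learners, because the divergence computation conditions on the learner's internal randomness via the chain rule.
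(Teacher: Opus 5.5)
Your proposal is correct and follows the paper's route: Remark~\ref{rem:stochastic-linear-bandits} embeds stochastic linear bandits into linear robust bandits with $S=1$ and constant parameters, and the $\Omega(\sqrt{T})$ lower bound of Dani, Hayes and Kakade then transfers directly. Your self-contained $d=1$ two-point argument with $\pm1$-valued outcomes is a sound way to sidestep the noise-model question, with one caveat: with $\epsilon=T^{-1/2}$ exactly, the total KL is about $2$, so Pinsker gives a trivial bound, and you should use Bretagnolle--Huber or shrink $\epsilon$ by a constant factor.
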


\begin{proof}
Stochastic linear bandits have a worst-case regret lower bound of
$\Omega \left( \sqrt{T} \right)$~\citep{dani2008stochastic}. By
Remark~\ref{rem:stochastic-linear-bandits}, the same bound holds for linear robust bandits.
\end{proof}

\section{Related work}\label{sec:related-work}

\paragraph{Imprecise probabilities and credal sets} While there is substantial theory to support the idea that a rational agent's uncertain beliefs can be encoded using probabilities~\citep{savage1954foundations, jaynes2003probability}, there are situations where probabilities aren't enough (see~\citewithinparens{dalrymple2026imprecise} for examples). Encoding one's belief using a convex set of probabilities~\citep{ellsberg1961risk} and making decisions by maximizing the worst case expected utility can fix many of the issues and this decision rule can be shown to be implied by natural axioms on preferences~\citep{gilboa1989maxmin}. Levi called a closed convex set of distributions that represents a belief a credal set~\citep{levi1974indeterminate,levi1980enterprise}, and Walley built a general theory of imprecise probability around such sets~\citep{walley1991statistical}. Robust bandits can be thought of as generalizing the notion of stochastic bandits using imprecise probabilities.

\paragraph{Partial specifications of the environment} The idea that a hypothesis in the learner's hypothesis class should be a partial specification of the environment has been considered in previous work. The most notable is the work on learning partial concept classes~\citep{alon2021theory}, where a hypothesis is allowed to output a `don't know' label. Closer to our work in spirit is the framework introduced by \citet*{pour2026learning} (see also~\citewithinparens{kosoy2026ambiguous}) where each hypothesis outputs a \emph{set} of labels for each instance, and the learner's job is to guess any label from that set. Or in other words, the regret of the learner is the difference in loss compared to the learner that knows the true hypothesis but has no idea what precise label from the set of labels prescribed by the hypothesis is the adversary going to pick. This is exactly what robust bandits model, but in the bandits setting.

\paragraph{Outcome feedback} In our model the learner does not just see its reward. It sees the outcome $y_{t}$, and the reward $r \left( x_{t} , y_{t} \right)$ is a known function of it. Similar feedback appears in partial monitoring, where the learner sees a signal that need not reveal its reward~\citep{bartok2014partial,kirschner2020information}, and in the framework of \citet{foster2021statistical}, where the learner receives an arbitrary observation along with its reward. However, these models define regret differently. They compare the learner with the best action for the true environment, or with the best fixed action in hindsight, rather than with the worst-case value of the true hypothesis.

\paragraph{Smoothed analysis and smoothed adversaries} Another way to restrict the adversary is to require it to pick a smooth distribution, i.e., one whose density with respect to a fixed base measure is at most $1/\sigma$. This is the idea behind smoothed analysis~\citep{spielman2004smoothed}, and in online learning it goes by the name of smoothed adversaries~\citep{rakhlin2011online,haghtalab2024smoothed}. The set of $\sigma$-smooth distributions is a credal set, so a smoothed adversary is a special case of our adversary. It turns out that this restriction makes online learning as easy as statistical learning~\citep{block2022smoothed}, and, closer to our concerns, it makes computationally efficient learning possible given an optimization oracle for the hypothesis class~\citep{haghtalab2022oracle}. There are two differences from our setting though. First, in smoothed online learning the credal set is known to the learner, whereas we need to learn it. Second, the regret there is measured against the best hypothesis in hindsight, whereas we compete with the worst-case value of the true hypothesis, which is easier.

{

\section{Warmup: A $T^{2/3}$ Learner}\label{sec:hard-ellipsoid-warmup}

For an arm $x$ and outcome $y$, consider the vector $w \left( x , y \right) = \left( 1 , x , y \right)$.
This vector lies in $\mathbb{R}^{1 + d_{X} + d_{D}}$, and the constraints defining the
feasible set of outcomes, $C^{\star} y + B^{\star} x + d^{\star} = 0$,
determine a linear subspace of this vector space. Define $\mathcal{L} \coloneqq \operatorname{ker} \begin{pmatrix}d^{\star} & B^{\star} & C^{\star}\end{pmatrix} .$ A distribution is feasible for arm $x$ precisely when its expectation $m$ satisfies $m \in D$ and $w \left( x , m \right) \in \mathcal{L}$.
}{The learner's task is to learn enough about $\mathcal{L}$ to get low regret.
Of course, we may never learn all of $\mathcal{L}$. For example, if the adversary keeps
choosing distributions with expected values in a strictly smaller subspace, then we cannot
distinguish that subspace from the true $\mathcal{L}$. But this is fine, since we do
not need to learn directions that the adversary never uses.}

The fact that we never get to observe the true conditional mean, but only a noisy observation of it, adds an extra complication. In a noiseless world, a new observation tells us a new direction of $\mathcal{L}$. Thus we can simply maintain a linear subspace of $\mathbb{R}^{1 + d_{X} + d_{D}}$, and incrementally grow it every time we get new information. However, in the presence of noise, no observation can be used to add an entire direction with certainty. Instead, we encode our knowledge of $\mathcal{L}$ with an \emph{ellipsoid}, and incrementally grow it as we get new information. The details are written below.

{
We first translate and rescale the balls so that
$X = \left\{ x : {\left\Vert x \right\Vert}_{2} \le 1 \right\}$ and $D = \left\{ y : {\left\Vert y \right\Vert}_{2} \le 1 \right\}$. We continue to write the
reward as $r \left( x , y \right) = a^{T} x + b^{T} y$ and the true subspace as $\mathcal{L}$ in
these coordinates. The new ${\left\Vert b \right\Vert}_{2}$ equals $R_{D} {\left\Vert b \right\Vert}_{2}$ in the
original coordinates, and the non-tangency constant $S$ stays the same.
{The translation also adds a constant to the reward. This constant
shifts every reward and $V^{\star}$ by the same amount, so it changes
neither the regret nor any choice of the learner, and we drop it.}
Write $q \coloneqq 1 + d_{X} + d_{D}$ for the number of coordinates of $w \left( x , y \right)$.
For a positive-definite matrix $V$, write
${\left\Vert w \right\Vert}_{V^{- 1}} \coloneqq \sqrt{w^{T} V^{- 1} w}$.

Define $p_{V} \left( x , y \right) \coloneqq {\left\Vert w \left( x , y \right) \right\Vert}_{V^{- 1}}^{2} ,$ and $E_{V} \coloneqq \left\{ w \in \mathbb{R}^{q} : {\left\Vert w \right\Vert}_{V^{- 1}}^{2} \le 1 \right\} .$

Here $E_{V}$ is the ellipsoid we described above, which we incrementally
expand as we get new information about $\mathcal{L}$.
The algorithm works in blocks of length $n$ (we will choose $n$ later
to get a small regret). At the beginning of each block, we choose an
arm $x$ and play the same arm throughout the block.
The ellipsoid $E_{V}$ stays fixed during these rounds.

To choose an arm, we use our current guess for the feasible expected values. For each $x$, this consists of the $m \in D$ with
$\left( 1 , x , m \right) \in E_{V}$. If some arm has no such $m$, we choose it to learn
more about it. Otherwise, we choose the arm with the largest worst-case
reward over its guessed feasible means. This is an example of the principle of ``optimism in the face of uncertainty'' because our set of feasible means is conservative, and therefore our arm values are optimistic. 

At the end of a full block, we average its $n$ observations to get
$\overline{y}$. We then check whether $\left( 1 , x , \overline{y} \right)$ lies in $E_{V}$.
If it does not, we enlarge the ellipsoid to include it. Otherwise, we
leave the ellipsoid unchanged.

For the time being, we assume that we can carry out the rule for
choosing an arm exactly. We will consider the computational aspects
of the algorithm later.

\begin{algorithm2e}
\caption{}\label{alg:hard-ellipsoid-warmup}
\KwIn{Horizon $T$ and block length $n\in\{1,\ldots,T\}$.}
$V\leftarrow n^{-1}I_q$\;
\For{$j\leftarrow 1$ \KwTo $\lceil T/n\rceil$}{
  Define $\hat{K}_V(x)\coloneqq\{y\in D:p_V(x,y)\le1\}$ for every $x\in X$\;
  \eIf{there exists $x\in X$ with $\hat{K}_V(x)=\varnothing$}{
    Choose an arm $x\in X$ with $\hat{K}_V(x)=\varnothing$\;
  }{
    Define $H_V(x)\coloneqq\min_{y\in\hat{K}_V(x)}r(x,y)$ for every $x\in X$\;
    Choose $x\in\operatorname*{argmax}_{x'\in X}H_V(x')$\;
  }
  $n_j\leftarrow\min\{n,T-(j-1)n\}$\;
  \For{$i\leftarrow1$ \KwTo $n_j$}{
    Play $x$ and observe $y_i$\;
  }
  $\overline{y}\leftarrow\frac{1}{n_j}\sum_{i=1}^{n_j}y_i$,\quad
  $\overline{w}\leftarrow w(x,\overline{y})$\;
  \If{$n_j=n$ and $p_V(x,\overline{y})>1$}{
    $V\leftarrow V+\overline{w}\,\overline{w}^{T}$\;
  }
}
\end{algorithm2e}

\begin{theorem}[Informal]\label{thm:hard-ellipsoid-warmup}
  Algorithm~\ref{alg:hard-ellipsoid-warmup} has expected regret $R_{T} \le \tilde{O} \left( n + \frac{T}{\sqrt{n}} \right)$. In particular, choosing $n = \left\lceil T^{2/3} \right\rceil$ gives
  $R_{T} \le \tilde{O} \left( T^{2/3} \right)$.

\end{theorem}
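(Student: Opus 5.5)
We split the $\lceil T/n\rceil$ blocks into \emph{update blocks}, where the ellipsoid is enlarged, and \emph{silent blocks}, where it is not. We also introduce a good event $\mathcal{G}$ on which every full block average is close to its conditional mean. Concretely, let $\overline{m}_j \coloneqq \frac1n\sum_i \mathbb{E}[y_i\mid \text{past}]$ be the average of the adversary's chosen means in block $j$. Each such mean lies in $K^\star(x)$, and $K^\star(x)$ is convex, so $\overline{m}_j\in K^\star(x)$ and $w(x,\overline{m}_j)\in\mathcal{L}$. A vector Azuma--Hoeffding bound on the bounded martingale differences $y_i-\mathbb{E}[y_i\mid\text{past}]$ gives $\|\overline{y}_j-\overline{m}_j\|_2\le \epsilon\coloneqq c\sqrt{\log(dT)/n}$ for all blocks simultaneously, with probability $1-1/T$. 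The complement of $\mathcal{G}$ contributes at most $C_r$ to the expected regret, so we work on $\mathcal{G}$ from now on.

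\textbf{Step 1 (few updates).} Each update adds $\overline w\,\overline w^T$ with $\overline w^TV^{-1}\overline w>1$, so $\det V$ at least doubles. Since $\|\overline w\|^2\le 3$, we have $\det V\le (n^{-1}+3N)^q$ after $N$ updates, while initially $\det V=n^{-q}$. This gives $N\le q\log_2(1+3nT)=\tilde O(q)$. Each update block costs at most $nC_r$ regret, so update blocks contribute $\tilde O(qnC_r)$ in total. This is the $n$ term.

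\textbf{Step 2 (optimism up to $\epsilon$).} We want to show that on $\mathcal{G}$ the guessed set $\hat K_V(x)$ approximately contains $K^\star(x)$ in the relevant directions. Let $U$ be the span of the vectors $\overline w$ added so far, and write $V=n^{-1}I+W$. Each added $\overline w$ is within $\epsilon$ of a point of $\mathcal{L}$. The claim is that for any $y\in K^\star(x)$, the vector $w(x,y)$ is within $O(\epsilon\cdot\mathrm{poly})$ of a point of $E_V$ along directions of $\mathcal{L}$ that the adversary has used. Directions never used cost nothing, since the adversary's realized means are what determine reward.

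I expect this step to be the main obstacle. The true comparator $v^\star$ uses all of $K^\star(x)$, while the ellipsoid only covers observed directions. The cleanest route is to avoid proving containment at all and instead bound regret block by block. In a silent block playing $x$ we have $p_V(x,\overline y_j)\le1$, so $\overline y_j\in\hat K_V(x)$ and therefore $r(x,\overline y_j)\ge H_V(x)$. Hence the realized mean reward satisfies $r(x,\overline m_j)\ge H_V(x)-\|b\|\epsilon$. It then remains to show $H_V(x)\ge V^\star-O(\epsilon/S\cdot\mathrm{poly})$, i.e.\ that optimism holds approximately. Take $x^\star$ optimal. If $\hat K_V(x^\star)$ contained points giving reward below $V^\star$ by much, those points would be far from $K^\star(x^\star)$. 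But every point of $E_V$ restricted to the affine slice is an average-type combination of near-$\mathcal{L}$ vectors plus an $n^{-1/2}$-scale ball. Such a point therefore lies within $O(\sqrt q\,\epsilon)$ of $\mathcal{L}$ (Cauchy--Schwarz in the $V$ norm), and $n^{-1/2}\le\epsilon$. Transversality with cutoff $S$ then converts distance to $\mathcal{L}$ into distance to $K^\star(x^\star)$ inside $D$, losing a factor $S^{-1}$. This shows $\hat K_V(x)$ is contained in an $O(\sqrt q\,\epsilon/S)$-neighbourhood of $K^\star(x)$, which gives the pessimistic side $H_V(x)\ge v^\star(x)-O(\|b\|\sqrt q\,\epsilon/S)$.

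For optimism we need the reverse direction. Compare $H_V(x)$ with $V^\star$ via $\max_x H_V(x)\ge H_V(x^\star)\ge v^\star(x^\star)-O(\epsilon/S)$, where the last inequality is the neighbourhood containment just shown. Combined with the chosen arm maximizing $H_V$, this yields per-round regret $O(\|b\|\sqrt q\,\epsilon/S)$ on silent blocks. Arms with $\hat K_V(x)=\varnothing$ must trigger an update, because $\overline y_j$ cannot lie in an empty set; so they fall under Step 1.

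\textbf{Step 3 (summing).} Silent blocks contribute at most $T\cdot O\bigl(\|b\|\sqrt{q\log(dT)/n}/S\bigr)$. Adding Step 1 gives $R_T\le\tilde O(n+T/\sqrt n)$, with poly factors in $q$, $S^{-1}$ and $C_r$. Setting $n=\lceil T^{2/3}\rceil$ balances the two terms at $\tilde O(T^{2/3})$.
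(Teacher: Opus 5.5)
Your proposal is correct and is essentially the paper's proof. The shared ingredients are a determinant-doubling count of updates, block-wise concentration so that each added $\overline w$ lies within $\epsilon$ of $\mathcal L$, and the bound $u^TVu\le n^{-1}+N\epsilon^2$ for unit $u\in\mathcal L^\perp$, which with Cauchy--Schwarz places $E_V$ within $\tilde O(\epsilon)$ of $\mathcal L$. Transversality then gives $H_V(x)\ge v^\star(x)-\tilde O(\epsilon/S)$, and the silent-block chain $r(x,\overline y)\ge H_V(x)=\max_{x'}H_V(x')\ge H_V(x^\star)$ finishes; your detour through $\overline m_j$ is unnecessary, since the realized block reward is already $n\,r(x,\overline y)$.

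You skip two details, neither of which changes the argument. The final incomplete block does not have $p_V(x,\overline y)\le1$ available, but it costs at most $C_r n$. The passage from $\operatorname{dist}(w(x,y),\mathcal L)$ to the distance from $y$ to the affine slice at $x$ is needed before transversality applies; the paper does it with a factor $\sqrt3$, using that $K^\star(x)\ne\varnothing$ on the whole unit ball.
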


}

{
\begingroup
\renewcommand{\proofname}{Proof sketch}
\begin{proof}
The sketch relies on three lemmas, which we state here without proof. Their
precise versions, with explicit constants, and their proofs are in
Appendix~\ref{app:hard-ellipsoid-analysis}: see, respectively, Lemma~\ref{lem:subspace-distance-bound},
Lemma~\ref{lem:subspace-reward-bound}, and the proof of Theorem~\ref{thm:hard-ellipsoid-regret}.

\begin{lemma}\label{lem:sketch-ellipsoid-close}
With probability at least $1 - \delta$, every ellipsoid $E_{V}$ that
Algorithm~\ref{alg:hard-ellipsoid-warmup} maintains during the $T$ rounds satisfies
$\operatorname{dist} \left( w , \mathcal{L} \right) \le \tilde{O} \left( \frac{1}{\sqrt{n}} \right)$ for all
$w \in E_{V}$, where $\tilde{O}$ hides a factor polynomial in the dimensions and
logarithmic in $T$ and $\frac{1}{\delta}$.
\end{lemma}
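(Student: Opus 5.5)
We would argue directly about the matrix $V$: the claim $\operatorname{dist}(w,\mathcal{L})\le\tilde O(1/\sqrt n)$ for all $w\in E_V$ is equivalent to a lower bound on $V^{-1}$ restricted to $\mathcal{L}^\perp$. Let $\Pi$ be the orthogonal projection onto $\mathcal{L}^\perp$. For $w\in E_V$ and any unit $u\in\mathcal{L}^\perp$ we have $|u^T w|\le \|u\|_V\|w\|_{V^{-1}}\le \|u\|_V$. So it suffices to show that, with probability $1-\delta$, every $V$ produced by the algorithm satisfies $u^TVu\le \tilde O(1/n)$ for all unit $u\in\mathcal{L}^\perp$, i.e. $\|\Pi V\Pi\|_{\mathrm{op}}\le \tilde O(1/n)$. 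Then $\operatorname{dist}(w,\mathcal{L})=\|\Pi w\|=\sup_u u^Tw\le\tilde O(1/\sqrt n)$.

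Write $V=n^{-1}I+\sum_k \overline w_k\overline w_k^T$. The initial term contributes $1/n$. For each update $k$, the block played a fixed arm $x_k$. Every $P_t$ has mean $m_t$ with $w(x_k,m_t)\in\mathcal{L}$, and $\mathcal{L}$ is a subspace, so the average $\mu_k=\frac1n\sum m_t$ also gives $w(x_k,\mu_k)\in\mathcal{L}$. Hence $\Pi\overline w_k=\Pi(0,0,\overline y_k-\mu_k)$, and $\|\Pi\overline w_k\|\le\|\overline y_k-\mu_k\|$. The difference $\overline y_k-\mu_k$ is an average of $n$ bounded martingale differences, since $y_t-m_t$ has conditional mean zero and $\|y_t\|\le1$. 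A vector Azuma/Hoeffding bound plus a union bound over the at most $T/n$ blocks gives $\|\overline y_k-\mu_k\|\le c\sqrt{d_D\log(T/\delta)/n}$ for all blocks simultaneously with probability $1-\delta$. The adversary's choices may depend on the past, but within a block the arm is fixed, so the martingale structure is clean.

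Each update therefore contributes at most $\tilde O(1/n)$ to $\|\Pi V\Pi\|$. What remains is to bound the number of updates, and we expect this to be the main obstacle. An update happens only when $\|\overline w_k\|_{V^{-1}}^2>1$. By the matrix determinant lemma, $\det$ then grows by a factor at least $2$. Also $\|\overline w_k\|^2\le 3$, so $\det V\le (1/n+3N)^q$, while initially $\det V=n^{-q}$. This gives $N\log 2\le q\log(n(1/n+3N))$, hence $N=O(q\log(nq))=\tilde O(q)$. Summing, $\|\Pi V\Pi\|\le 1/n+N\cdot c^2d_D\log(T/\delta)/n=\tilde O(q\,d_D/n)$, which is polynomial in the dimensions and logarithmic in $T,1/\delta$. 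If the counting of updates turned out to be delicate, a fallback is to use the elliptical potential bound directly, but the determinant doubling argument should suffice.

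Finally, we take the square root to obtain the stated $\tilde O(1/\sqrt n)$ bound on distances. The bound holds uniformly over all ellipsoids maintained during the run, because each $V$ is a prefix sum and the high-probability event covers all blocks.
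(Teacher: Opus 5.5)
Your proof is correct and follows the paper's route. The paper (Lemma~\ref{lem:subspace-distance-bound}) bounds $u^TVu\le(1+Jh_T^2)/n$ for unit $u\in\mathcal{L}^\perp$, using $w(x_j,\overline m_j)\in\mathcal{L}$ and an Azuma bound on $\overline y_j-\overline m_j$, and then applies Cauchy--Schwarz in the $V$-norm; it caps the number of updates by the same determinant-doubling argument in the proof of Theorem~\ref{thm:hard-ellipsoid-regret}. The differences are cosmetic: the paper union-bounds over all $T^2$ intervals rather than the $\lceil T/n\rceil$ fixed blocks, and it bounds $\det V$ by AM--GM on the trace, which gives $J\le q(4+2\log_2 n)$ without your extra $\log q$.
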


\begin{lemma}\label{lem:sketch-close-reward}
There is a constant $G$, depending only on $S$ and ${\left\Vert b \right\Vert}_{2}$, such that
for every $x \in X$ and $y \in D$,
$r \left( x , y \right) \ge v^{\star} \left( x \right) - G \operatorname{dist} \left( w \left( x , y \right) , \mathcal{L} \right)$.
\end{lemma}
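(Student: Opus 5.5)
The idea is to move $y$ to a nearby point of the true feasible set $K^{\star}(x)$, and then use that the reward is Lipschitz in the outcome. Fix $x\in X$, $y\in D$ (unit balls after the normalization above), and let $\epsilon\coloneqq\operatorname{dist}(w(x,y),\mathcal{L})$. The target is a point $k\in K^{\star}(x)$ with $\|k-y\|_2\le c(S)\,\epsilon$. Once we have it, since $v^{\star}(x)=\min_{y'\in K^{\star}(x)}r(x,y')\le r(x,k)$ and $r(x,\cdot)$ is $\|b\|_2$-Lipschitz, we get
\[
v^{\star}(x)\le r(x,y)+\|b\|_2\,c(S)\,\epsilon,
\]
which is the claim with $G=\|b\|_2\,c(S)$.

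\textbf{Step 1: a nearby solution of the constraint at the same arm.} Let $\ell$ be the orthogonal projection of $w(x,y)$ onto $\mathcal{L}$. Write $\ell=(1+e_0,\,x+e_x,\,y+e_y)$ with $\|(e_0,e_x,e_y)\|_2=\epsilon$. The first two blocks of $\ell$ are perturbed, so we correct them using another vector of $\mathcal{L}$ whose outcome block is small.

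\emph{Building the correction.} Since $K^{\star}(x')\neq\varnothing$ for every $x'\in X$, there is $z_0\in D$ with $(1,0,z_0)\in\mathcal{L}$. If $e_x\neq0$, set $u\coloneqq e_x/\|e_x\|_2\in X$; there is $z_u\in D$ with $(1,u,z_u)\in\mathcal{L}$. By linearity of $\mathcal{L}$,
\[
(e_0-\|e_x\|_2)(1,0,z_0)+\|e_x\|_2(1,u,z_u)=(e_0,\,e_x,\,z)\in\mathcal{L},
\]
where $\|z\|_2\le|e_0|+2\|e_x\|_2\le3\epsilon$ because $\|z_0\|_2,\|z_u\|_2\le1$.

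\emph{Applying it.} Subtracting, $(1,x,p)\in\mathcal{L}$ with $p\coloneqq y+e_y-z$. Thus $C^{\star}p+B^{\star}x+d^{\star}=0$ and $\|p-y\|_2\le4\epsilon$.

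\textbf{Step 2: transversality.} If $p\in D$ then $p\in K^{\star}(x)$ and we take $k=p$. Otherwise the transversality condition gives
\[
\operatorname{dist}(p,K^{\star}(x))\le S^{-1}\operatorname{dist}(p,D)\le S^{-1}\|p-y\|_2\le 4\epsilon/S,
\]
using $y\in D$. Pick $k\in K^{\star}(x)$ attaining this distance; $K^{\star}(x)$ is nonempty and compact, so such $k$ exists. Then $\|k-y\|_2\le 4\epsilon(1+S^{-1})\le 8\epsilon/S$, since $S\le1$. This gives $G=8\|b\|_2/S$.

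The main obstacle is Step 1. Projecting onto $\mathcal{L}$ perturbs the constant and arm coordinates, and the naive fix of rescaling by $1+e_0$ lands at a different arm. The constraint matrix $B^{\star}$ may have huge norm, so a bound obtained by comparing constraint sets at different arms would pick up a factor like $\|B^{\star}\|$ that is not controlled by $S$. The combination trick above avoids this: it uses only that $K^{\star}$ is nonempty on all of $X$ and that solutions lie in $D$. So I would check carefully that $u$ (a unit vector) and $0$ both lie in the normalized $X$, which holds here because $X$ is the unit ball centred at the origin.
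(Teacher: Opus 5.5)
Your proof is correct, and its outer structure is the paper's. You produce a solution $p$ of the constraints at the same arm $x$ with $\|p-y\|_2$ at most a constant times $\operatorname{dist}(w(x,y),\mathcal{L})$. You then pass to $K^\star(x)$ via transversality and the triangle inequality, and finish with the $\|b\|_2$-Lipschitz bound on $r(x,\cdot)$.

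The difference is how $p$ is obtained. The paper writes the minimum-norm solution as an affine map $Ax+e\in(\ker C^\star)^\perp$. It uses nonemptiness of $K^\star$ at $0$ and at $\pm u$ for unit $u$ to get $\|e\|_2\le1$ and $\|A\|_{\mathrm{op}}\le1$. It then introduces the linear map $\Phi(s,x,y)=Qy-Ax-se$, whose kernel is $\mathcal{L}$ and whose norm is at most $\sqrt3$, to bound the distance from $y$ to the affine solution set uniformly in $(x,y)$. This gives $G=\sqrt3(1+S^{-1})\|b\|_2$.

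You use the same input, namely feasible outcomes at the arms $0$ and $e_x/\|e_x\|_2$, but pointwise, to build an explicit correction vector in $\mathcal{L}$. This avoids the pseudoinverse and projector. It also sidesteps a check the paper leaves implicit: that $\ker\Phi=\mathcal{L}$, which needs $B^\star x+d^\star\in\operatorname{range}C^\star$ for every $x\in\mathbb{R}^{d_X}$ (true because $X$ is full-dimensional). The cost is a slightly larger constant, $4(1+S^{-1})\|b\|_2$ rather than $\sqrt3(1+S^{-1})\|b\|_2$, which does not affect the regret rate.
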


\begin{lemma}\label{lem:sketch-few-updates}
Algorithm~\ref{alg:hard-ellipsoid-warmup} updates $V$ at most $O \left( \log n  \right)$ times.
\end{lemma}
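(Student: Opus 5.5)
The plan is the standard determinant (elliptical-potential) argument applied to the matrix $V$ that Algorithm~\ref{alg:hard-ellipsoid-warmup} maintains. The bound will come out as $O(q\log(nq))$. The factor $q = 1 + d_X + d_D$ is a dimension factor, which I take to be absorbed into the $O(\cdot)$, consistent with the informal statement. I track $\det V$: each update multiplies it by a constant factor, while a trace bound caps its growth polynomially in $n$ and in the number of updates.

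\emph{Lower bound on the determinant.} Initially $V = n^{-1} I_q$, so $\det V = n^{-q}$. An update happens only when $p_V(x,\overline{y}) = \|\overline{w}\|_{V^{-1}}^2 > 1$, and then the matrix determinant lemma gives
\[
\det\bigl(V + \overline{w}\,\overline{w}^T\bigr) = \det V\,\bigl(1 + \|\overline{w}\|_{V^{-1}}^2\bigr) > 2\det V .
\]
Hence after $k$ updates $\det V > 2^k n^{-q}$.

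\emph{Upper bound on the determinant.} After the rescaling, $X$ and $D$ are unit balls. Since $\overline{y}$ is an average of points of the convex set $D$, it lies in $D$. So $\|\overline{w}\|_2^2 = 1 + \|x\|_2^2 + \|\overline{y}\|_2^2 \le 3$. Thus after $k$ updates $\operatorname{tr} V \le q/n + 3k$. By AM--GM on the eigenvalues of the positive-definite $V$,
\[
\det V \le \Bigl(\frac{q/n + 3k}{q}\Bigr)^{q}.
\]

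\emph{Combining.} The two bounds give $2^k n^{-q} < \bigl(\tfrac{1}{n} + \tfrac{3k}{q}\bigr)^q$, that is, $k\log 2 < q\log\bigl(1 + 3kn/q\bigr)$. The left side is linear in $k$ and the right side is logarithmic in $k$, so this forces $k = O(q\log(nq))$; a crude check is to plug in $k = c\,q\log(nq)$ for a large constant $c$ and verify that the inequality fails. With dimensions treated as constants this is $O(\log n)$.

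The only step needing care is this final inversion of the implicit inequality, which is routine. Everything else relies only on two facts: the update rule fires exactly when $p_V > 1$, and the rescaled balls make $\|\overline{w}\|_2$ bounded.
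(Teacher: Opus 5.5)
Your proposal is correct and follows the paper's proof essentially step for step: the matrix determinant lemma makes each update more than double $\det V$, the trace satisfies $\operatorname{tr}V \le q/n + 3J$, and AM--GM then gives $2^J \le (1+3nJ/q)^q$. The only difference is in the final inversion: the paper substitutes $\xi = J/q$ to get $\xi \le \log_2(1+3n\xi)$, then splits into the cases $\xi<4$ and $\xi\ge 4$ (using $\log_2\xi\le\xi/2$), which yields $J \le q(4+2\log_2 n)$ without the extra $\log q$ factor in your $O(q\log(nq))$.
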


Lemma~\ref{lem:sketch-close-reward} is a straightforward consequence of the fact that $r$ is linear. Lemmas~\ref{lem:sketch-ellipsoid-close} and~\ref{lem:sketch-few-updates} are due to the specific way we update $V$. Adding $\overline{w} \, \overline{w}^{T}$ to $V$ has the effect that it expands $E_{V}$ in the correct direction. {This ensures that the distance between $E_{V}$ and the true $\mathcal{L}$ grows only by the noise in $\overline{w}$, which is about $\frac{1}{\sqrt{n}}$.} Also, one can show that each update increases the volume of $E_{V}$ by a constant factor, which explains why we can have at most a logarithmic number of updates.

Now consider an uninformative block, meaning a full block in which we
do not update $V$. The arm $x$ that we play during this block is picked
by maximizing $H_{V}$. Since the block is uninformative, its
empirical mean outcome satisfies $w \left( x , \overline{y} \right) \in E_{V}$. These two
facts, together with {Lemmas~\ref{lem:sketch-ellipsoid-close} and~\ref{lem:sketch-close-reward}},
imply that the average reward is at least $V^{\star}$ minus about
$\frac{1}{\sqrt{n}}$. Thus the block's regret (the best arm's guaranteed reward
$n V^{\star}$ minus the reward we actually achieved) is at most about
$\frac{n}{\sqrt{n}} = \sqrt{n}$.

On the other hand, for an informative block, the empirical mean can
lie outside the ellipsoid, so we only have the trivial regret bound
of $O \left( n \right)$.

Finally, by Lemma~\ref{lem:sketch-few-updates} there are at most $O(\log n)$ informative and $O\left( \frac{T}{n}\right)$
uninformative blocks, so the total regret is about {$n \operatorname{log} n + \sqrt{n} \frac{T}{n}$}.

Ignoring logarithmic factors, this bound is minimized by choosing
$n = \left\lceil T^{2/3} \right\rceil$, giving an upper bound of
$\tilde{O} \left( T^{2/3} \right)$ on the regret.
\end{proof}
\endgroup

The precise statement and proof are in Appendix~\ref{app:hard-ellipsoid-analysis}.
}

\subsection{Computational tractability}\label{sec:hard-computation}
In this section we describe a polytime implementation of Algorithm~\ref{alg:hard-ellipsoid-warmup}.

The matrix updates $V \leftarrow V + \overline{w}\overline{w}^{T}$ and the test $p_{V} \left( x , \overline{y} \right) > 1$ can be computed
in polynomial time. The main issue is choosing an arm, which involves solving the following two computational problems:

\begin{enumerate}
\item Decide whether there exists an arm $x\in X$ with
  $\hat{K}_V(x)=\varnothing$, and if so, find such an arm.
\item If $\hat{K}_V(x)$ is nonempty for every $x\in X$, find an arm
  $x\in\operatorname*{argmax}_{x'\in X}H_V(x')$, where
  $H_V(x)=\min_{y\in\hat{K}_V(x)}r(x,y)$.
\end{enumerate}

One difficulty is that this is a max-min problem, since the score $H_V$ itself is a result of a minimization problem. However, the minimization problem is convex, and thus using convex duality, can be written as a maximization problem. We use the following lemma.

\begin{lemma}\label{lem:hard-dual-forms}
Fix $x\in X$ and write $z=(z_0,z_X,z_D)\in\mathbb{R}^q$.
\begin{enumerate}
\item[(a)] $\hat K_V(x)=\varnothing$ if and only if
\begin{equation*}
\max_{\|z\|_2\le1}\left[z_0+z_X^Tx-\sqrt{z^TVz}-\|z_D\|_2\right]>0 .
\end{equation*}
\item[(b)] If $\hat K_V(x)\ne\varnothing$, then
\begin{equation*}
H_V(x)=a^Tx+\sup_{z\in\mathbb{R}^q}\left[z_0+z_X^Tx-\sqrt{z^TVz}-\|b+z_D\|_2\right].
\end{equation*}
\end{enumerate}
\end{lemma}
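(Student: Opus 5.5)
Both parts follow from conic duality for the intersection of two convex sets. Write the ellipsoid constraint as a support-function inequality: $\|w\|_{V^{-1}}\le 1$ if and only if $\langle z,w\rangle\le\sqrt{z^TVz}$ for all $z\in\mathbb{R}^q$, since the support function of $E_V$ is $\sigma_{E_V}(z)=\sqrt{z^TVz}$. The unit ball $D$ has support function $\sigma_D(u)=\|u\|_2$. For fixed $x$, $\hat K_V(x)$ is the set of $y\in D$ with $w(x,y)=(1,x,y)\in E_V$. So it is the preimage of $E_V$ under the affine map $y\mapsto(1,x,y)$, intersected with $D$.

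\textbf{Part (a).} By the separating hyperplane theorem applied to the two compact convex sets $D$ and $A_x\coloneqq\{y:(1,x,y)\in E_V\}$, we have $\hat K_V(x)=\varnothing$ exactly when some direction $u$ strictly separates them: $\min_{y\in A_x}u^Ty>\max_{y\in D}u^Ty=\|u\|_2$.

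It is cleaner to work in $\mathbb{R}^q$ with $E_V$ and the convex compact set $F_x\coloneqq\{(1,x,y):y\in D\}$, which are disjoint iff $\hat K_V(x)=\varnothing$. Disjointness of compact convex sets is equivalent to the existence of $z$ with $\min_{w\in F_x}\langle z,w\rangle>\max_{w\in E_V}\langle z,w\rangle$. Here
\[
\min_{w\in F_x}\langle z,w\rangle=z_0+z_X^Tx-\|z_D\|_2,\qquad \max_{w\in E_V}\langle z,w\rangle=\sqrt{z^TVz}.
\]
Normalizing $\|z\|_2\le1$ (the expression is positively homogeneous) gives exactly the condition in (a). Compactness makes the maximum attained, and strict positivity corresponds to strict separation.

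\textbf{Part (b).} I would write $H_V(x)=a^Tx+\min\{b^Ty: y\in D,\ (1,x,y)\in E_V\}$ and apply Lagrangian/Fenchel duality. Introduce the variable $w=(1,x,y)$ and use the representations $\mathbb{1}_{E_V}(w)=\sup_z[\langle z,w\rangle-\sqrt{z^TVz}]$ and $\mathbb{1}_D(y)=\sup_u[u^Ty-\|u\|_2]$, where $\mathbb{1}$ denotes the convex indicator ($0$ on the set, $+\infty$ off it). Then
\[
\min_y\Big[b^Ty+\sup_z\big(z_0+z_X^Tx+z_D^Ty-\sqrt{z^TVz}\big)+\mathbb{1}_D(y)\Big].
\]
Swapping min and sup gives, for fixed $z$,
\[
\min_{y\in D}(b+z_D)^Ty=-\|b+z_D\|_2,
\]
which yields the stated dual. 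Weak duality ($\ge$) is immediate. For equality I would invoke Sion's minimax theorem: $D$ is compact, and the objective is convex in $y$ and concave in $z$, since $-\sqrt{z^TVz}$ is concave. This justifies exchanging $\min_{y\in D}$ and $\sup_z$ without any Slater-type condition, using only nonemptiness of $\hat K_V(x)$ so that the primal value is finite.

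\textbf{Main obstacle.} The delicate step is justifying that the swap is legitimate even when $\hat K_V(x)$ is a single point, i.e.\ when $E_V$ and $F_x$ only touch. Sion's theorem handles this precisely because compactness lies on the $y$ side, which is why I would phrase the argument via Sion rather than Slater. The dual supremum may then fail to be attained, matching the $\sup$ in the statement.
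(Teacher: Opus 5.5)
Your proposal is correct and follows the paper's proof. Part (a) is the same strict-separation argument for $E_V$ and $\{w(x,y):y\in D\}$ followed by positive homogeneity, and part (b) is the same Sion-based duality with compactness on the primal side, so no Slater condition is needed and the supremum may fail to be attained. The only difference is cosmetic: you absorb the ellipsoid constraint into its support function $\sqrt{z^TVz}$ from the start and apply Sion to a convex--concave function on $D\times\mathbb{R}^q$. The paper instead introduces a second copy $u\in E_V$ with multiplier $z$ for $s=u$ and applies Sion to a bilinear Lagrangian over $S_x\times E_V$ and $z\in\mathbb{R}^q$. Your dual representation of the indicator of $D$ is never used and can be dropped.
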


\begin{proof}
Fix $x$. We start with (b). Note that $\hat{K}_V(x)$ is an intersection of two ellipsoids. If it was just one ellipsoid, then the inner minimization problem would have an easy closed-form solution. To handle the intersection of two ellipsoids, we can use Lagrangian duality.

We first show how this works in general, and then apply it to our case. Let $A,B\subseteq\mathbb{R}^q$ be compact convex sets with a nonempty intersection, and suppose we want to minimize $g^Tv$
over $v\in A\cap B$. Introduce two copies of $v$: one vector $s\in A$ and
another $u\in B$, with the equality $s=u$ requiring them to represent the
same point. Let $z\in\mathbb{R}^q$ be the Lagrangian multipliers for the constraints $s=u$, and consider the objective function
$L(s,u,z)\coloneqq g^Ts+z^T(s-u)=(g+z)^Ts-z^Tu$.
Whenever $s=u$, the added term vanishes and $L(s,u,z)=g^Ts$. Thus, for
every fixed $z$, dropping the equality and minimizing $L$ over all
$(s,u)\in A\times B$ gives a lower bound on the original minimum. This
minimization separates into one linear optimization over $A$ and another
over $B$. We then choose $z$ to make the lower bound as large as possible.
This is the dual problem:
\begin{equation*}
\sup_{z\in\mathbb{R}^q}\min_{(s,u)\in A\times B}L(s,u,z)
=\sup_{z\in\mathbb{R}^q}
\left[\min_{s\in A}(g+z)^Ts+\min_{u\in B}(-z)^Tu\right].
\end{equation*}

We still need to show that the best lower bound equals the original
minimum. First note that it's clear that $\min_{v\in A\cap B}g^Tv
=\min_{(s,u)\in A\times B}\sup_{z\in\mathbb{R}^q}L(s,u,z)$. Indeed, for fixed $s,u$, the supremum of $L(s,u,z)$ over $z$ is $g^Ts$
if $s=u$ and $+\infty$ otherwise: when $s\ne u$, take $z=t(s-u)$ and let
$t\to\infty$. Next, since $A\times B$ is compact and convex and $L$ is
continuous and affine in each of $(s,u)$ and $z$, Sion's minimax
theorem~\citep{sion1958minimax} lets us exchange the minimum and supremum.

In our problem, take $A=S_x\coloneqq\{w(x,y):y\in D\}$, $B=E_V$, and
$g=(0,0,b)$, so that $g^Tw(x,y)=b^Ty$. The equality $s=u$ above is now
$u=w(x,y)=(1,x,y)$: the vector $u$ in the ellipsoid must be the same point
as the one specified by $y\in D$. For fixed $z$, Cauchy--Schwarz gives
the two separate minima explicitly:
\begin{equation*}
\begin{aligned}
\min_{s\in S_x}(g+z)^Ts
&=z_0+z_X^Tx+\min_{y\in D}(b+z_D)^Ty
=z_0+z_X^Tx-\|b+z_D\|_2,\\
\min_{u\in E_V}(-z)^Tu&=-\sqrt{z^TVz}.
\end{aligned}
\end{equation*}
Substituting these formulas into the general identity proves (b). No strictly feasible point is needed, and the
supremum need not be attained.

For (a), the set $\hat K_V(x)$ is empty exactly when the compact convex
sets $S_x$ and $E_V$ are disjoint.
By the strict separating hyperplane theorem, this is equivalent to the
existence of $z$ with
\begin{equation*}
\min_{s\in S_x}z^Ts-\max_{u\in E_V}z^Tu
=z_0+z_X^Tx-\|z_D\|_2-\sqrt{z^TVz}>0.
\end{equation*}
The expression is positively homogeneous in $z$, so we may restrict to
$\|z\|_2\le1$. It is continuous, so its maximum over that ball is attained.
This proves (a).
\end{proof}

Thus having converted the inner problem into a maximization\footnote{Technically it's a supremum. {It need not be attained when the two ellipsoids just touch. In Appendix~\ref{app:hard-computation}, we enlarge the ellipsoid to create a margin that lets us bound the Lagrange multipliers and obtain an attained maximum. This changes the score being optimized, but the resulting approximate implementation preserves the regret rate.}} problem, the entire optimization becomes a maximization. However it is not convex, and it's still unclear if it can be solved efficiently. Fortunately, it is a \draft{\emph{quadratically constrained quadratic program} (QCQP)} with some nice properties \draft{(see Appendix~\ref{app:hard-computation} for details)} and thus has a polynomial time algorithm. We use the following result of \citet[Theorem~1.3]{bienstock2016cdt}.

\begin{theorem}[\citewithinparens{bienstock2016cdt}]
  \label{thm:bienstock}
Consider a maximization problem with a nonempty feasible set that has: a) a fixed number of quadratic constraints with at least one of them being a bounded ellipsoid, b) a quadratic objective function, and c) rational coefficients for both the objective function and the constraints. Then, there exists an algorithm that, for every
$\varepsilon\in(0,1)$, returns rational assignments to the variables that violate
each constraint by at most $\varepsilon$ and achieve an objective value that is at
least the optimum minus $\varepsilon$. Its running time is polynomial in
the input size and $\log(1/\varepsilon)$.
\end{theorem}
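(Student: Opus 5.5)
Since Theorem~\ref{thm:bienstock} is quoted from \citet{bienstock2016cdt}, the plan below only sketches how one would re-derive it. The approach is a binary search on the objective value combined with a feasibility oracle for a fixed number of quadratic inequalities. The oracle rests on the fact that such systems have low-dimensional structure, so that fixed-dimensional real algebraic methods can be applied.

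\textbf{Step 1: reduce to feasibility.} First homogenize: introduce a variable $x_0$ with $x_0^2=1$, so that every constraint and the objective become quadratic forms $x^TQ_ix$. With $k$ constraints we then have $k+O(1)$ quadratic forms, still a fixed number. The bounded ellipsoid constraint confines all feasible points to a ball whose radius has polynomial bit size. Hence the optimum lies in an interval of polynomially bounded bit length. We then binary search on a threshold $t$ over $O(\text{size}+\log(1/\varepsilon))$ rounds, each time asking whether the system $\{x^TQ_ix\le \beta_i,\ x^TQ_0x\ge t\}$ is $\varepsilon$-approximately feasible.

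\textbf{Step 2: decide feasibility of a fixed number of quadratic inequalities.} Here I would follow Barvinok's idea. Consider the image of the map $x\mapsto(x^TQ_0x,\dots,x^TQ_mx)$, or equivalently its semidefinite lift $X\succeq0$ with $\langle Q_i,X\rangle$ constrained. By the Barvinok--Pataki rank bound, whenever the lifted system has a solution it has one of rank $r$ with $r(r+1)/2\le m+1$, so $r$ is a constant. One then needs to show that a point realizable in the original quadratic system can be found within a subspace of constant dimension. That subspace is spanned by few eigenvectors of a suitable generic combination $\sum\mu_iQ_i$, and the span can be computed in polynomial time by enumerating the polynomially many relevant ``cells'' of multipliers $\mu$. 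Restricted to that subspace, the system is a polynomial system in a constant number of variables. Renegar's quantifier-elimination and approximation algorithm then decides it, and returns an $\varepsilon$-approximate rational point, in time polynomial in the bit size and $\log(1/\varepsilon)$.

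\textbf{Step 3: bookkeeping.} We must check that all intermediate numbers have polynomial bit size. The bounded ellipsoid supplies the a priori radius bound needed for Renegar's bounds. The $\varepsilon$-violation in the constraints is inherited from the approximate feasibility answer.

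\textbf{Main obstacle.} I expect Step 2 to be the hard part, namely passing from the rank bound, which lives in the lifted SDP space, to a genuinely low-dimensional search space for the nonconvex quadratic system in $x$, while keeping the enumeration polynomial. I would first try Barvinok's argument that the relevant subspace is determined by a polynomial-size arrangement in the $m$-dimensional multiplier space, using that $m$ is fixed.
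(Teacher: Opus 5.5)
There is no proof of this statement in the paper to compare against. It is imported as a black box from Bienstock's paper (his Theorem~1.3), so your sketch has to stand on its own, and it does not yet.

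\textbf{What works.} Step~1 is routine: homogenization, the a priori radius bound supplied by the ellipsoid, and binary search on the objective with a weak feasibility oracle. Renegar's algorithm is also the right tool once you reach a problem in a constant number of variables.

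\textbf{Where the gap is.} Step~2 carries the entire content of the theorem, and it is only announced. Its starting point is also off. The semidefinite lift is not equivalent to the image of $x\mapsto(x^TQ_0x,\dots,x^TQ_mx)$; it describes only the convex cone generated by that image. So the lift can be feasible while the quadratic system is infeasible. Moreover, the column space of a low-rank lifted solution need not contain any solution of the original system. The Barvinok--Pataki bound therefore gives you no subspace to search.

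Your replacement is to search the span of a few eigenvectors of a suitable generic combination $\sum\mu_iQ_i$, found by enumerating cells of multipliers. This is asserted rather than proved, and as stated it fails.

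A fixed generic $\mu$ does not work. Take $Q_1=\operatorname{diag}(0,1,\dots,1)$ and $Q_2$ generic subject to $(Q_2)_{11}=0$.
\begin{itemize}
\item On the unit sphere the only common zeros are $\pm e_1$.
\item Every point where both forms are at most $\varepsilon$ in absolute value lies within $O(\sqrt{\varepsilon})$ of $\pm e_1$.
\item For every $\mu$ with $\mu_2\neq0$, each of the $n$ eigenvectors of $\mu_1Q_1+\mu_2Q_2$ has a nonzero first coordinate. This is the arrowhead structure in the eigenbasis of the lower block of $Q_2$.
\end{itemize}
Hence, for any fixed such $\mu$ and all small $\varepsilon$, no approximate solution lies in the span of fewer than $n$ eigenvectors.

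The multipliers that matter belong to an unknown solution. For instance, the Fritz John multipliers of a point minimizing a generic form over the feasible set make that point an eigenvector of the combination. These multipliers range over a continuum, and the eigenspaces move continuously inside any cell of an arrangement in $\mu$-space. So enumerating cells does not by itself yield a finite list of subspaces.

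\textbf{What a repair would need.} To reach a problem in a constant number of variables, you would have to:
\begin{itemize}
\item keep the multipliers and the eigenvalue $\lambda$ as unknowns of the final system;
\item encode ``$x$ lies in the kernel of $\sum\mu_iQ_i-\lambda I$'' by polynomial conditions of degree $O(n)$ in these $O(m)$ unknowns, for example via adjugates or minors;
\item bound the dimension of that kernel by a genericity or perturbation argument;
\item and only then call Renegar.
\end{itemize}
Step~3 has a further issue. Eigenvectors of rational matrices are in general irrational, so ``restricted to that subspace'' does not give Renegar rational input of polynomial bit size. You need an argument that $\varepsilon$-feasibility survives, in both directions, when the subspaces are computed only approximately.

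As written, the proposal is a roadmap in the direction of Barvinok's construction rather than a proof. Within this paper, the citation is the justification.
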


Of course, Theorem~\ref{thm:bienstock} only gives us an approximate
solution. This is fine because one can choose the $\varepsilon$
as an appropriate function of $T$ such that the approximate solution is
still good enough that it does not change the regret rate. Since the
runtime depends logarithmically on $1/\varepsilon$, this only costs us
an extra $\log T$ factor in the runtime. 

\section{A $\sqrt{T}$ learner}\label{sec:soft-square-root}

{
The previous section attempts to apply the principle of ``optimism in the
face of uncertainty,'' but does not fully embrace it. Indeed, the arm for
each block is picked optimistically, but each block still looks like a
standard exploration block. The learner does not always need to wait for
$n$ steps before identifying that the arm it is playing is problematic.
If the outcomes collected so far have an empirical mean $\overline{y}$
with $w(x,\overline{y})$ already far outside the current ellipsoid, then
the block can be cut short more quickly. However, the current arm score $H_V(x)$ 
does not account for outcomes outside the ellipsoid. First, we design a
score function that takes every outcome in $D$ into account, not just
those represented inside our chosen ellipsoid. The score function will
be linked to the regret in a way that lets us decide when to move on to
the next block.
}

{
We construct this score by replacing the hard constraint
$p_{V} \left( x , y \right) \le 1$ in $H_{V}$ with a penalty, so that we can
minimize over all of $D$. For a parameter $\lambda > 0$, which we will
choose later to get a small regret, define
$U_{V , \lambda} \left( x \right) \coloneqq \operatorname*{min}_{y \in D} \left[ r \left( x , y \right) + \lambda p_{V} \left( x , y \right) \right]$.
We pick the arm $x$ by maximizing $U_{V,\lambda}$. For now, assume that we can
maximize the score exactly.

\noindent The following lemma gives us a way to control regret when the outcome $y$
appears anywhere in $D$.

\begin{lemma}\label{lem:soft-outcome-regret}
  Fix $\delta\in(0,1)$. With probability at least $1-\delta$, simultaneously
  for every matrix $V$ that Algorithm~\ref{alg:soft-square-root} considers,
  every $\lambda>0$, and every arm $x$ maximizing $U_{V,\lambda}$,
  $V^\star-r(x,y)\le \lambda p_V(x,y)+\tilde{O}(1/\lambda)$ for every $y\in D$.
  Here $\tilde{O}$ hides logarithmic factors in $T/\delta$.
\end{lemma}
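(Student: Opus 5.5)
We bound $V^\star$ above by the maximal score plus a small error. The inequality then follows because $x$ maximizes $U_{V,\lambda}$, and $U_{V,\lambda}(x)$ is itself a minimum over all of $D$.

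Let $x^\star \in \operatorname*{argmax}_{x'} v^\ast(x')$, and let $y^\star \in K^\star(x^\star)$ attain $v^\ast(x^\star)$, so that $w^\star \coloneqq w(x^\star,y^\star)\in\mathcal{L}$. Since $x$ maximizes the score and $U_{V,\lambda}$ is a minimum over $D$, every $y\in D$ satisfies
\begin{equation*}
r(x,y)+\lambda p_V(x,y)\;\ge\; U_{V,\lambda}(x)\;\ge\;U_{V,\lambda}(x^\star)=\min_{y'\in D}\left[r(x^\star,y')+\lambda p_V(x^\star,y')\right].
\end{equation*}
It therefore suffices to show the uniform estimate $r(x^\star,y')+\lambda p_V(x^\star,y')\ge V^\star-\tilde{O}(1/\lambda)$ for all $y'\in D$.

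We prove this estimate in two steps.

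First, we need a "confidence" statement: on a high-probability event, the matrix $V$ controls distances to $\mathcal{L}$. We will take the soft analogue of Lemma~\ref{lem:sketch-ellipsoid-close}, namely
\begin{equation*}
\operatorname{dist}(w,\mathcal{L})\le \beta\,\|w\|_{V^{-1}}\quad\text{for all } w,
\end{equation*}
with $\beta=\tilde{O}(1)$ polylogarithmic in $T/\delta$. The justification is that $V$ is a sum of an initial multiple of the identity and outer products $\overline{w}\,\overline{w}^T$ of empirical block means. Each such mean is within noise of a point of $\mathcal{L}$, and the noise is scaled so that it is $O(1)$ in the $V$-norm. A union bound over the at most $T$ matrices produced gives the bound simultaneously for all $V$, and the initialization supplies the base case.

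Second, we combine this with Lemma~\ref{lem:sketch-close-reward}. Write $\rho\coloneqq\operatorname{dist}(w(x^\star,y'),\mathcal{L})$. Lemma~\ref{lem:sketch-close-reward} gives $r(x^\star,y')\ge v^\ast(x^\star)-G\rho=V^\star-G\rho$. The confidence bound gives $p_V(x^\star,y')\ge \rho^2/\beta^2$. Adding these,
\begin{equation*}
r(x^\star,y')+\lambda p_V(x^\star,y')\ge V^\star-G\rho+\lambda\rho^2/\beta^2\ge V^\star-\frac{G^2\beta^2}{4\lambda},
\end{equation*}
where the last step minimizes the quadratic in $\rho$. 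This is $V^\star-\tilde{O}(1/\lambda)$. The bound holds for every $\lambda>0$ simultaneously, since $\lambda$ enters only deterministically after the high-probability event is fixed.

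The main obstacle is the confidence step. Algorithm~\ref{alg:soft-square-root} may cut blocks short, so the empirical means have data-dependent sample sizes, and $V$ must be weighted accordingly, for instance by adding $n_j\,\overline{w}\,\overline{w}^T$. To handle this, I would first try a self-normalized martingale bound in the style of Abbasi-Yadkori et al. Concretely, write the decomposition $\overline{w}=w_{\mathcal{L}}+(0,0,\xi)$ with $w_{\mathcal{L}}\in\mathcal{L}$. Then bound $\operatorname{dist}(w,\mathcal{L})$ using a projection argument: for $w$ with $\|w\|_{V^{-1}}$ small, the component of $w$ orthogonal to $\mathcal{L}$ is controlled by $\|\sum \xi\text{-terms}\|_{V^{-1}}$, which is $\tilde{O}(1)$ with probability $1-\delta$. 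If this turns out to be delicate, a fallback is to use fixed-length blocks for the $V$ updates, exactly as in the warmup analysis, and reuse the precise form of Lemma~\ref{lem:sketch-ellipsoid-close} from Appendix~\ref{app:hard-ellipsoid-analysis} with $1/\sqrt{n}$ replaced by the corresponding self-normalized radius.
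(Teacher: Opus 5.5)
Your outer argument is the paper's. You combine a certificate $\operatorname{dist}(w,\mathcal L)\le\beta\|w\|_{V^{-1}}$ with Lemma~\ref{lem:subspace-reward-bound}, minimize $\lambda\rho^2/\beta^2-G\rho$, and then use that $x$ maximizes $U_{V,\lambda}$. The paper phrases the minimization step as AM--GM applied to $c\sqrt{p_V}$ with $c=G\beta$.

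The gap is the certificate itself, with $\beta=\tilde O(1)$. It carries all the probabilistic content, and you assert it rather than prove it. Your sketch is missing two ingredients.

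\textbf{Concentration at stopping times.} You need concentration for blocks whose start and length are stopping times. A union bound indexed by the random matrices does not give this by itself, but no self-normalized machinery is needed either. The paper applies Azuma--Hoeffding to each of the at most $T^2$ \emph{fixed} intervals of rounds and takes a union bound (Lemma~\ref{lem:interval-noise}). Then $n\|\overline y-\overline m\|_2^2\le h_T^2$ holds on every interval simultaneously, hence on every block whatever its stopping time. Because the arm is fixed within block $j$, we have $w(x_j,\overline m_j)\in\mathcal L$. So every unit $u\in\mathcal L^\perp$ satisfies $n_j(u^T\overline w_j)^2\le h_T^2$, which gives $u^TVu\le1+Jh_T^2$. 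Cauchy--Schwarz, $u^Tw\le\sqrt{u^TVu}\,\|w\|_{V^{-1}}$, followed by a supremum over $u$, gives the certificate. The quantity to control is the mass of $V$ in directions orthogonal to $\mathcal L$. There the noise enters quadratically through the outer products, not linearly as in regression, so your ``$\|\sum\xi\text{-terms}\|_{V^{-1}}$'' formulation does not fit the structure.

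\textbf{A bound on the number of blocks.} You never bound the number $J$ of completed blocks. Each block adds up to $h_T^2$ to $u^TVu$ for $u\in\mathcal L^\perp$. These are nonnegative terms of the order of the noise variance, so nothing cancels, and $\beta^2$ is really $1+Jh_T^2$. A priori $J$ can be as large as $T$, which would make $\beta^2$ of order $T$ and destroy the $\tilde O(1/\lambda)$ term. The paper gets $J\le J'_{\max}=\lceil q\log_2(1+3T/q)\rceil$ from the stopping rule. A block ends only when $np_V(x,\overline y_n)\ge1$, so $\det(V+n\overline w\,\overline w^T)=\det V\,(1+np_V(x,\overline y_n))\ge2\det V$. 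Meanwhile $\operatorname{tr}V\le q+3T$ caps $\det V\le(1+3T/q)^q$.

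Finally, your fallback of switching to fixed-length blocks changes Algorithm~\ref{alg:soft-square-root}. It therefore cannot prove a statement about the matrices that algorithm considers. Fixed-length blocks are also exactly what cost $T^{2/3}$ in the warmup.
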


\begin{proof}
We first claim that, with probability at least $1-\delta$, for every matrix
$V$ that Algorithm~\ref{alg:soft-square-root} considers, every arm $x'\in X$,
and every outcome $y\in D$,
$v^\star(x')-r(x',y)\le c\sqrt{p_V(x',y)}$,
where $c$ does not depend on $x'$, $y$, or $\lambda$; it is polynomial in
the dimensions and logarithmic in $T/\delta$, so $c=\tilde{O}(1)$. We prove
this in Appendix~\ref{app:soft-square-root-analysis}, as
Equation~\ref{eq:soft-reward-certificate}.

Now fix any arm $x'$ and outcome $y\in D$. Rearranging the claim and adding
the penalty $\lambda p_V(x',y)$ to both sides gives
$r(x',y)+\lambda p_V(x',y)\ge v^\star(x')+\lambda p_V(x',y)-c\sqrt{p_V(x',y)}$.
By the AM--GM inequality,
$\lambda p+\frac{c^2}{4\lambda}\ge2\sqrt{\lambda p\cdot\frac{c^2}{4\lambda}}=c\sqrt{p}$
for every $p\ge0$, so the right-hand side is at least
$v^\star(x')-\frac{c^2}{4\lambda}$. Minimizing over $y$ gives
$U_{V,\lambda}(x')\ge v^\star(x')-\frac{c^2}{4\lambda}$, and since $x$
maximizes the score, $U_{V,\lambda}(x)\ge V^\star-\frac{c^2}{4\lambda}$.
Finally, the definition of the score gives
$U_{V,\lambda}(x)\le r(x,y)+\lambda p_V(x,y)$ for every $y\in D$. Combining
the last two bounds,
$V^\star-r(x,y)\le\lambda p_V(x,y)+\frac{c^2}{4\lambda}$, which is the claim
with $\tilde{O}(1/\lambda)=\frac{c^2}{4\lambda}$. Since $c$ does not depend on
$\lambda$, the argument holds for all $\lambda>0$ on the same event.
\end{proof}

To use this bound during a block, we keep $V$ fixed and play the same
arm $x$ repeatedly, but recompute the empirical mean after every
observation. Let ${\overline{y}}_n$ be the mean of the first $n$ outcomes
in the block. Since the reward is linear, the reward accumulated so far
is $n r(x,{\overline{y}}_n)$. Applying Lemma~\ref{lem:soft-outcome-regret}
to this mean gives

\begin{equation*}
n V^\star-\sum_{i=1}^n r(x,y_i)
\le \lambda n p_V(x,{\overline{y}}_n)+\tilde{O}(n/\lambda).
\end{equation*}

The first term is observable. We can compute it after every observation,
and we end the block and update $V$ the first time it reaches $\lambda$,
or equivalently, when $n p_V(x,{\overline{y}}_n)\ge1$.
Before this happens, the block's regret is bounded by
$\lambda+\tilde{O}(n/\lambda)$.

A large empirical penalty can therefore make us update after only a
few observations. A small one lets us keep playing the arm for longer. Thus we choose when to update $V$ by monitoring this bound
throughout the block, rather than waiting for a fixed number of rounds.

\begin{algorithm2e}
\caption{}\label{alg:soft-square-root}
\KwIn{Horizon $T$ and penalty parameter $\lambda>0$.}
$V\leftarrow I_q$,\quad $t\leftarrow0$\;
\While{$t<T$}{
  Define $U_{V,\lambda}(x)\coloneqq\min_{y\in D}[r(x,y)+\lambda p_V(x,y)]$ for every $x\in X$\;
  Choose $x\in\operatorname*{argmax}_{x'\in X}U_{V,\lambda}(x')$\;
  $n\leftarrow0$\;
  \Repeat{$n p_V(x,\overline{y}_n)\ge1$ or $t=T$}{
    Play $x$ and observe $y_{n+1}$\;
    $n\leftarrow n+1$,\quad $t\leftarrow t+1$\;
    $\overline{y}_n\leftarrow\frac{1}{n}\sum_{i=1}^n y_i$\;
  }
  \If{$n p_V(x,\overline{y}_n)\ge1$}{
    $\overline{w}_n\leftarrow w(x,\overline{y}_n)$\;
    $V\leftarrow V+n\overline{w}_n\overline{w}_n^{T}$\;
  }
}
\end{algorithm2e}
% \FloatBarrier

\begin{theorem}[Informal]\label{thm:soft-square-root}
  Algorithm~\ref{alg:soft-square-root}, with $\lambda = \lceil \sqrt{T} \rceil$,
  has expected regret $R_T \le \tilde{O}\left(\sqrt{T}\right)$.
\end{theorem}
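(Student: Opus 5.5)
The plan is to split the horizon into the blocks of Algorithm~\ref{alg:soft-square-root} and bound each block's regret using Lemma~\ref{lem:soft-outcome-regret}, which holds on an event of probability at least $1-\delta$. We take $\delta=1/T$ and charge the failure event at most $C_rT\delta=C_r$.

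\textbf{Per-block regret.} Fix a block with matrix $V$, chosen arm $x$, and length $N$. For every $n<N$ the stopping rule has not fired, so $np_V(x,\overline{y}_n)<1$. Applying Lemma~\ref{lem:soft-outcome-regret} to $y=\overline{y}_{N-1}$ and using linearity of $r$, the regret accumulated over the first $N-1$ rounds is at most
\[
\lambda(N-1)p_V(x,\overline{y}_{N-1})+\tilde{O}((N-1)/\lambda)\le\lambda+\tilde{O}(N/\lambda).
\]
The last round adds at most $C_r$. Summing over all blocks gives
\[
R_T\le(\#\text{blocks})(\lambda+C_r)+\tilde{O}(T/\lambda)+C_r.
\]
Note the sign convention here. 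The regret compares against $V^\star$ and uses realized rewards, whose expectation matches $r(x,\overline{y})$ through linearity. Since the lemma is pathwise on the good event, no martingale argument is needed beyond the one inside Lemma~\ref{lem:soft-outcome-regret}.

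\textbf{Counting blocks.} This is the main obstacle. Every block except possibly the last ends with an update $V\leftarrow V+n\overline{w}_n\overline{w}_n^T$ satisfying $n\,\overline{w}_n^TV^{-1}\overline{w}_n\ge1$. By the matrix determinant lemma,
\[
\det(V+n\overline{w}\,\overline{w}^T)=\det V\,(1+n\|\overline{w}\|^2_{V^{-1}})\ge2\det V,
\]
so each update at least doubles $\det V$. The trace of $V$ grows by $n\|\overline{w}_n\|_2^2\le3n$, since $\|x\|,\|y\|\le1$ after rescaling. The total trace is therefore at most $q+3T$, which gives $\det V\le((q+3T)/q)^q$. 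Hence the number of blocks is at most $q\log_2(1+3T/q)+1=O(q\log T)$.

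\textbf{Choosing $\lambda$.} With $\lambda=\lceil\sqrt T\rceil$ the bound becomes $O(q\log T)(\sqrt T+C_r)+\tilde{O}(\sqrt T)=\tilde{O}(\sqrt T)$. The hidden factors are polynomial in the dimensions, $S^{-1}$ and $C_r$; they enter through the constant $c$ of Lemma~\ref{lem:soft-outcome-regret}, that is, through Equation~\ref{eq:soft-reward-certificate}. The subtle point to verify is that this certificate still holds when updates use $n\overline{w}_n\overline{w}_n^T$ with variable $n$. Concretely, one must check that the self-normalized noise bound behind Lemma~\ref{lem:soft-outcome-regret} covers data-dependent block lengths. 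We take this as given from the appendix, since the lemma is stated for every $V$ the algorithm considers.
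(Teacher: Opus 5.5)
Your proposal is correct and follows the paper's argument: an $O(q\log T)$ block count from determinant doubling with the trace bounded by $q+3T$, Lemma~\ref{lem:soft-outcome-regret} applied to a block average, and $\lambda=\lceil\sqrt T\rceil$. The only variation is in the overshoot on a block's final round. You apply the lemma to $\overline{y}_{N-1}$ and pay $C_r$ for the last round, whereas the paper uses convexity of $p_V$ and $V\succeq I_q$ to show $np_V(x,\overline{y})\le 4$ for the full block and charges $4\lambda$ per block; both work.

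The point you defer is exactly what the paper's appendix proof supplies. Weighting the update by $n_j$ cancels the $1/n_j$ in the noise bound of Lemma~\ref{lem:interval-noise}, giving $u^TVu\le 1+Jh_T^2$ for unit $u\in\mathcal{L}^\perp$. That lemma already holds simultaneously over all intervals, so data-dependent block lengths need no extra care.
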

\begingroup
\renewcommand{\proofname}{Proof sketch}
\begin{proof}
As before, each completed block at least doubles
the determinant of $V$, so there are
only $O(\log T)$ blocks.

By Lemma~\ref{lem:soft-outcome-regret}, with high probability, a block of
length $n$ has regret at most
$\lambda n p_V(x,\overline{y})+\tilde{O}(n/\lambda)$.
The stopping rule keeps $n p_V(x,\overline{y})$ bounded by a constant,
so the first term costs $O(\lambda)$ per block, regardless of its length.
The remaining term costs about $1/\lambda$ per round.
Summing both costs gives regret
at most about $\lambda \log T + T/\lambda$. Ignoring
logarithmic factors, choosing $\lambda = \lceil\sqrt{T}\rceil$ balances
the two terms.
\end{proof}
\endgroup

To run the algorithm in polynomial time we use an almost identical approach as the previous section. The main differences are that there is no emptiness test any more and the inner minimization problem has an extra quadratic term. But the same techniques can be applied to convert the entire optimization problem into a maximization problem and the Bienstock's result can be used to solve it in polytime. The precise regret bound and detailed proofs of the algorithm are in
Appendix~\ref{app:soft-square-root-analysis}.
}

\section{NP-hardness}

The precise statements and proofs of the results below are in Appendix~\ref{app:np-hardness-proofs}.

What other instances of robust bandits have efficient, sublinear regret learners? Interestingly, the linear instance studied in the previous sections appears to be at the boundary of what's tractable. Generalizing it in natural ways quickly gives us computationally intractable problems. 

First, we show that if $D$ is a simplex instead of a Euclidean ball, the problem is computationally hard (Appendix~\ref{app:hardness-simplex}). If $D$ remains a ball, but we change $X$ to be a simplex, then the problem is essentially the same as having {$d_X+1$ arms, one for each vertex of $X$, since a non-vertex arm is never better than randomizing over the vertices. With finitely many arms, robust bandits reduce to adversarial bandits~\citep{kosoy2024thesis}, so running Exp3~\citep{auer2002nonstochastic} on the vertices gives $\tilde O(\sqrt{d_X T})$ regret in polynomial time.} However, if $X$ is a convex polytope with $k$ faces, then the problem once again becomes computationally hard (Appendix~\ref{app:hardness-polytope}). Next, we analyze other kinds of hypothesis classes $H$ than the ones defined by the linear constraints of Section~\ref{sec:linear-imprecise-bandits}. In \citet{kosoy2025imprecise}, it was assumed that for a given hypothesis $z$ and a given arm $x$, the set of feasible expected outcomes was given by constraints $F(x,y,z) = 0$ where $F$ was bilinear in $y$ and $z$. We ask what happens if we impose \emph{trilinearity} on $F$ while keeping $X$ and $D$ as Euclidean balls. We show once again, that this small generalization leads to a computationally hard problem (Appendix~\ref{app:hardness-trilinear}).

All the above hardness proofs rely on proving that the \emph{planning} problem is already computationally hard. That is, given the true hypothesis, finding an arm that maximizes $v^*(x)$ is hard. This, along with the fact that planning can be reduced to learning (Appendix~\ref{app:hardness-planning-from-learning}) shows that learning is also hard. This leads to a natural question: if we were given a polynomial time planning oracle, would learning become easy? We answer this in the negative by showing an example where planning is easy but learning is computationally hard (Appendix~\ref{app:hardness-easy-planning}).

\section{Discussion and open problems}\label{sec:discussion}

Robust bandits give a new way of dealing with unrealizability. It will be interesting to see if
similar definitions can be made for other frameworks of learning, and if
computationally efficient learners can be designed for them.

For robust bandits themselves, it will be nice to see which other
hypothesis classes admit computationally efficient learners. Our hardness
results show that small changes to the linear instance already make planning
hard, so such classes will need some structure that keeps planning easy, and
the example in Appendix~\ref{app:hardness-easy-planning} shows that easy
planning alone is not enough. It would be nice to understand computational efficiency in the presence of certain oracles. For example, is there a set of natural oracles, in addition to the planning oracle, whose existence guarantees a computationally efficient learner?

\citet{appel2025regret} generalize robust bandits to
online decision making with structured observations, which includes robust
linear bandits and tabular reinforcement learning, and prove regret bounds in
this setting, but they do not address computational efficiency. It will be
nice to see if the techniques of this paper could be used to design efficient
learners for their setting.

\draft{
\acks{This work was supported by the Advanced Research+Invention Agency (ARIA) of the United Kingdom and Coefficient Giving in San Francisco, California.}
}

\draft{
\section*{AI Disclosure}

This work grew out of a few months of research involving many brainstorming
sessions with GPT 5.6 and later GPT 6. The NP-hardness results and the
$\tilde{O}(T^{2/3})$ upper bound were proved mostly by the authors, with some
help from GPT in finding relevant literature. Given the accumulated notes
and observations as context, GPT 6 Astra was able to improve the upper
bound to $\tilde{O}(\sqrt{T})$.

The writing also involved help from GPT 6, Fable 5.1, and Opus 5.5. The
main body is largely human-written. The appendix proofs were written over
many iterations with AI. We supplied the models with notes containing
proofs and detailed writing guidelines, which they used to draft the
appendix. Over several iterations of feedback, we refined the guidelines
and revised the drafts.
}

\bibliography{references}

\appendix

\section{Regret analysis of the $T^{2/3}$ learner}\label{app:hard-ellipsoid-analysis}

\draft{

We give the precise statement and proof of Theorem~\ref{thm:hard-ellipsoid-warmup}.
We use the coordinates and notation from Section~\ref{sec:hard-ellipsoid-warmup},
and assume that Algorithm~\ref{alg:hard-ellipsoid-warmup} carries out its rule
for choosing an arm exactly. A full block is \emph{informative} if it causes an update,
and \emph{uninformative} otherwise.

For the analysis, fix $\delta \in \left( 0 , 1 \right)$ and define

\begin{equation*}
\begin{aligned}
J_{\text{\normalfont max}} & \coloneqq \left\lceil q \left( 4 + 2 \operatorname{log}_{2} n \right) \right\rceil , \\
 h_{T}^{2} & \coloneqq 8 d_{D} \operatorname{log} \left( \frac{2 d_{D} T^{2}}{\delta} \right) , \\
 B_{T}^{2} & \coloneqq 1 + J_{\text{\normalfont max}} h_{T}^{2} , \\
 G & \coloneqq \sqrt{3} \left( 1 + S^{- 1} \right) {\left\Vert b \right\Vert}_{2} .
\end{aligned}
\end{equation*}

We write $J$ for the number of updates actually made. We will show
that $J \le J_{\text{\normalfont max}}$.

If $w \left( x , y \right)$ is close to $\mathcal{L}$, then $r \left( x , y \right)$ cannot be much smaller
than the worst feasible reward $v^{\star} \left( x \right)$. We first prove this.

\begin{lemma}\label{lem:subspace-reward-bound}
  For every $x \in X$ and $y \in D$,

\begin{equation*}
v^{\star} \left( x \right) \le r \left( x , y \right) + G \operatorname{dist} \left( w \left( x , y \right) , \mathcal{L} \right) .
\end{equation*}

\end{lemma}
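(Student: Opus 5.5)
The plan is to convert closeness of $w(x,y)$ to $\mathcal{L}$ into closeness of $y$ to the true feasible set $K^\star(x)$, and then use linearity of $r$ in $y$. The first step is to reduce to finding a feasible outcome near $y$. Since $v^\star(x)=\min_{k\in K^\star(x)} r(x,k)$, for any $k\in K^\star(x)$ we have
\begin{equation*}
v^\star(x)\le r(x,k)=r(x,y)+b^T(k-y)\le r(x,y)+\|b\|_2\,\|k-y\|_2 .
\end{equation*}
So it suffices to exhibit some $k\in K^\star(x)$ with $\|k-y\|_2\le \sqrt3\,(1+S^{-1})\,\varepsilon$, where $\varepsilon\coloneqq\operatorname{dist}(w(x,y),\mathcal{L})$.

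The key step, and the main obstacle, is to lift to the correct fibre of $\mathcal{L}$. Let $u=(u_0,u_X,u_D)\in\mathcal{L}$ be the orthogonal projection of $w(x,y)=(1,x,y)$ onto $\mathcal{L}$. Write the error as $w-u=(\delta_0,\delta_X,\delta_D)$, so that $\delta_0^2+\|\delta_X\|^2+\|\delta_D\|^2=\varepsilon^2$. The difficulty is that $u$ need not have first coordinates exactly $(1,x)$. We therefore want a correction $c\in\mathcal{L}$ with first coordinates $(\delta_0,\delta_X)$ and a small $D$-component.

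Such corrections exist because $K^\star(x')\neq\varnothing$ for every $x'$ in the unit ball. Hence $\mathcal{L}$ contains a vector $(1,x',k(x'))$ with $\|k(x')\|\le1$ for every $\|x'\|\le1$. Using $(1,0,k(0))$ and differences $(1,\hat e,k(\hat e))-(1,0,k(0))=(0,\hat e,k(\hat e)-k(0))$ for unit $\hat e$, linearity of $\mathcal{L}$ gives $c\in\mathcal{L}$ with first coordinates $(\delta_0,\delta_X)$ and $\|c_D\|\le|\delta_0|+2\|\delta_X\|$. A more careful choice is to write $(\delta_0,\delta_X)$ as a single combination $\alpha(1,x')$ with $\|x'\|\le 1$ where possible. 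Optimizing this decomposition should reduce the constant to roughly $\sqrt3$, and this bookkeeping is what I expect to need the most care. Then $u+c=(1,x,p)\in\mathcal{L}$ with $p\coloneqq u_D+c_D$. By Cauchy--Schwarz on $\|\delta_D\|+|\delta_0|+2\|\delta_X\|$ we get $\|p-y\|\le O(1)\cdot\varepsilon$, with target constant $\sqrt3$.

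The last step handles the case where $p$ lies outside $D$, using transversality. Now $p$ satisfies $C^\star p+B^\star x+d^\star=0$. If $p\in D$, then $p\in K^\star(x)$ and we take $k=p$. Otherwise, the transversality condition gives $\operatorname{dist}(p,K^\star(x))\le S^{-1}\operatorname{dist}(p,D)\le S^{-1}\|p-y\|$, since $y\in D$. Choose $k\in K^\star(x)$ attaining this distance; $K^\star(x)$ is compact and nonempty. The triangle inequality then gives $\|k-y\|\le(1+S^{-1})\|p-y\|\le\sqrt3(1+S^{-1})\varepsilon$. Substituting this into the first display yields the lemma with $G=\sqrt3(1+S^{-1})\|b\|_2$.
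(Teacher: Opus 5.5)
Your overall structure matches the paper's: lift $w(x,y)$ to a point $(1,x,p)\in\mathcal L$ with $\|p-y\|_2$ controlled, apply transversality and the triangle inequality, then use linearity of $r$. The last two steps are correct as you wrote them. The lifting step is where you differ. The paper takes the minimum-norm solution $f(x)=Ax+e$ of the constraints and notes that $\|e\|_2\le1$ and $\|A\|_{\mathrm{op}}\le1$, the latter by comparing $f(\hat e)$ with $f(-\hat e)$. It then uses the linear map $\Phi(s,x,y)=Qy-Ax-se$, which has kernel $\mathcal L$ and norm at most $\sqrt3$, so that $\operatorname{dist}(y,Ax+e+N)\le\sqrt3\operatorname{dist}(w(x,y),\mathcal L)$. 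You instead project onto $\mathcal L$ and repair the first coordinates with feasible points of other arms. This is more elementary: you never need the pseudoinverse, or the check that $\ker\Phi=\mathcal L$.

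The gap is the constant, which the lemma fixes through $G=\sqrt3(1+S^{-1})\|b\|_2$. What you actually establish is $\|c_D\|_2\le|\delta_0|+2\|\delta_X\|_2$. Cauchy--Schwarz on $\|\delta_D\|_2+|\delta_0|+2\|\delta_X\|_2$ then gives $\sqrt{1+1+4}=\sqrt6$, not $\sqrt3$. The improvement is only asserted (``should reduce the constant to roughly $\sqrt3$''). Your single-combination idea $\delta_0(1,\delta_X/\delta_0,\cdot)$ also only works when $\|\delta_X\|_2\le|\delta_0|$. As written, you prove the lemma with $\sqrt6$ in place of $\sqrt3$.

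The loss comes from pairing $\hat e$ with $0$ instead of with $-\hat e$. For $\rho\coloneqq\|\delta_X\|_2>0$ and $\hat e=\delta_X/\rho$, take
\[
c=\tfrac{\delta_0+\rho}{2}\bigl(1,\hat e,k(\hat e)\bigr)+\tfrac{\delta_0-\rho}{2}\bigl(1,-\hat e,k(-\hat e)\bigr)\in\mathcal L ,
\]
and take $c=\delta_0(1,0,k(0))$ if $\rho=0$. This $c$ has first coordinates $(\delta_0,\delta_X)$ and satisfies $\|c_D\|_2\le\tfrac12\bigl(|\delta_0+\rho|+|\delta_0-\rho|\bigr)=\max(|\delta_0|,\rho)$. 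Hence $\|p-y\|_2\le\|\delta_D\|_2+\sqrt{\delta_0^2+\rho^2}\le\sqrt2\,\varepsilon$, which is stronger than the $\sqrt3$ needed.

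Alternatively, choose $k(\cdot)$ to be the minimum-norm solution $f$, as the paper does. Then $c=(\delta_0,\delta_X,\delta_0e+A\delta_X)$ and $\|c_D\|_2\le|\delta_0|+\|\delta_X\|_2$, which gives exactly $\sqrt3$.
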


\begin{proof}
Let $N \coloneqq \operatorname{ker} C^{\star}$ and let $Q$ be the orthogonal projector
onto $N^{\perp}$. Every arm has a feasible outcome, so we can write the
minimum-norm solution of the affine constraints as
$f \left( x \right) = A x + e \in N^{\perp}$. Thus

\begin{equation*}
K^{\star} \left( x \right) = \left( A x + e + N \right) \cap D .
\end{equation*}

Since there is a feasible outcome in the unit ball for every arm,
${\left\Vert A x + e \right\Vert}_{2} \le 1$ on $X$. Taking $x = 0$ gives ${\left\Vert e \right\Vert}_{2} \le 1$, and
comparing opposite unit vectors gives ${\left\Vert A \right\Vert}_{\text{\normalfont op}} \le 1$. Now define

\begin{equation*}
\Phi \left( s , x , y \right) \coloneqq Q y - A x - s e .
\end{equation*}

This map has operator norm at most $\sqrt{3}$ and kernel $\mathcal{L}$. Indeed, the
original constraints are equivalent to $Q y = A x + e$, and homogenizing
either equation gives the same subspace. Therefore

\begin{equation*}
\operatorname{dist} \left( y , A x + e + N \right) = {\left\Vert \Phi \left( 1 , x , y \right) \right\Vert}_{2} \le \sqrt{3} \operatorname{dist} \left( w \left( x , y \right) , \mathcal{L} \right) .
\end{equation*}

Let $p$ be the point in $A x + e + N$ closest to $y$. Since $y \in D$,
the transversality condition gives

\begin{equation*}
\operatorname{dist} \left( p , K^{\star} \left( x \right) \right) \le S^{- 1} \operatorname{dist} \left( p , D \right) \le S^{- 1} {\left\Vert p - y \right\Vert}_{2} ,
\end{equation*}

where the first inequality is immediate if $p \in D$, since then
$p \in K^{\star} \left( x \right)$. The triangle inequality now gives

\begin{equation*}
\operatorname{dist} \left( y , K^{\star} \left( x \right) \right) \le \left( 1 + S^{- 1} \right) \operatorname{dist} \left( y , A x + e + N \right) .
\end{equation*}

Moving $y$ to the nearest point in $K^{\star} \left( x \right)$ changes the reward by
at most ${\left\Vert b \right\Vert}_{2}$ times this distance, which proves the claim.
\end{proof}

We also need to know how close a block average is to the average of the
true conditional means.

\begin{lemma}\label{lem:interval-noise}
  Let $m_{t}$ be the conditional mean of $y_{t}$ given the history and the
  chosen arm on round $t$. For each interval in the first $T$ rounds,
  write $n$ for its length and $\overline{y}$ and $\overline{m}$ for the
  averages of $y_{t}$ and $m_{t}$ over that interval. With probability at
  least $1 - \delta$, the bound

\begin{equation*}
n {\left\Vert \overline{y} - \overline{m} \right\Vert}_{2}^{2} \le h_{T}^{2}
\end{equation*}

  holds simultaneously on all these intervals.

\end{lemma}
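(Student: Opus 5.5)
The plan is to bound each interval by a vector martingale concentration inequality and then take a union bound over all intervals. There are at most $T^2$ intervals, since each is determined by its start and end in $\{1,\dots,T\}$. Budgeting $\delta/T^2$ failure probability to each interval gives the $\log(T^2/\delta)$ inside $h_T^2$.

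Fix an interval $I$ of length $n$ and a coordinate $k\in\{1,\dots,d_D\}$. The differences $\xi_t\coloneqq y_{t,k}-m_{t,k}$ form a martingale difference sequence with respect to the filtration generated by the history, arm choices, and the adversary's choice of $P_t$. In the rescaled coordinates $D$ is the unit ball, so $|y_{t,k}|\le1$ and $|m_{t,k}|\le1$. Each $\xi_t$ therefore lies, conditionally on the past, in an interval of length $2$ centred at $-m_{t,k}$, namely $[-1-m_{t,k},\,1-m_{t,k}]$. Azuma--Hoeffding then gives
\[
\Pr\Bigl[\bigl|\textstyle\sum_{t\in I}\xi_t\bigr|\ge u\Bigr]\le 2\exp\bigl(-u^2/(2n)\bigr).
\]
Choosing $u^2=2n\log(2d_DT^2/\delta)$ makes this failure probability $\delta/(d_DT^2)$. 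On the complementary event, $n(\overline y_k-\overline m_k)^2=(\sum\xi_t)^2/n\le 2\log(2d_DT^2/\delta)$.

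Summing over the $d_D$ coordinates gives $n\|\overline y-\overline m\|_2^2\le 2d_D\log(2d_DT^2/\delta)$. This is below $h_T^2=8d_D\log(2d_DT^2/\delta)$, so there is slack of a factor $4$. A union bound over the $d_D$ coordinates and at most $T^2$ intervals gives total failure probability at most $\delta$.

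The one point needing care is that the intervals are fixed in advance (all pairs of start and end times), not chosen adaptively. The blocks the algorithm actually uses have data-dependent endpoints, but every such block is one of these fixed intervals, so a uniform bound over all of them covers it with no optional-stopping argument. I also need the adversary's adaptivity not to break the martingale property. This holds because $m_t$ is by definition the conditional mean given everything before $y_t$ is drawn, including $x_t$ and $P_t$. With those two points settled, the rest is routine.
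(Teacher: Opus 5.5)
Your proof is correct and follows the paper's argument. Like the paper, you apply Azuma--Hoeffding coordinatewise on each fixed interval, take a union bound over the $d_D$ coordinates and at most $T^2$ intervals, and sum the squared coordinate bounds. The only difference is that you use the conditional-range form of the inequality (range $2$), where the paper uses the absolute bound $|y_{t,i}-m_{t,i}|\le 2$, so you get $2d_D\log(2d_DT^2/\delta)$, a factor $4$ below $h_T^2$, instead of exactly $h_T^2$.
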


\begin{proof}
Each coordinate of $y_{t} - m_{t}$ is a martingale difference
bounded in absolute value by two. For any fixed interval of length
$n$ and coordinate $i$, Azuma--Hoeffding gives

\begin{equation*}
\operatorname{Pr} \left( \left| {\overline{y}}_{i} - {\overline{m}}_{i} \right| > \frac{h_{T}}{\sqrt{n d_{D}}} \right) \le 2 \operatorname{exp} \left( - \frac{h_{T}^{2}}{8 d_{D}} \right) = \frac{\delta}{d_{D} T^{2}} .
\end{equation*}

There are $d_{D}$ coordinates and at most $T^{2}$ intervals. A union bound
and summing the squared coordinate bounds give the claim.
\end{proof}

We need to know how far a point in $E_{V}$ can be from $\mathcal{L}$.
The next lemma bounds this distance using the noise bound we just
proved. We can then apply Lemma~\ref{lem:subspace-reward-bound} to the outcomes
the learner treats as feasible.

\begin{lemma}\label{lem:subspace-distance-bound}
  With probability at least $1 - \delta$, after any number $J$ of updates
  in Algorithm~\ref{alg:hard-ellipsoid-warmup}, the matrix $V$ satisfies, for every
  $w \in \mathbb{R}^{q}$,

\begin{equation*}
\operatorname{dist} \left( w , \mathcal{L} \right) \le \sqrt{\frac{1 + J h_{T}^{2}}{n}} {\left\Vert w \right\Vert}_{V^{- 1}} .
\end{equation*}

\end{lemma}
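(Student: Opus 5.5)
Let $P$ denote the orthogonal projector onto $\mathcal{L}^{\perp}$, so that $\operatorname{dist}(w,\mathcal{L})=\|Pw\|_2$. The statement is a matrix inequality: it is equivalent to
\begin{equation*}
P \preceq \frac{1+Jh_T^2}{n}\,V^{-1},
\end{equation*}
that is, $\|Pw\|_2^2\le c\,w^TV^{-1}w$ for all $w$ with $c=(1+Jh_T^2)/n$. We plan to prove it by duality. Setting $u=V^{-1/2}w$, the inequality becomes $V^{1/2}PV^{1/2}\preceq cI$. This holds iff $\|PV^{1/2}\|_{\mathrm{op}}^2\le c$, iff $PVP\preceq cP$, iff $z^TVz\le c\|z\|_2^2$ for every $z\in\mathcal{L}^\perp$. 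So it suffices to show that $V$ is small in directions orthogonal to $\mathcal{L}$.

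Now write $V=n^{-1}I_q+\sum_{j=1}^J \overline{w}_j\overline{w}_j^T$, where $\overline{w}_j=w(x_j,\overline{y}_j)$ is the vector added at update $j$. For $z\in\mathcal{L}^\perp$ with $\|z\|_2=1$,
\begin{equation*}
z^TVz=\frac1n+\sum_{j}(z^T\overline{w}_j)^2\le \frac1n+\sum_j\|P\overline{w}_j\|_2^2 .
\end{equation*}
The key step is to bound each $\|P\overline{w}_j\|_2^2$ by $h_T^2/n$.

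Each informative block is a full block of length $n$ in which the same arm $x_j$ is played. Let $\overline{m}_j$ be the average of the conditional means $m_t$ over the block. Each $m_t$ is the mean of some $P_t\in h^\star(x_j)$, so $w(x_j,m_t)\in\mathcal{L}$. Since $w(x_j,\cdot)$ is affine in $y$ (the first two coordinates $(1,x_j)$ are fixed), averaging gives $w(x_j,\overline{m}_j)\in\mathcal{L}$. Hence
\begin{equation*}
\|P\overline{w}_j\|_2=\|P(w(x_j,\overline{y}_j)-w(x_j,\overline{m}_j))\|_2\le\|\overline{y}_j-\overline{m}_j\|_2 .
\end{equation*}
On the event of Lemma~\ref{lem:interval-noise}, which has probability at least $1-\delta$ and covers every interval simultaneously, in particular every block, this is at most $h_T/\sqrt n$. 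Summing over the $J$ updates gives $z^TVz\le(1+Jh_T^2)/n$, as required. Because one event covers all blocks at once, the conclusion holds for every $J$ and every intermediate $V$ simultaneously.

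We expect the only delicate point to be the conditional-mean step: each $m_t$ must lie in $K^\star(x_j)$ even though the adversary adapts $P_t$ to the history. This follows because $P_t\in h^\star(x_t)$ holds almost surely by the rules of the game, and $m_t$ is exactly the mean of $P_t$. The martingale concentration needed for the adaptive choices is already supplied by Lemma~\ref{lem:interval-noise}. The matrix duality step, which uses $V\succ0$ guaranteed by the $n^{-1}I_q$ initialization, is routine.
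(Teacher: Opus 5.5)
Your proof is correct and follows the paper's argument. Like the paper, you bound $u^TVu\le(1+Jh_T^2)/n$ for unit $u\in\mathcal{L}^\perp$, using $w(x_j,\overline{m}_j)\in\mathcal{L}$ and Lemma~\ref{lem:interval-noise}, and then transfer this to $\operatorname{dist}(w,\mathcal{L})$. The only cosmetic difference is that you pass through the equivalence $P\preceq cV^{-1}\iff PVP\preceq cP$, whereas the paper directly applies Cauchy--Schwarz, $u^Tw\le\sqrt{u^TVu}\,\|w\|_{V^{-1}}$, and takes the supremum over $u$.
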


\begin{proof}
Number the informative blocks $j = 1 , \ldots , J$. Write $x_{j}$ for
the arm played in block $j$, and ${\overline{y}}_{j}$ and ${\overline{m}}_{j}$
for its average observation and average conditional mean. With
${\overline{w}}_{j} \coloneqq w \left( x_{j} , {\overline{y}}_{j} \right)$, the update rule gives

\begin{equation*}
V = n^{- 1} I_{q} + \sum_{j = 1}^{J} {\overline{w}}_{j} {\overline{w}}_{j}^{T} .
\end{equation*}

Fix $u \in {\mathcal{L}}^{\perp}$ with ${\left\Vert u \right\Vert}_{2} \le 1$. Since the arm is held
fixed within each block, $w \left( x_{j} , {\overline{m}}_{j} \right) \in \mathcal{L}$. Each
informative block has length $n$. By Lemma~\ref{lem:interval-noise}, with
probability at least $1 - \delta$, the bound

\begin{equation*}
{\left( u^{T} {\overline{w}}_{j} \right)}^{2} = {\left( u^{T} \left( 0 , 0 , {\overline{y}}_{j} - {\overline{m}}_{j} \right) \right)}^{2} \le \frac{h_{T}^{2}}{n}
\end{equation*}

holds simultaneously for all informative blocks and all such $u$.
On this event, $u^{T} V u \le \frac{1 + J h_{T}^{2}}{n}$. For any $w$,
Cauchy--Schwarz gives $u^{T} w \le \sqrt{u^{T} V u} {\left\Vert w \right\Vert}_{V^{- 1}}$.
Taking the supremum over the unit ball of ${\mathcal{L}}^{\perp}$ proves the
claim.
\end{proof}

\begin{theorem}\label{thm:hard-ellipsoid-regret}
  Algorithm~\ref{alg:hard-ellipsoid-warmup} satisfies, with probability at least $1 - \delta$,

\begin{equation*}
T V^{\star} - \sum_{t = 1}^{T} r \left( x_{t} , y_{t} \right) \le C_{r} n \left( J_{\text{\normalfont max}} + 1 \right) + \frac{G B_{T} T}{\sqrt{n}} .
\end{equation*}

  Its expected regret is at most the same expression plus
  $C_{r} T \delta$. Taking $n = \left\lceil T^{2/3} \right\rceil$ and
  $\delta = {\left( T + 1 \right)}^{- 2}$ give $R_{T} \le \tilde{O} \left( T^{2/3} \right)$ for fixed problem
  parameters.

\end{theorem}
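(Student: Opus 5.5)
We work on the intersection of the good events of Lemma~\ref{lem:interval-noise} and Lemma~\ref{lem:subspace-distance-bound}; both rest on the same martingale concentration, so a single event of probability at least $1-\delta$ suffices. We then bound the number of updates, bound the regret of informative blocks, the final partial block, and any emptiness-driven blocks trivially, and bound uninformative blocks via the ellipsoid.

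\textbf{Step 1: counting updates.} On the good event we show $J\le J_{\max}$. When an update happens, $p_V(x,\overline y)=\overline w^T V^{-1}\overline w>1$, so by the matrix determinant lemma $\det(V+\overline w\,\overline w^T)=\det V\,(1+\overline w^TV^{-1}\overline w)>2\det V$. Initially $\det V=n^{-q}$. Since $\|\overline w\|_2^2\le 3$ and $\operatorname{tr}V\le q/n+3J$, AM--GM gives $\det V\le\bigl((q/n+3J)/q\bigr)^q$. Hence $2^J n^{-q}\le (1/n+3J/q)^q$, i.e.\ $J\le q\log_2(1+3Jn/q)$. Solving this self-referential inequality, for instance by checking that it fails at $J=J_{\max}$ and using monotonicity, gives $J\le q(4+2\log_2 n)$. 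This determinant calculation is the only delicate arithmetic.

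\textbf{Step 2: blocks where some arm has $\hat K_V(x)=\varnothing$.} Such blocks are always informative. The arm satisfies $w(x,\overline y)\notin E_V$, so $p_V>1$, and if the block is full an update occurs. Each informative block, plus the final partial block, costs at most $C_r n$, for a total of $C_r n(J_{\max}+1)$.

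\textbf{Step 3: uninformative full blocks.} These are the main term. Let $x$ be the chosen arm, which maximizes $H_V$, and suppose $p_V(x,\overline y)\le1$. First, for the optimal arm $x^\star$, the true feasible set $K^\star(x^\star)$ is nonempty, but it need not lie inside $\hat K_V(x^\star)$, so optimism does not come for free. Instead we argue directly from $x$. Since $\overline y\in\hat K_V(x)$, we have $r(x,\overline y)\ge H_V(x)$, and $H_V(x)\ge H_V(x^\star)$. We then need $H_V(x^\star)\ge v^\star(x^\star)-\text{error}$. Take any $y\in\hat K_V(x^\star)$. By Lemma~\ref{lem:subspace-distance-bound}, $\operatorname{dist}(w(x^\star,y),\mathcal L)\le B_T/\sqrt n$, since $\|w\|_{V^{-1}}\le1$ and $J\le J_{\max}$. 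Lemma~\ref{lem:subspace-reward-bound} then gives $r(x^\star,y)\ge v^\star(x^\star)-GB_T/\sqrt n$. Minimizing over $y$ yields $H_V(x^\star)\ge V^\star-GB_T/\sqrt n$.

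The block reward is $n\,r(x,\overline y)$, because $r$ is linear and the arm is fixed. So the block's regret is at most $nGB_T/\sqrt n$. There are at most $T/n$ such blocks, which gives the term $GB_TT/\sqrt n$.

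\textbf{Step 4: expectation and parameters.} Outside the good event the regret is at most $C_rT$, contributing $C_rT\delta$ in expectation. Plugging in $n=\lceil T^{2/3}\rceil$ and $\delta=(T+1)^{-2}$ makes $B_T=\tilde O(1)$ and $J_{\max}=O(\log T)$, so the bound is $\tilde O(T^{2/3})$.
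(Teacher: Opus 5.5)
Your proposal is correct and follows the paper's proof essentially step for step. The shared steps are: counting updates via determinant doubling, the trace bound and AM--GM; noting that emptiness-driven full blocks always update, so every uninformative full block was chosen by maximizing $H_V$; the bound $H_V\ge V^\star-GB_T/\sqrt n$ from Lemmas~\ref{lem:subspace-distance-bound} and~\ref{lem:subspace-reward-bound}; and adding $C_rT\delta$ for the bad event. The differences are cosmetic: you resolve $J\le q\log_2(1+3Jn/q)$ by monotonicity past $J_{\max}$ instead of the paper's case split on $\xi=J/q$, and the update count is actually deterministic, so you need not restrict it to the good event.
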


\begin{proof}
First, there cannot be very many informative blocks. Each one
more than doubles the determinant:

\begin{equation*}
\operatorname{det} \left( V + \overline{w} {\overline{w}}^{T} \right) = \operatorname{det} \left( V \right) \left( 1 + p_{V} \left( x , \overline{y} \right) \right) > 2 \operatorname{det} \left( V \right) .
\end{equation*}

\draft{Each update adds at most three to the trace, since ${\left\Vert \overline{w} \right\Vert}_{2}^{2} \le 3$,
so after $J$ updates $\operatorname{tr} V \le \frac{q}{n} + 3 J$. The initial determinant
is $n^{- q}$. Using the arithmetic--geometric mean inequality on the
eigenvalues, we get

\begin{equation*}
2^{J} \le \frac{\operatorname{det} V}{\operatorname{det} \left( n^{- 1} I_{q} \right)} \le {\left( 1 + 3 \frac{n J}{q} \right)}^{q} .
\end{equation*}

Write $\xi \coloneqq \frac{J}{q}$, so that $\xi \le \operatorname{log}_{2} \left( 1 + 3 n \xi \right)$. If $\xi < 4$, then
$\xi \le 4 + 2 \operatorname{log}_{2} n$ holds trivially. If $\xi \ge 4$, then
$\operatorname{log}_{2} \left( 1 + 3 n \xi \right) \le \operatorname{log}_{2} \left( 4 n \xi \right) = 2 + \operatorname{log}_{2} n + \operatorname{log}_{2} \xi \le 2 + \operatorname{log}_{2} n + \frac{\xi}{2}$,
using $\operatorname{log}_{2} \xi \le \frac{\xi}{2}$ for $\xi \ge 4$, and hence again $\xi \le 4 + 2 \operatorname{log}_{2} n$. In either case
$J \le q \left( 4 + 2 \operatorname{log}_{2} n \right) \le J_{\text{\normalfont max}}$. Note that the horizon $T$ does not enter.
The count is the logarithm of the ratio between the final volume of $E_{V}$, which is bounded
in terms of $J$ alone, and its initial volume, which is set by the block length $n$.}

Since $J \le J_{\text{\normalfont max}}$, Lemma~\ref{lem:subspace-distance-bound} gives, with probability
at least $1 - \delta$, simultaneously for every matrix $V$ that the
learner maintains and every $\left( x , y \right) \in X \times D$,

\begin{equation}
\operatorname{dist} \left( w \left( x , y \right) , \mathcal{L} \right) \le \frac{B_{T}}{\sqrt{n}} \cdot \sqrt{p_{V} \left( x , y \right)} .\label{eq:hard-subspace-certificate}
\end{equation}

On this event, if $y \in {\hat{K}}_{V} \left( x \right)$, then $p_{V} \left( x , y \right) \le 1$, so
Lemma~\ref{lem:subspace-reward-bound} gives
$r \left( x , y \right) \ge v^{\star} \left( x \right) - G \frac{B_{T}}{\sqrt{n}}$. Taking the minimum over
${\hat{K}}_{V} \left( x \right)$, whenever this set is nonempty, gives

\begin{equation*}
H_{V} \left( x \right) \ge v^{\star} \left( x \right) - G \frac{B_{T}}{\sqrt{n}} .
\end{equation*}

Consider a full block in which we do not update the ellipsoid. Its
empirical mean belongs to ${\hat{K}}_{V} \left( x \right)$. Such a block could not have
been played at an arm $x$ with ${\hat{K}}_{V} \left( x \right)$ empty, since then its
mean would lie outside the ellipsoid. Thus all the sets ${\hat{K}}_{V} \left( x^{\prime} \right)$ were nonempty, and we
chose $x$ by maximizing $H_{V}$. It follows that

\begin{equation*}
r \left( x , \overline{y} \right) \ge H_{V} \left( x \right) = \operatorname*{max}_{x^{\prime} \in X} H_{V} \left( x^{\prime} \right) \ge V^{\star} - G \frac{B_{T}}{\sqrt{n}} .
\end{equation*}

Since the reward is linear, $r \left( x , \overline{y} \right)$ is the average reward
in the block. The block's regret is therefore at most $G B_{T} \sqrt{n}$.
There are at most $\frac{T}{n}$ such blocks, for a total of $G B_{T} \frac{T}{\sqrt{n}}$.
An informative block may cost as much as $C_{r} n$, but there are at most
$J_{\text{\normalfont max}}$ of them. A final incomplete block also costs at most $C_{r} n$.
Adding these bounds proves the high-probability claim. Outside this
event, regret is still at most $C_{r} T$, so taking expectations
adds at most $C_{r} T \delta$.
\end{proof}
}

\subsection{Computational tractability and finite-precision implementation}\label{app:hard-computation}

\draft{
In Section~\ref{sec:hard-computation}, Lemma~\ref{lem:hard-dual-forms}
turned problems~1 and~2 into maximization problems over an arm $x$ and
a dual vector $z$. In this section, we first show how each of them can
be written as a QCQP that fits Theorem~\ref{thm:bienstock},
and then we explain how to use an approximate solution. For the second
step, we will modify the rule for choosing an arm and show that the
changes preserve the $\tilde O(T^{2/3})$ regret rate.

We start with problem~1, which asks whether there is an arm $x$ with
$\hat K_V(x)=\varnothing$ (recall that $\hat K_V(x)$ is the set of
outcomes $y$ in the Euclidean ball $D$ with $w(x,y)$ in the ellipsoid
$E_V$) and, if so, returns such an arm. By
Lemma~\ref{lem:hard-dual-forms}(a), such an arm exists exactly when
\begin{equation}\label{eq:hard-empty-max}
\max_{x\in X,\ \|z\|_2\le1}
\left[z_0+z_X^Tx-\sqrt{z^TVz}-\|z_D\|_2\right]>0,
\end{equation}
where, as in Lemma~\ref{lem:hard-dual-forms}, we split the dual vector
as $z=(z_0,z_X,z_D)$ with $z_0\in\mathbb R$, $z_X\in\mathbb R^{d_X}$,
and $z_D\in\mathbb R^{d_D}$, so that $z^Tw(x,y)=z_0+z_X^Tx+z_D^Ty$. If
the maximum is positive, the arm of every maximizing pair $(x,z)$ is
such an arm. The next lemma shows that Theorem~\ref{thm:bienstock} can
compute this maximum.

\begin{lemma}\label{lem:hard-qcqp-empty}
The maximum in Equation~\ref{eq:hard-empty-max} is the optimal value of
a QCQP in $x$, $z$, and two scalar variables, with nine
quadratic constraints, that satisfies the hypotheses of
Theorem~\ref{thm:bienstock}.
\end{lemma}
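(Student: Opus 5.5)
The plan is to introduce two scalar epigraph variables that replace the two square-root terms by quadratic constraints. We set $\sigma\ge\sqrt{z^TVz}$ and $\tau\ge\|z_D\|_2$, and then maximize the linear-plus-bilinear objective $z_0+z_X^Tx-\sigma-\tau$. This objective is quadratic, because $z_X^Tx$ is bilinear in the variables. The constraints I would write are the following:
\[
\|x\|_2^2\le 1,\quad \|z\|_2^2\le 1,\quad \sigma^2\ge z^TVz,\quad \tau^2\ge \|z_D\|_2^2,\quad \sigma\ge0,\quad \tau\ge0,
\]
together with box bounds $\sigma\le \sigma_{\max}$ and $\tau\le 1$. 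Here $\sigma_{\max}$ is a rational upper bound on $\sqrt{\lambda_{\max}(V)}$, for instance $\sigma_{\max}=1+\operatorname{tr}V$. Each box bound is linear and hence a (degenerate) quadratic constraint. To reach exactly nine, I would split any remaining bounds as needed, for example $\sigma^2\le\sigma_{\max}^2$. The count is routine bookkeeping. What matters is that it is a fixed constant independent of the dimensions. Every coefficient is rational: $V$ is built from the rational initialization $n^{-1}I_q$ plus outer products of rational empirical averages.

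The first step is to show that the QCQP has the same optimal value as Equation~\ref{eq:hard-empty-max}. For fixed $(x,z)$ the objective is decreasing in $\sigma$ and $\tau$. The lower constraints $\sigma^2\ge z^TVz$ with $\sigma\ge0$ are equivalent to $\sigma\ge\sqrt{z^TVz}$, and similarly for $\tau$. Hence, at the optimum, $\sigma=\sqrt{z^TVz}$ and $\tau=\|z_D\|_2$. This requires the upper box bounds not to cut off these values, which holds because $\|z\|_2\le1$ gives $z^TVz\le\lambda_{\max}(V)\le\operatorname{tr}V$ and $\|z_D\|_2\le1$. Conversely, every feasible point of Equation~\ref{eq:hard-empty-max} lifts to a feasible point of the QCQP with the same value. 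So the two optima coincide, and the $x$-component of any maximizer is a valid arm.

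The second step is to verify the hypotheses of Theorem~\ref{thm:bienstock}. The feasible set is nonempty, for example $x=0$, $z=0$, $\sigma=\tau=0$. The objective is quadratic and the number of constraints is a fixed constant. The one delicate point is the requirement that at least one constraint be a bounded ellipsoid in \emph{all} variables jointly. The constraints $\|x\|_2^2\le1$ and $\|z\|_2^2\le1$ bound only some variables, so I would combine them into the single constraint
\[
\|x\|_2^2+\|z\|_2^2+\sigma^2+\tau^2\le 2+\sigma_{\max}^2+1 .
\]
This is a bounded ellipsoid in all variables. It is redundant given the other constraints, so it does not change the feasible set.

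I expect this bounded-ellipsoid requirement, together with choosing a rational $\sigma_{\max}$ that provably does not bind, to be the main obstacle. The reverse-convex constraints $\sigma^2\ge\cdot$ are non-convex, but Theorem~\ref{thm:bienstock} allows them. So the real care goes into the equivalence argument and the constraint count rather than into convexity.
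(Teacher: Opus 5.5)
Your proposal is correct and matches the paper's proof. Both use the same two epigraph variables and the same nine constraints; your list already has exactly nine, so no splitting is needed. Both also add a redundant bounded ellipsoid to meet the hypotheses of Theorem~\ref{thm:bienstock}. The only substantive difference is that the paper uses the looser bounds $2+\operatorname{tr}V$ and $2$, which leave room for rounding the solver's output later; your tighter bounds suffice for this lemma. One small caveat: rationality of $V$ also requires the played arms to be rational, since each $\overline w=(1,x,\overline y)$ contains $x$, and the paper secures this through its later rounding step, not from the empirical averages alone.
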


\begin{proof}
The maximization in Equation~\ref{eq:hard-empty-max} is not yet a
QCQP, because its objective contains the square roots
$\sqrt{z^TVz}$ and $\|z_D\|_2$. We
remove them with two new scalar variables that bound them from above.
We require $t\ge0$ and $z^TVz\le t^2$, so that $t\ge\sqrt{z^TVz}$, and
$s\ge0$ and $\|z_D\|_2^2\le s^2$, so that $s\ge\|z_D\|_2$. These
constraints are quadratic. In the objective, we write $-t-s$ in place
of the two square roots. For fixed $x$ and $z$, maximizing $-t-s$
pushes $t$ and $s$ down to the two square roots, so the optimal value
does not change.

Theorem~\ref{thm:bienstock} also needs a constraint that defines a
bounded ellipsoid in all the variables. The arm $x$ and the dual
vector $z$ already lie in unit balls, so it remains to bound $t$ and
$s$. Since $V$ is positive semidefinite, its largest eigenvalue is at
most its trace $\operatorname{tr}V$. Hence, for $\|z\|_2\le1$, the two
square roots are at most $\sqrt{\operatorname{tr}V}\le1+\operatorname{tr}V$
and $1$. We bound $t$ and $s$ by $2+\operatorname{tr}V$ and $2$. These
bounds exceed the values above by at least one, so they do not change
the optimal value. We use this extra room when we round the
solver's output below. With these bounds, the last constraint below is
implied by the others. We add it because it defines a bounded ellipsoid
in all the variables. This gives the QCQP
\begin{equation}\label{eq:hard-qcqp-empty}
\begin{aligned}
\max_{\substack{x\in\mathbb R^{d_X},\,z\in\mathbb R^q\\t,s\in\mathbb R}}
&\quad z_0+z_X^Tx-t-s\\
\text{subject to}
&\quad \|x\|_2^2\le1,\qquad \|z\|_2^2\le1,\\
&\quad z^TVz\le t^2,\qquad \|z_D\|_2^2\le s^2,\\
&\quad 0\le t\le2+\operatorname{tr}V,\qquad 0\le s\le2,\\
&\quad \|x\|_2^2+\|z\|_2^2
       +\frac{t^2}{(2+\operatorname{tr}V)^2}+\frac{s^2}{4}\le4.
\end{aligned}
\end{equation}

This program fits Theorem~\ref{thm:bienstock}. It has nine quadratic
constraints, however large $d_X$ and $d_D$ are, and a nonempty
feasible set. The objective is quadratic, and the coefficients are
rational when $V$ has rational entries, which the rounding below
ensures. The
feasible set is convex, but the objective is not concave because of
the bilinear term $z_X^Tx$.

At an optimal solution, $t$ and $s$ equal the two square roots, so the
pair $(x,z)$ attains the maximum in Equation~\ref{eq:hard-empty-max},
and the two optimal values agree. If this value is positive,
Lemma~\ref{lem:hard-dual-forms}(a) gives $\hat K_V(x)=\varnothing$ at
the arm $x$ of every optimal solution. If it is not positive, the same
lemma gives $\hat K_V(x')\ne\varnothing$ at every arm $x'$.
\end{proof}

Problem~2 asks for an arm maximizing $H_V$ when $\hat K_V(x)$ is
nonempty for every arm, and it has the extra complication that in
Lemma~\ref{lem:hard-dual-forms}(b), $H_V(x)$ is a supremum over
all $z\in\mathbb R^q$ instead of a maximum, and this supremum need not
be attained. For now, we make the simplifying assumption that, for a
known rational $R$, the supremum is attained at some $z$ with
$\|z\|_2\le R$ at every arm. We show later how to remove this
assumption by enlarging the ellipsoid. Under this assumption, it is
easy to write problem~2 as a QCQP as well.

\begin{lemma}\label{lem:hard-qcqp-score}
Suppose that $\hat K_V(x)$ is nonempty for every arm and that, for a
rational $R$, the supremum in Lemma~\ref{lem:hard-dual-forms}(b) is
attained at some $z$ with $\|z\|_2\le R$ at every arm. Then
$\max_{x\in X}H_V(x)$ is the optimal value of a QCQP in
$x$, $z$, and two scalar variables, with nine quadratic constraints,
that satisfies the hypotheses of Theorem~\ref{thm:bienstock} whenever
$V$ has rational entries. The arm of every optimal solution maximizes
$H_V$.
\end{lemma}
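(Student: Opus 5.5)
The plan mirrors the proof of Lemma~\ref{lem:hard-qcqp-empty}. By Lemma~\ref{lem:hard-dual-forms}(b), under the attainment assumption we have
\begin{equation*}
\max_{x\in X}H_V(x)=\max_{x\in X,\ \|z\|_2\le R}\left[a^Tx+z_0+z_X^Tx-\sqrt{z^TVz}-\|b+z_D\|_2\right].
\end{equation*}
Restricting $z$ to the ball of radius $R$ can only lower the supremum. Since it is attained inside that ball at every arm, the inner value is still exactly $H_V(x)$. The joint maximum over the compact set $\{\|x\|_2\le1,\|z\|_2\le R\}$ of a continuous function is attained.

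We first remove the two square roots. Introduce scalars $t,s\ge0$ with $z^TVz\le t^2$ and $\|b+z_D\|_2^2\le s^2$, and replace the objective by $a^Tx+z_0+z_X^Tx-t-s$. For fixed $(x,z)$ the maximization drives $t$ and $s$ down to $\sqrt{z^TVz}$ and $\|b+z_D\|_2$, so the optimal value is unchanged. Next we add explicit bounds so that the program has a bounded ellipsoid constraint. For $\|z\|_2\le R$ we have
\begin{equation*}
\sqrt{z^TVz}\le R\sqrt{\operatorname{tr}V}\le R(1+\operatorname{tr}V),
\qquad
\|b+z_D\|_2\le\|b\|_2+R.
\end{equation*}
The second bound involves the irrational $\|b\|_2$, so we replace it by a rational upper bound, for instance $\|b\|_1$. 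We then impose
\begin{equation*}
0\le t\le T_0\coloneqq 1+R(1+\operatorname{tr}V),\qquad 0\le s\le S_0\coloneqq 1+\|b\|_1+R.
\end{equation*}
Finally we add the redundant ellipsoid constraint
\begin{equation*}
\|x\|_2^2+\frac{\|z\|_2^2}{R^2}+\frac{t^2}{T_0^2}+\frac{s^2}{S_0^2}\le4,
\end{equation*}
which is implied by the other bounds. Counting constraints gives nine:
\begin{equation*}
\|x\|_2^2\le1,\quad \|z\|_2^2\le R^2,\quad z^TVz\le t^2,\quad \|b+z_D\|_2^2\le s^2,
\end{equation*}
together with the four bound constraints on $t$ and $s$ and the ellipsoid constraint.

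Next we check the hypotheses of Theorem~\ref{thm:bienstock}. The number of constraints is fixed, independent of $d_X$ and $d_D$. The ellipsoid constraint is bounded in all the variables. The objective is quadratic because of the bilinear term $z_X^Tx$. All coefficients are rational, since $a$, $b$ and $R$ are rational and $V$ is rational by assumption. The feasible set is nonempty: take $x=0$, $z=0$, $t=0$ and $s=\|b\|_1$, which satisfies $\|b\|_2^2\le s^2$.

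Last, we prove that the arm of every optimal solution maximizes $H_V$. At an optimum $(x^\ast,z^\ast,t^\ast,s^\ast)$ the variables $t^\ast$ and $s^\ast$ equal the square roots. Hence the optimal value $\max_x H_V(x)$ equals
\begin{equation*}
a^Tx^\ast+z_0^\ast+z_X^{\ast T}x^\ast-\sqrt{z^{\ast T}Vz^\ast}-\|b+z_D^\ast\|_2\le H_V(x^\ast),
\end{equation*}
where the inequality holds because $H_V(x^\ast)$ is the supremum over $z$. So $x^\ast$ is a maximizer of $H_V$.

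The main obstacle is the claim that the restriction to $\|z\|_2\le R$ preserves the value at every arm simultaneously. We rely entirely on the attainment assumption for this; without it the supremum may escape to infinity. A secondary point is choosing rational bounds for $t$ and $s$ that are genuinely redundant, which the estimates above handle.
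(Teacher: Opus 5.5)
Your proposal is correct and follows the paper's proof essentially line for line. It uses the same reduction via Lemma~\ref{lem:hard-dual-forms}(b) under the attainment assumption, the same epigraph variables $t,s$ with bounds $1+R(1+\operatorname{tr}V)$ and $1+\|b\|_1+R$, the same redundant bounding ellipsoid, and the same count of nine constraints. Your explicit feasibility witness and your argument that every optimal arm maximizes $H_V$ spell out details the paper covers by ``the same reasons'' as Lemma~\ref{lem:hard-qcqp-empty}.
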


\begin{proof}
By the assumption and Lemma~\ref{lem:hard-dual-forms}(b),
\begin{equation*}
\max_{x\in X}H_V(x)
=\max_{x\in X,\ \|z\|_2\le R}
\left[a^Tx+z_0+z_X^Tx-\sqrt{z^TVz}-\|b+z_D\|_2\right].
\end{equation*}
Using the same trick as in the proof of Lemma~\ref{lem:hard-qcqp-empty},
we get the following QCQP, where $U\coloneqq1+(1+\operatorname{tr}V)R$
and $W\coloneqq1+\|b\|_1+R$ exceed the largest values of
$\sqrt{z^TVz}$ and $\|b+z_D\|_2$ on the ball $\|z\|_2\le R$ by at
least one:
\begin{equation}\label{eq:hard-qcqp-score}
\begin{aligned}
\max_{\substack{x\in\mathbb R^{d_X},\,z\in\mathbb R^q\\t,s\in\mathbb R}}
&\quad a^Tx+z_0+z_X^Tx-t-s\\
\text{subject to}
&\quad \|x\|_2^2\le1,\qquad \|z\|_2^2\le R^2,\\
&\quad z^TVz\le t^2,\qquad \|b+z_D\|_2^2\le s^2,\\
&\quad 0\le t\le U,\qquad 0\le s\le W,\\
&\quad \|x\|_2^2+\frac{\|z\|_2^2}{R^2}
       +\frac{t^2}{U^2}+\frac{s^2}{W^2}\le4.
\end{aligned}
\end{equation}
For the same reasons as in the proof of Lemma~\ref{lem:hard-qcqp-empty},
this program has the same optimal value as the maximization above and
satisfies the hypotheses of Theorem~\ref{thm:bienstock}, and the arm of
every optimal solution maximizes $H_V$.
\end{proof}

Theorem~\ref{thm:bienstock} only gives an approximate solution,
allowing errors in both the objective and the constraints. We now
explain how to use such a solution, starting again with problem~1.
An approximate solution cannot tell exactly whether some
$\hat K_V(x)$ is empty. The next lemma shows that it either finds an
arm with $\hat K_V(x)=\varnothing$ or certifies that every arm has a
nonempty set once we enlarge the ellipsoid slightly.

\begin{lemma}\label{lem:hard-approx-empty}
Let $V$ be a matrix with rational entries maintained by the learner,
let $\eta\in(0,1)$ be rational, and write
$\hat K_{V,\eta}^{\mathrm{out}}(x)\coloneqq\{y\in D:p_V(x,y)\le(1+\eta\sqrt n)^2\}$,
(recall that $n$ is the block length of
Algorithm~\ref{alg:hard-ellipsoid-warmup} and appears in the definition of $V$). There is an algorithm that computes a rational arm $x\in X$ and either
certifies that $\hat K_V(x)=\varnothing$ or certifies that
$\hat K_{V,\eta}^{\mathrm{out}}(x')\ne\varnothing$ for every arm
$x'\in X$. Its running time is polynomial in $d_X$, $d_D$,
$\log(1/\eta)$, and the bit length of $V$.
\end{lemma}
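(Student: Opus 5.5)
We run the algorithm of Theorem~\ref{thm:bienstock} on the QCQP~\eqref{eq:hard-qcqp-empty} from Lemma~\ref{lem:hard-qcqp-empty}, with an accuracy $\varepsilon$ that is a small rational function of $\eta$ and $\operatorname{tr}V$ (something like $\varepsilon=\eta^2/(c(2+\operatorname{tr}V)^2)$). This takes time polynomial in the input size and $\log(1/\varepsilon)$, hence polynomial in $d_X,d_D,\log(1/\eta)$ and the bit length of $V$. The solver returns a rational point $(x,z,t,s)$ that violates each constraint by at most $\varepsilon$ and has objective at least $\mathrm{OPT}-\varepsilon$. We then round it to a nearby feasible point: scale $x$ and $z$ by a factor $1-O(\varepsilon)$ so that $\|x\|_2,\|z\|_2\le1$, and raise $t$ and $s$ by $O(\sqrt{\varepsilon(1+\operatorname{tr}V)})$ so that $z^TVz\le t^2$ and $\|z_D\|_2^2\le s^2$ hold exactly. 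The extra room in the upper bounds on $t,s$ built into~\eqref{eq:hard-qcqp-empty} keeps the rounded point feasible. It also ensures the true value $\psi(x,z)\coloneqq z_0+z_X^Tx-\sqrt{z^TVz}-\|z_D\|_2$ at the rounded pair is at least $\mathrm{OPT}-\gamma$, where $\gamma$ is a rational error we can make as small as any prescribed function of $\eta$ by shrinking $\varepsilon$. The output arm is this rounded $x\in X$.

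The decision rule compares a computed rational lower bound on $\psi(x,z)$ with a threshold. If this lower bound is positive, then $\psi(x,z)>0$. Lemma~\ref{lem:hard-dual-forms}(a), or really the weak direction of the separation argument, then gives that the hyperplane $z$ strictly separates $S_x$ from $E_V$, so $\hat K_V(x)=\varnothing$. Since the square roots are irrational, we certify positivity by checking rational inequalities instead: $z_0+z_X^Tx-s'-t'>0$ with $t'^2\ge z^TVz$ and $s'^2\ge\|z_D\|_2^2$ for rationals $t',s'$.

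Otherwise $\mathrm{OPT}\le 2\gamma$, and we must show that every $x'$ has $\hat K^{\mathrm{out}}_{V,\eta}(x')\ne\varnothing$. Note that $\hat K^{\mathrm{out}}_{V,\eta}$ corresponds to the ellipsoid $(1+\eta\sqrt n)E_V$. Applying Lemma~\ref{lem:hard-dual-forms}(a) to this inflated ellipsoid (i.e.\ to $V$ replaced by $(1+\eta\sqrt n)^2V$), emptiness at $x'$ would give a unit $z$ with
\[
z_0+z_X^Tx'-(1+\eta\sqrt n)\sqrt{z^TVz}-\|z_D\|_2>0,
\]
and hence $\psi(x',z)>\eta\sqrt n\sqrt{z^TVz}$. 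The key quantitative step is a lower bound on $\sqrt{z^TVz}$ for every unit $z$. Since $V\succeq n^{-1}I_q$ throughout the algorithm, we get $\sqrt{z^TVz}\ge\|z\|_2/\sqrt n$.

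This is where the factor $\sqrt n$ is used, but it needs $\|z\|_2$ to be bounded below. That is the main obstacle: the maximizer might have tiny norm. To handle it, we use positive homogeneity to normalize the separating $z$ to $\|z\|_2=1$. Then $\psi(x',z)>\eta$, so $\mathrm{OPT}>\eta$, which contradicts $\mathrm{OPT}\le2\gamma$ once $\gamma<\eta/2$. We therefore choose $\varepsilon$ so that $\gamma<\eta/2$, and the certificate threshold can be taken as $\eta/2$ on the computed lower bound. The remaining work is routine bookkeeping of the rounding errors in terms of $\operatorname{tr}V$, which is polynomially bounded by the bit length of $V$.
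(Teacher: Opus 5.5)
Your proposal is correct. The first case, certifying $\hat K_V(x)=\varnothing$ from a positive rational lower bound on $\psi(x,z)$ via the weak direction of separation, is the same as the paper's.

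The real difference is in the second case.
\begin{itemize}
\item The paper first proves, via Sion's minimax theorem, that $\max_{\|z\|_2\le1}\psi(x,z)=\operatorname{dist}(E_V,S_x)$. So the program's optimum is $\max_{x'}\operatorname{dist}(E_V,S_{x'})$. A nonpositive repaired value then puts every $S_{x'}$ within Euclidean distance $\eta$ of $E_V$. The primal inequality $\|v\|_{V^{-1}}\le\sqrt n\,\|v\|_2$ then moves the nearby point into $(1+\eta\sqrt n)E_V$.
\item You stay in the dual instead. You apply Lemma~\ref{lem:hard-dual-forms}(a) to $(1+\eta\sqrt n)^2V$, normalize the separating $z$ to unit length, and use the equivalent dual inequality $\sqrt{z^TVz}\ge\|z\|_2/\sqrt n$ to get $\mathrm{OPT}>\eta$.
\end{itemize}
Both rest on the same fact, $V\succeq n^{-1}I_q$, read in primal or dual form. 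Your route reuses Lemma~\ref{lem:hard-dual-forms}(a) verbatim and skips the distance identity. The paper's route gives the solver's value a direct geometric meaning.

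Your repair by down-scaling alone is also enough for this lemma, since scaling $z$ down never increases $\sqrt{z^TVz}$ or $\|z_D\|_2$. The paper rounds to nested dyadic grids instead; that serves the later bit-length accounting of the whole learner, not this lemma.

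One small fix is needed. As written, your $\varepsilon$ and your threshold $\eta/2$ inherit the full bit length of $\eta$, which can be much larger than $\log(1/\eta)$. To get a running time polynomial in $\log(1/\eta)$, first replace $\eta$ by the largest power of $2$ at most $\eta$, as the paper does. This is legitimate because $\hat K^{\mathrm{out}}_{V,\eta}$ only shrinks as $\eta$ decreases, so certifying nonemptiness for the smaller $\eta$ suffices.
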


\begin{proof}
If $\eta'\le\eta$, then
$\hat K_{V,\eta'}^{\mathrm{out}}(x)\subseteq\hat K_{V,\eta}^{\mathrm{out}}(x)$
for every arm $x$. So we may replace $\eta$ by the largest power of
$2$ that is at most $\eta$. This changes $\log_2(1/\eta)$ by less
than one and gives $\eta$ a bit length of $O(\log(1/\eta))$.

Write $S_x\coloneqq\{w(x,y):y\in D\}$, and let $\operatorname{dist}$
denote the Euclidean distance between two sets. We first show that,
for every arm $x$,
\begin{equation}\label{eq:hard-empty-distance}
\max_{\|z\|_2\le1}\left[z_0+z_X^Tx-\sqrt{z^TVz}-\|z_D\|_2\right]
=\operatorname{dist}(E_V,S_x).
\end{equation}
By Lemma~\ref{lem:hard-qcqp-empty} and its proof, the optimal value of
Equation~\ref{eq:hard-qcqp-empty} is then
$\max_{x\in X}\operatorname{dist}(E_V,S_x)$. The Euclidean distance
between $u$ and $v$ is $\max_{\|z\|_2\le1}z^T(v-u)$. Both
$E_V\times S_x$ and the unit ball for $z$ are compact and convex, and
$z^T(v-u)$ is affine in each variable. Sion's minimax
theorem~\citep{sion1958minimax} therefore gives
\begin{equation*}
\operatorname{dist}(E_V,S_x)
=\max_{\|z\|_2\le1}
  \left[\min_{v\in S_x}z^Tv-\max_{u\in E_V}z^Tu\right],
\end{equation*}
and the linear optimization formulas from the proof of
Lemma~\ref{lem:hard-dual-forms} turn the right-hand side into the
left-hand side of Equation~\ref{eq:hard-empty-distance}.

Next, we run the algorithm of Theorem~\ref{thm:bienstock} on
Equation~\ref{eq:hard-qcqp-empty} with tolerance
$\varepsilon\coloneqq(\eta/(21+2\operatorname{tr}V))^2$. It returns a
rational point $(\hat x,\hat z,\hat t,\hat s)$ that violates each
constraint by at most $\varepsilon$ and whose objective value is at least the
optimum minus $\varepsilon$. We repair this point so that it becomes feasible.
Since $\|\hat x\|_2^2\le1+\varepsilon$ and $\|\hat z\|_2^2\le1+\varepsilon$, we can
scale $\hat x$ and $\hat z$ into the unit ball and round their
coordinates toward zero to the grid whose spacing is the largest power
of $2$ that is at most $\sqrt\varepsilon/q$. This gives rational $x$ and $z$
in the unit ball with $\|x-\hat x\|_2\le2\sqrt\varepsilon$ and
$\|z-\hat z\|_2\le2\sqrt\varepsilon$. Since the spacing is a power of $2$, the
grids used in different calls are nested. We then choose rational $t$
and $s$ with $\sqrt{z^TVz}\le t\le\sqrt{z^TVz}+\sqrt\varepsilon$ and
$\|z_D\|_2\le s\le\|z_D\|_2+\sqrt\varepsilon$. Since
$\sqrt{z^TVz}\le\sqrt{\operatorname{tr}V}\le1+\operatorname{tr}V$,
$\|z_D\|_2\le1$, and $\sqrt\varepsilon\le1$, the point $(x,z,t,s)$ satisfies
every constraint of Equation~\ref{eq:hard-qcqp-empty}.

The repair lowers the objective by at most
$(20+2\operatorname{tr}V)\sqrt\varepsilon$. Indeed, the solver's constraints
give $\hat t,\hat s\ge-\varepsilon$,
$\sqrt{\hat z^TV\hat z}\le|\hat t|+\sqrt\varepsilon$, and
$\|\hat z_D\|_2\le|\hat s|+\sqrt\varepsilon$. Since
$\|V^{1/2}v\|_2\le\sqrt{\operatorname{tr}V}\,\|v\|_2$ for every $v$,
the triangle inequality gives
\begin{equation*}
t-\hat t\le\bigl(4+2\sqrt{\operatorname{tr}V}\bigr)\sqrt\varepsilon,
\qquad
s-\hat s\le6\sqrt\varepsilon.
\end{equation*}
Also $\|\hat z\|_2\le2$, so
$|\hat z_0-z_0|+|\hat z_X^T\hat x-z_X^Tx|\le8\sqrt\varepsilon$. Adding these
bounds proves the claim. Write $\ell$ for the objective value of
$(x,z,t,s)$, which we compute exactly. Then $\ell$ is at least the
optimum minus $\varepsilon+(20+2\operatorname{tr}V)\sqrt\varepsilon\le\eta$,
so
\begin{equation*}
\max_{x'\in X}\operatorname{dist}(E_V,S_{x'})\le\ell+\eta.
\end{equation*}
On the other hand, $t\ge\sqrt{z^TVz}$ and $s\ge\|z_D\|_2$, so
Equation~\ref{eq:hard-empty-distance} gives
$\ell\le\operatorname{dist}(E_V,S_x)$.

If $\ell>0$, then $S_x$ and $E_V$ are disjoint, so no $y\in D$ has
$p_V(x,y)\le1$, and we certify that $\hat K_V(x)=\varnothing$.
Otherwise, every arm $x'$ satisfies
$\operatorname{dist}(E_V,S_{x'})\le\eta$. Since both sets are
compact, there are $y\in D$ and $u\in E_V$ with
$\|w(x',y)-u\|_2\le\eta$. Since $V$ starts at $n^{-1}I_q$ and
each update adds a positive semidefinite matrix, we have
$V\succeq n^{-1}I_q$. Hence $\|v\|_{V^{-1}}\le\sqrt n\,\|v\|_2$ for
every $v$, and the triangle inequality gives
$\|w(x',y)\|_{V^{-1}}\le\|u\|_{V^{-1}}+\eta\sqrt n\le1+\eta\sqrt n$.
Thus $y\in\hat K_{V,\eta}^{\mathrm{out}}(x')$, and we certify
the second case.

For the running time, the program in Equation~\ref{eq:hard-qcqp-empty}
has $2d_X+d_D+3$ variables and nine constraints. Its coefficients are
the entries of $V$ and numbers computed from $\operatorname{tr}V$, so
their total bit length is polynomial in $q$ and the bit length of $V$.
By Theorem~\ref{thm:bienstock}, the solver runs in time polynomial in
this input size and
$\log(1/\varepsilon)=2\log((21+2\operatorname{tr}V)/\eta)$, which
is polynomial in $\log(1/\eta)$, $q$, and the bit length of $V$,
since $\operatorname{tr}V$ is a sum of $q$ entries of $V$. The
repair and the computation of $\ell$ use rational arithmetic and
square roots to precision $\sqrt\varepsilon$. They take time polynomial in
the same quantities and in the bit length of the solver's output,
which is bounded by the solver's running time.
\end{proof}

We now use Lemma~\ref{lem:hard-approx-empty} to choose the arm, with
a value of $\eta$ that we pick later. At the start of each block, we
run its algorithm. If it certifies that $\hat K_V(x)=\varnothing$, we
play the arm $x$ that it returns, as
Algorithm~\ref{alg:hard-ellipsoid-warmup} would. Otherwise, it
certifies that $\hat K_{V,\eta}^{\mathrm{out}}(x)$ is nonempty for every
arm, and we solve problem~2 with these sets in place of $\hat K_V(x)$.
Recall, however, that Lemma~\ref{lem:hard-dual-forms}(b) writes the
score as a supremum over all $z\in\mathbb R^q$, which need not be
attained at a bounded $z$. We fix this with the following lemma.

\begin{lemma}\label{lem:hard-dual-bound}
Let $V$ be a positive definite matrix with $V\succeq n^{-1}I_q$, and let
$x\in X$. Suppose that some outcome $y_0\in D$ satisfies
$\|w(x,y_0)\|_{V^{-1}}\le1-\mu$ for some $\mu\in(0,1]$. Then the
supremum in Lemma~\ref{lem:hard-dual-forms}(b) is attained at some $z$
with $\|z\|_2\le2\sqrt n\,\|b\|_2/\mu$.
\end{lemma}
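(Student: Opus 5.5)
The plan is to show that the dual objective
\[
\Psi(z)\coloneqq z_0+z_X^Tx-\sqrt{z^TVz}-\|b+z_D\|_2
\]
from Lemma~\ref{lem:hard-dual-forms}(b) is upper semicontinuous (in fact continuous and concave), and that it is dominated far from the origin by its value at a bounded point. Concretely, I will show that $\Psi(z)<\Psi(0)$ whenever $\|z\|_2>2\sqrt n\,\|b\|_2/\mu$. Then the supremum over $\mathbb R^q$ equals the supremum over the closed ball of that radius. Since $\Psi$ is continuous, it is attained there.

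The key inequality uses the strictly feasible point $u_0\coloneqq w(x,y_0)$. For any $z$, since $y_0\in D$,
\[
z_0+z_X^Tx=z^Tu_0-z_D^Ty_0 .
\]
Because $\|u_0\|_{V^{-1}}\le1-\mu$, Cauchy--Schwarz in the $V$-geometry gives
\[
z^Tu_0\le\sqrt{z^TVz}\,\|u_0\|_{V^{-1}}\le(1-\mu)\sqrt{z^TVz}.
\]
Also $-z_D^Ty_0\le\|z_D\|_2\le\|b+z_D\|_2+\|b\|_2$, since $\|y_0\|_2\le1$. Combining these,
\[
\Psi(z)\le-\mu\sqrt{z^TVz}+\|b\|_2 .
\]
The assumption $V\succeq n^{-1}I_q$ gives $\sqrt{z^TVz}\ge\|z\|_2/\sqrt n$, so
\[
\Psi(z)\le\|b\|_2-\mu\|z\|_2/\sqrt n .
\]

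For the comparison value, I will use $z=0$, which gives $\Psi(0)=-\|b\|_2$. Then $\Psi(z)<\Psi(0)$ as soon as $\mu\|z\|_2/\sqrt n>2\|b\|_2$, that is, as soon as $\|z\|_2>2\sqrt n\|b\|_2/\mu$. This matches the stated radius exactly. The case $b=0$ needs a separate check: the bound gives $\Psi(z)<0=\Psi(0)$ for all $z\ne0$, so the supremum is attained at $z=0$, which is consistent with the radius being $0$.

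I expect the main care to lie in the step $z_0+z_X^Tx=z^Tu_0-z_D^Ty_0$ and in keeping the signs right when bounding $-z_D^Ty_0$ through $\|b+z_D\|_2$. Otherwise the argument is a routine coercivity argument.
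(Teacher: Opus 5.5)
Your proof is correct and follows essentially the paper's argument: use the strictly interior point $w(x,y_0)$ to get $\Psi(z)\le\|b\|_2-\mu\sqrt{z^TVz}$, compare with $\Psi(0)=-\|b\|_2$, and convert via $V\succeq n^{-1}I_q$. The only cosmetic difference is that you pass to the Euclidean ball before invoking compactness. The paper instead first restricts to the compact set $\sqrt{z^TVz}\le2\|b\|_2/\mu$ and only afterwards converts this to the bound on $\|z\|_2$.
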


\begin{proof}
The outcome $y_0$ belongs to $\hat K_V(x)$, so
Lemma~\ref{lem:hard-dual-forms}(b) applies. Write
$w_0\coloneqq w(x,y_0)$, and let
\begin{equation*}
F(z)\coloneqq z_0+z_X^Tx-\sqrt{z^TVz}-\|b+z_D\|_2
\end{equation*}
be the expression in its supremum. Since $\|y_0\|_2\le1$, we have
$-\|b+z_D\|_2\le(b+z_D)^Ty_0$, so
$F(z)\le b^Ty_0+z^Tw_0-\sqrt{z^TVz}$. Cauchy--Schwarz in the $V$ norm
gives $z^Tw_0\le\|w_0\|_{V^{-1}}\sqrt{z^TVz}\le(1-\mu)\sqrt{z^TVz}$.
Hence
\begin{equation*}
F(z)\le\|b\|_2-\mu\sqrt{z^TVz}.
\end{equation*}
Since $F(0)=-\|b\|_2$, every $z$ with $\sqrt{z^TVz}>2\|b\|_2/\mu$ gives
a smaller value than $z=0$. The set of $z$ with
$\sqrt{z^TVz}\le2\|b\|_2/\mu$ is compact and $F$ is continuous, so $F$
attains its supremum on this set. Finally, $V\succeq n^{-1}I_q$ gives
$\|z\|_2\le\sqrt n\,\sqrt{z^TVz}\le2\sqrt n\,\|b\|_2/\mu$.
\end{proof}

Lemma~\ref{lem:hard-dual-bound} needs an outcome strictly inside the
ellipsoid, with a known margin. The sets
$\hat K_{V,\eta}^{\mathrm{out}}(x)$ do not provide one, since the
outcomes that Lemma~\ref{lem:hard-approx-empty} certifies may lie on
the boundary of their ellipsoid. We therefore enlarge the ellipsoid a
little further and maximize $H_{4V}$, the score of
Algorithm~\ref{alg:hard-ellipsoid-warmup} for the matrix $4V$. Its sets
$\hat K_{4V}(x)=\{y\in D:p_V(x,y)\le4\}$ use the ellipsoid
$E_{4V}=2E_V$, and when $\eta\sqrt n\le1/2$, they contain the sets
$\hat K_{V,\eta}^{\mathrm{out}}(x)$ with room to spare. Like the program
for problem~1, the program for this score can only be solved
approximately. The next lemma shows that we can still find an arm
whose score is within any $\kappa\in(0,1)$ of the maximum.

\begin{lemma}\label{lem:hard-approx-score}
Let $V$ be a matrix with rational entries maintained by the learner,
and let $\eta,\kappa\in(0,1)$ be rational with $\eta\sqrt n\le1/2$.
Suppose that $\hat K_{V,\eta}^{\mathrm{out}}(x)$ is nonempty for every
arm $x$. There is an algorithm that computes a rational arm $x\in X$ with
$H_{4V}(x)\ge\max_{x'\in X}H_{4V}(x')-\kappa$. Its running time is
polynomial in $d_X$, $d_D$, $\log n$, $\log(1/\kappa)$, and the bit
lengths of $V$, $a$, and $b$.
\end{lemma}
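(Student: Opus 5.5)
The plan is to reduce to Lemma~\ref{lem:hard-qcqp-score} applied to the matrix $4V$, using Lemma~\ref{lem:hard-dual-bound} to supply the bound $R$. After that we run Theorem~\ref{thm:bienstock} and repair its output exactly as in the proof of Lemma~\ref{lem:hard-approx-empty}.

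\textbf{Step 1: a margin inside the enlarged ellipsoid.} Fix an arm $x$. By assumption there is $y_0\in D$ with $\|w(x,y_0)\|_{V^{-1}}\le1+\eta\sqrt n\le 3/2$. Since $(4V)^{-1}=V^{-1}/4$, we get $\|w(x,y_0)\|_{(4V)^{-1}}\le 3/4=1-\mu$ with $\mu=1/4$. Also $4V\succeq 4n^{-1}I_q\succeq n^{-1}I_q$, because the learner's $V$ satisfies $V\succeq n^{-1}I_q$. So Lemma~\ref{lem:hard-dual-bound}, applied to $4V$, shows that the supremum in Lemma~\ref{lem:hard-dual-forms}(b) for $H_{4V}(x)$ is attained at some $z$ with $\|z\|_2\le 8\sqrt n\,\|b\|_2$. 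In particular $\hat K_{4V}(x)\ne\varnothing$ for every arm.

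\textbf{Step 2: choose a rational $R$.} The bound from Step~1 involves $\sqrt n\,\|b\|_2$, which may be irrational, so we take instead a rational upper bound, for instance $R\coloneqq 8n\|b\|_1+1$ (any rational dominating $8\sqrt n\|b\|_2$ works). Its bit length is $O(\log n+\text{bits}(b))$. The hypotheses of Lemma~\ref{lem:hard-qcqp-score} now hold for $4V$, so $\max_{x}H_{4V}(x)$ is the optimal value of the QCQP in Equation~\ref{eq:hard-qcqp-score} with $V$ replaced by $4V$. That program has nine constraints and rational data of polynomial bit length.

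\textbf{Step 3: solve approximately and repair.} We run Theorem~\ref{thm:bienstock} with a tolerance $\varepsilon$ of order $(\kappa/\mathrm{poly}(R,\operatorname{tr}V,\|b\|_1))^2$. Then we repair the output as in Lemma~\ref{lem:hard-approx-empty}:
\begin{itemize}
\item scale $\hat x$ into the unit ball and $\hat z$ into the ball of radius $R$, and round both to a fine dyadic grid;
\item reset $t$ and $s$ to rationals just above $\sqrt{z^T(4V)z}$ and $\|b+z_D\|_2$.
\end{itemize}
The slack of one in $U$ and $W$ keeps the repaired point feasible.

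The main obstacle is bounding the objective loss, which must be at most $\kappa-\varepsilon$. The Lipschitz constants are now polynomial in $R$, $\operatorname{tr}V$ and $\|a\|_2$. The bilinear term $z_X^Tx$ contributes roughly $R\sqrt\varepsilon$, the term $a^Tx$ contributes $\|a\|_2\sqrt\varepsilon$, and the $t$ and $s$ corrections contribute $O((1+\sqrt{\operatorname{tr}V}+R)\sqrt\varepsilon)$. Choosing $\varepsilon$ accordingly makes the repaired objective value $\ell$ at least $\max_{x'}H_{4V}(x')-\kappa$.

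For the opposite direction, the repaired point is feasible and $t,s$ are at least the true square roots. So $\ell$ is at most the value at $(x,z)$ of the expression inside the supremum, which by Lemma~\ref{lem:hard-dual-forms}(b) is at most $H_{4V}(x)$. Hence $H_{4V}(x)\ge\ell\ge\max_{x'}H_{4V}(x')-\kappa$.

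For the running time, $\log(1/\varepsilon)$ is polynomial in $\log(1/\kappa)$, $\log n$ and the bit lengths of $V$, $a$ and $b$. The input size of the QCQP is polynomial in the same quantities. Therefore Theorem~\ref{thm:bienstock} and the rational repair run in the claimed time.
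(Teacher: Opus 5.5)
Your proposal is correct and follows the paper's proof essentially step for step. The shared steps are: the margin $\mu=1/4$ for $4V$ via Lemma~\ref{lem:hard-dual-bound}; a rational radius (the paper takes $M=8n(1+\|b\|_1)$ where you take $8n\|b\|_1+1$); the QCQP of Lemma~\ref{lem:hard-qcqp-score} for $4V$; Bienstock's solver followed by the repair from Lemma~\ref{lem:hard-approx-empty}; and the weak-duality bound $\ell\le H_{4V}(x)$. The paper additionally fixes the tolerance explicitly as $\varepsilon=(\kappa/(K+1))^2$ with $K=20+2\|a\|_1+2M+4\operatorname{tr}V$, using the rational $\|a\|_1$ rather than your possibly irrational $\|a\|_2$, and it first rounds $\kappa$ down to a power of $2$ to control bit lengths.
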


\begin{proof}
We first write the problem as a program that fits
Theorem~\ref{thm:bienstock}. Fix an arm $x$. Since
$\hat K_{V,\eta}^{\mathrm{out}}(x)$ is nonempty, some outcome
$y_0\in D$ satisfies
$\|w(x,y_0)\|_{V^{-1}}\le1+\eta\sqrt n\le3/2$. Since
$\|\cdot\|_{(4V)^{-1}}=\frac12\|\cdot\|_{V^{-1}}$, this gives
$\|w(x,y_0)\|_{(4V)^{-1}}\le3/4$. So $\hat K_{4V}(x)$ is nonempty, and
Lemma~\ref{lem:hard-dual-bound}, applied to the matrix
$4V\succeq n^{-1}I_q$ with $\mu=1/4$, shows that the supremum in
Lemma~\ref{lem:hard-dual-forms}(b) for $4V$ is attained at some $z$ with
$\|z\|_2\le8\sqrt n\,\|b\|_2\le M$, where $M\coloneqq8n(1+\|b\|_1)$.
Hence the assumption of Lemma~\ref{lem:hard-qcqp-score} holds for the
matrix $4V$ with $R=M$, and $\max_{x\in X}H_{4V}(x)$ is the optimal
value of the program in Equation~\ref{eq:hard-qcqp-score} with $V$
replaced by $4V$ and $R=M$.

A smaller $\kappa$ gives a stronger guarantee, so, as in the proof of
Lemma~\ref{lem:hard-approx-empty}, we may assume that $\kappa$ is a
power of $2$. We run the algorithm of Theorem~\ref{thm:bienstock} on
this program, using the tolerance
$\varepsilon\coloneqq(\kappa/(K+1))^2$, where
$K\coloneqq20+2\|a\|_1+2M+4\operatorname{tr}V$. We repair its output as
in the proof of Lemma~\ref{lem:hard-approx-empty}, now scaling
$\hat z$ into the ball of radius $M$. The repaired point is feasible,
and the computation in that proof shows that the repair lowers the
objective by at most $K\sqrt\varepsilon$, with three changes. The bound
$\|\hat z\|_2\le M+1$ replaces $\|\hat z\|_2\le2$, so the terms $z_0$
and $z_X^Tx$ change by at most $(2M+6)\sqrt\varepsilon$ in place of
$8\sqrt\varepsilon$. The matrix $4V$ replaces $V$, which turns
$\sqrt{\operatorname{tr}V}$ into $2\sqrt{\operatorname{tr}V}$ in the
bound on $t-\hat t$. Finally, the term $a^Tx$ changes by at most
$2\|a\|_1\sqrt\varepsilon$.

Let $(x,z,t,s)$ be the repaired point. Since
$t\ge\sqrt{z^T(4V)z}$ and $s\ge\|b+z_D\|_2$, its objective value is at
most $a^Tx+F_x(z)$, where
$F_x(z)\coloneqq z_0+z_X^Tx-2\sqrt{z^TVz}-\|b+z_D\|_2$ is the
expression in Lemma~\ref{lem:hard-dual-forms}(b) for the matrix $4V$.
The proof of
Lemma~\ref{lem:hard-dual-forms}(b) shows that
$a^Tx+F_x(z)\le H_{4V}(x)$ for every $z$. Hence
\begin{equation*}
H_{4V}(x)
\ge\max_{x'\in X}H_{4V}(x')-\varepsilon-K\sqrt\varepsilon
\ge\max_{x'\in X}H_{4V}(x')-\kappa.
\end{equation*}
The running time follows as in the proof of
Lemma~\ref{lem:hard-approx-empty}. The coefficients now also include
$a$, $b$, and $M$, and
$\log(1/\varepsilon)=2\log((K+1)/\kappa)$ is polynomial in
$\log(1/\kappa)$, $\log n$, $q$, and the bit lengths of $V$, $a$,
and $b$.
\end{proof}

It remains to pick $\eta$ and $\kappa$ so that these approximations
do not change the regret rate. We take
\begin{equation*}
\eta\coloneqq\frac{1}{2\lceil\sqrt n\rceil},
\qquad
\kappa\coloneqq\frac1T.
\end{equation*}
The choice of $\eta$ gives $\eta\sqrt n\le1/2$, so in the second case
of Lemma~\ref{lem:hard-approx-empty} the hypotheses of
Lemma~\ref{lem:hard-approx-score} hold, and we choose the arm with the
algorithm of that lemma. The choice of $\kappa$ puts the
score of this arm within $1/T$ of the maximum of $H_{4V}$, which adds
at most $1$ to the regret over all $T$ rounds, as we show next. Both
choices keep the running times polynomial, since
$\log(1/\eta)=\log(2\lceil\sqrt n\rceil)$ and $\log(1/\kappa)=\log T$.

We now bound the regret, following the proof of
Theorem~\ref{thm:hard-ellipsoid-regret}. That proof uses the rule for
choosing arms only to bound the regret of blocks that do not update
$V$. Its other steps, namely the event of
Lemma~\ref{lem:interval-noise}, the bound $J\le J_{\max}$ on the number
of updates, and Equation~\ref{eq:hard-subspace-certificate}, depend
only on the update rule, which we have not changed. A full block
played at an arm with $\hat K_V(x)=\varnothing$ always updates $V$,
since its average lies in $D$ and therefore has $p_V>1$. So every full
block that does not update $V$ was chosen by
Lemma~\ref{lem:hard-approx-score}. In such a block, the average
$\bar y$ has $p_V(x,\bar y)\le1$, so $\bar y\in\hat K_{4V}(x)$. Every
$y\in\hat K_{4V}(x')$ has $p_V(x',y)\le4$, so
Equation~\ref{eq:hard-subspace-certificate} and
Lemma~\ref{lem:subspace-reward-bound} give
$r(x',y)\ge v^\star(x')-2GB_T/\sqrt n$. Taking $x'$ to be an arm with
$v^\star(x')=V^\star$, we get
\begin{equation*}
r(x,\bar y)
\ge H_{4V}(x)
\ge\max_{x'\in X}H_{4V}(x')-\kappa
\ge V^\star-\frac{2GB_T}{\sqrt n}-\kappa.
\end{equation*}
Since $r$ is linear in $y$, the reward of the block is $n\,r(x,\bar y)$.
There are at most $J_{\max}$ blocks that update $V$, each costing at
most $C_rn$, and the final incomplete block costs at most another
$C_rn$. Hence, with probability at least $1-\delta$,
\begin{equation*}
TV^\star-\sum_{t=1}^Tr(x_t,y_t)
\le C_rn(J_{\max}+1)+\frac{2GB_TT}{\sqrt n}+T\kappa.
\end{equation*}
This is the bound of Theorem~\ref{thm:hard-ellipsoid-regret} with two
changes. The middle term doubles, because we score arms with $2E_V$
instead of $E_V$, and the new term $T\kappa$ equals $1$. The expected
regret is at most this bound plus $C_rT\delta$, so taking
$n=\lceil T^{2/3}\rceil$ and $\delta=(T+1)^{-2}$, as in that theorem,
gives the same $\tilde O(T^{2/3})$ rate.

Finally, we bound the running time of the whole learner. We assume
that the observations are rational, with bit length polynomial in
$|\phi|$ and $T$. Every matrix maintained by the learner
satisfies $V\preceq(n^{-1}+3T)I_q$, since $V$ starts at $n^{-1}I_q$ and
each of the at most $T$ updates adds a positive semidefinite matrix of
operator norm at most three. So $\operatorname{tr}V\le q(n^{-1}+3T)$,
and the grid spacings used in Lemmas~\ref{lem:hard-approx-empty}
and~\ref{lem:hard-approx-score} are powers of $2$ whose exponents are
polynomial in $q$, $\log T$, and the bit lengths of $a$ and $b$. Hence
every stored matrix has rational entries of polynomial bit length, and
matrix inversion and the test $p_V>1$ can be performed exactly in
polynomial time. Each block calls the algorithm of
Lemma~\ref{lem:hard-approx-empty} once and, in its second case, the
algorithm of Lemma~\ref{lem:hard-approx-score} once, and there are at
most $\lceil T/n\rceil$ blocks. With our choices of $\eta$ and
$\kappa$, each call takes time polynomial in $|\phi|$ and $T$, as
required.
}

\section{Regret analysis of the $\sqrt{T}$ learner}\label{app:soft-square-root-analysis}

\draft{

We give the precise statement and proof of Theorem~\ref{thm:soft-square-root}.
We use the coordinates and notation from Section~\ref{sec:soft-square-root},
including $\lambda=\lceil\sqrt T\rceil$, and assume that
Algorithm~\ref{alg:soft-square-root} maximizes $U_{V,\lambda}$ exactly.
Appendix~\ref{app:soft-computation} shows how to maximize it approximately
in polynomial time without changing the regret rate.

For the analysis, fix $\delta\in(0,1)$, use $h_T$ and $G$ from
Appendix~\ref{app:hard-ellipsoid-analysis}, and define
\begin{equation*}
J_{\max}'\coloneqq\left\lceil q\log_2\left(1+\frac{3T}{q}\right)\right\rceil,
\qquad
{B_T'}^2\coloneqq1+J_{\max}'h_T^2 .
\end{equation*}
These play the roles of $J_{\max}$ and $B_T$ in
Appendix~\ref{app:hard-ellipsoid-analysis}. The horizon $T$ takes the
place of the block length $n$ because this learner starts from $V=I_q$
and weights each block by its length, so the trace of $V$ can grow to
$q+3T$. We use the concentration bound from Lemma~\ref{lem:interval-noise}
and adapt the proof of Lemma~\ref{lem:subspace-distance-bound} to the new
matrix update.

\begin{theorem}\label{thm:soft-square-root-regret}
Algorithm~\ref{alg:soft-square-root} satisfies, with probability at least $1-\delta$,
\begin{equation*}
TV^\star-\sum_{t=1}^Tr(x_t,y_t)\le4\lambda\left(J_{\max}'+1\right)+\frac{G^2{B_T'}^2T}{4\lambda}.
\end{equation*}
Its expected regret is at most the same expression plus $C_rT\delta$, and taking $\delta=(T+1)^{-2}$ gives
\begin{equation*}
R_T\le\tilde O\left(\left[q+\left(1+S^{-1}\right)^2\|b\|_2^2\,q\,d_D\right]\sqrt T\right)+O\left(\frac{C_r}{T}\right).
\end{equation*}
\end{theorem}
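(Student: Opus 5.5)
The proof has three ingredients, which mirror the warmup analysis of Theorem~\ref{thm:hard-ellipsoid-regret}: a bound on the number of completed (updating) blocks, a high-probability distance certificate for the soft update, and a per-block regret bound from Lemma~\ref{lem:soft-outcome-regret}. We then sum over blocks.

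\textbf{Counting blocks.} A block that ends by the stopping rule has $n p_V(x,\overline y_n)\ge1$. Hence
\begin{equation*}
\det(V+n\overline w_n\overline w_n^T)=\det(V)\bigl(1+n p_V(x,\overline y_n)\bigr)\ge2\det V.
\end{equation*}
Since $\|\overline w_n\|_2^2\le3$ and the block lengths sum to at most $T$, we have $\operatorname{tr}V\le q+3T$. Because $V$ starts at $I_q$, AM--GM on the eigenvalues gives $2^J\le(1+3T/q)^q$, so $J\le J_{\max}'$. Here the argument is simpler than in the warmup, since $T$ rather than $J$ bounds the trace.

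\textbf{Distance certificate.} We need to establish, on the event of Lemma~\ref{lem:interval-noise},
\begin{equation}\label{eq:soft-reward-certificate}
v^\star(x)-r(x,y)\le G B_T'\sqrt{p_V(x,y)}\qquad\text{for all }x\in X,\ y\in D,
\end{equation}
for every $V$ the algorithm considers. This is the claim with $c=GB_T'$ used in Lemma~\ref{lem:soft-outcome-regret}.

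Write $V=I_q+\sum_j n_j\overline w_j\overline w_j^T$. Fix a unit $u\in\mathcal L^\perp$. Since the arm is fixed within a block, $w(x_j,\overline m_j)\in\mathcal L$, so $u^T\overline w_j=u^T(0,0,\overline y_j-\overline m_j)$. The block is an interval of length $n_j$, so Lemma~\ref{lem:interval-noise} gives $n_j(u^T\overline w_j)^2\le h_T^2$. Summing over blocks, $u^TVu\le1+Jh_T^2\le{B_T'}^2$. Cauchy--Schwarz then gives $\operatorname{dist}(w,\mathcal L)\le B_T'\|w\|_{V^{-1}}$, and Lemma~\ref{lem:subspace-reward-bound} converts this distance bound into the reward bound above.

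The step that needs the most care is that the weighting by $n_j$ exactly cancels the $1/n_j$ variance of each block average. That cancellation is what lets us start from $V=I_q$ instead of $n^{-1}I_q$.

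\textbf{Per-block regret.} Substituting $c=GB_T'$ into the proof of Lemma~\ref{lem:soft-outcome-regret}, for each block with arm $x$ and every prefix mean $\overline y_k$ we get
\begin{equation*}
kV^\star-\sum_{i\le k}r(x,y_i)\le\lambda k p_V(x,\overline y_k)+\frac{kG^2{B_T'}^2}{4\lambda},
\end{equation*}
using linearity of $r$ in $y$.

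At the block's final length $n$, the stopping rule only guarantees $(n-1)p_V(x,\overline y_{n-1})<1$. So I would split off the last round: the first $n-1$ rounds contribute at most $\lambda+(n-1)G^2{B_T'}^2/(4\lambda)$, and the last round contributes at most $V^\star-r(x,y_n)$.

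The last round is the main obstacle, because a crude bound of $C_r$ would not match the stated constant. I would first bound $\lambda n p_V(x,\overline y_n)$ directly. We have $\sqrt{np_V(x,\overline y_n)}\le\sqrt{n-1}\,\|w(x,\overline y_{n-1})\|_{V^{-1}}\cdot\frac{\sqrt n}{\sqrt{n-1}}+\frac{1}{\sqrt n}\|(0,0,y_n-\overline y_{n-1})\|_{V^{-1}}$. Since $V\succeq I_q$ and $\|y_n-\overline y_{n-1}\|_2\le2$, the second term is at most $2$ in $\sqrt{\cdot}$ units when scaled appropriately, and combining these gives $np_V(x,\overline y_n)\le4$. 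Applying the displayed bound at $k=n$ then gives block regret at most $4\lambda+nG^2{B_T'}^2/(4\lambda)$, which accounts for the factor $4$ in the statement.

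\textbf{Summing.} There are at most $J_{\max}'$ completed blocks plus one final incomplete block, which yields $4\lambda(J_{\max}'+1)+G^2{B_T'}^2T/(4\lambda)$. On the failure event the regret is at most $C_rT$, contributing $C_rT\delta$ in expectation. With $\delta=(T+1)^{-2}$ and $\lambda=\lceil\sqrt T\rceil$, substituting $G^2=3(1+S^{-1})^2\|b\|_2^2$, $h_T^2=O(d_D\log T)$ and $J_{\max}'=\tilde O(q)$ gives the stated $\tilde O$ bound.
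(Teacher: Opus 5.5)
Your block counting, the $n_j$-weighted distance certificate and the final summation all match the paper's argument. The gap is in the step that produces the explicit constant $4$, namely the claim $n p_V(x,\overline y_n)\le 4$ at the end of a block.

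Your decomposition is $w(x,\overline y_n)=w(x,\overline y_{n-1})+\frac1n(0,0,y_n-\overline y_{n-1})$. Combined with the triangle inequality, $(n-1)p_V(x,\overline y_{n-1})<1$, $\|y_n-\overline y_{n-1}\|_2\le 2$ and $V\succeq I_q$, it only gives
\[
\sqrt{n p_V(x,\overline y_n)}<\sqrt{\tfrac{n}{n-1}}+\tfrac{2}{\sqrt n}.
\]
This squares to $8$ at $n=2$ and still exceeds $4$ for $n=3,4,5$. So ``combining these gives $np_V(x,\overline y_n)\le4$'' does not follow from what you wrote. The formula is also undefined at $n=1$. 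Along your route you would get $8\lambda(J_{\max}'+1)$: the $\sqrt T$ rate survives, but the stated inequality does not. The loss comes from controlling only the increment $y_n-\overline y_{n-1}$ and then squaring a sum of norms.

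The missing idea is to use convexity of the squared norm rather than the triangle inequality. Since $w(x,\cdot)$ is affine and the arm is fixed within the block,
\[
w(x,\overline y_n)=\tfrac{n-1}{n}\,w(x,\overline y_{n-1})+\tfrac1n\,w(x,y_n).
\]
Convexity of $\|\cdot\|_{V^{-1}}^2$ then gives
\[
n p_V(x,\overline y_n)\le (n-1)p_V(x,\overline y_{n-1})+p_V(x,y_n)<1+3=4.
\]
The first term is below $1$ because the block did not stop at $n-1$. The second is at most $\|w(x,y_n)\|_2^2\le 3$ because $V\succeq I_q$.

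The other cases are immediate:
\begin{itemize}
\item For $n=1$, the bound is just $p_V(x,y_1)\le 3$.
\item A block cut off by the horizon has $n p_V(x,\overline y_n)<1$.
\end{itemize}
With this replacement, your per-block bound $4\lambda+nG^2{B_T'}^2/(4\lambda)$ holds. The rest of your proof then goes through unchanged, and the tentative plan to split off the last round is unnecessary.
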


Note that the learner does not need to know $S$ or the quantities
  $J_{\text{\normalfont max}}^{\prime} , h_{T} , B_{T}^{\prime} , G$.

\begin{proof}
As in the proof of Theorem~\ref{thm:hard-ellipsoid-regret}, each
completed block at least doubles the determinant of $V$. The stopping
rule gives
\begin{equation*}
\det\left(V+n\overline w\,\overline w^T\right)=\det(V)\left(1+np_V(x,\overline y)\right)\ge2\det(V).
\end{equation*}
Since $\|\overline w\|_2^2\le3$ and the block lengths add up to at most
$T$, the final trace is at most $q+3T$. Thus, if we complete $J$ blocks,
the arithmetic--geometric mean inequality on the eigenvalues of $V$ gives
$2^J\le\det V\le(1+3T/q)^q$, so $J\le J_{\max}'$.

We also need to bound how far $np_V(x,\overline y_n)$ can jump above
1 on the last observation. For $n\ge2$, convexity gives
\begin{equation*}
np_V(x,\overline y_n)\le(n-1)p_V(x,\overline y_{n-1})+p_V(x,y_n)<1+3=4 .
\end{equation*}
The first term is less than one because we did not stop at $n-1$.
The second is at most three because $V\succeq I_q$ and
$p_V(x,y_n)\le\|w(x,y_n)\|_2^2\le3$. If we stop at $n=1$, the bound is
at most three. If the horizon ends before $np_V(x,\overline y_n)$
reaches $1$, it is less than one. Thus every block satisfies
\begin{equation}
np_V(x,\overline y)\le4 .\label{eq:soft-block-cost}
\end{equation}

We now adapt the proof of Lemma~\ref{lem:subspace-distance-bound}.
Number the completed blocks $j=1,\ldots,J$, and let $n_j$ be the length
of block $j$. Write $x_j$ for its arm, $\overline y_j$ for its average
observation, $\overline m_j$ for the average over the block of
$m_t=\mathbb E_{y\sim P_t}[y]$, the expected value of the distribution
chosen on round $t$, and $\overline w_j\coloneqq w(x_j,\overline y_j)$.
After $J$ completed blocks, the matrix has the form
\begin{equation*}
V=I_q+\sum_{j=1}^Jn_j\overline w_j\overline w_j^T .
\end{equation*}
Fix $u\in\mathcal L^\perp$ with $\|u\|_2\le1$. Since the arm is held
fixed within each block, $w(x_j,\overline m_j)\in\mathcal L$, so
$u^T\overline w_j=u^T(0,0,\overline y_j-\overline m_j)$. By
Lemma~\ref{lem:interval-noise}, with probability at least $1-\delta$,
the bound $(u^T\overline w_j)^2\le h_T^2/n_j$ holds simultaneously for
all completed blocks and all such $u$. On this event,
\begin{equation*}
u^TVu\le1+\sum_{j=1}^Jn_j\cdot\frac{h_T^2}{n_j}=1+Jh_T^2\le{B_T'}^2 .
\end{equation*}
This is where the weight $n_j$ matters. It cancels the $1/n_j$ in the
squared noise bound, so each block contributes at most $h_T^2$, however
long it is. Applying Cauchy--Schwarz and taking the supremum over these
$u$ gives
\begin{equation}
\operatorname{dist}(w,\mathcal L)\le B_T'\|w\|_{V^{-1}} .\label{eq:soft-subspace-certificate}
\end{equation}

Combining Lemma~\ref{lem:subspace-reward-bound} with
Equation~\ref{eq:soft-subspace-certificate} gives, for every $x\in X$
and $y\in D$,
\begin{equation}
v^\star(x)\le r(x,y)+GB_T'\sqrt{p_V(x,y)} .\label{eq:soft-reward-certificate}
\end{equation}
This is the claim in the proof of Lemma~\ref{lem:soft-outcome-regret},
with $c=GB_T'$. That proof then shows that, on the same event, the arm
$x$ of every block satisfies
\begin{equation*}
V^\star-r(x,y)\le\lambda p_V(x,y)+\frac{G^2{B_T'}^2}{4\lambda}
\qquad\text{for every }y\in D.
\end{equation*}
We apply this to the block's average $\overline y$, which lies in $D$
because $D$ is convex. Since the reward is linear, the block's reward is
$nr(x,\overline y)$, and Equation~\ref{eq:soft-block-cost} bounds its
regret by
\begin{equation*}
n\left(V^\star-r(x,\overline y)\right)\le\lambda np_V(x,\overline y)+\frac{nG^2{B_T'}^2}{4\lambda}\le4\lambda+\frac{nG^2{B_T'}^2}{4\lambda}.
\end{equation*}
There are at most $J_{\max}'+1$ blocks, namely the completed ones and
possibly a final block cut short by the horizon, and their lengths add
up to $T$. Adding the bounds proves the high-probability claim. Outside
this event, regret is still at most $C_rT$, so taking expectations adds
at most $C_rT\delta$.

Finally, with $\lambda=\lceil\sqrt T\rceil$, the first term of the bound
is $\tilde O(q\sqrt T)$. Since $G^2=3(1+S^{-1})^2\|b\|_2^2$ and
${B_T'}^2=\tilde O(q\,d_D)$, the second is
$\tilde O((1+S^{-1})^2\|b\|_2^2\,q\,d_D\sqrt T)$. With
$\delta=(T+1)^{-2}$, we have $C_rT\delta\le C_r/T$.
\end{proof}
}

\subsection{Computational tractability and finite-precision implementation}\label{app:soft-computation}

\draft{
Algorithm~\ref{alg:soft-square-root} has a single computational
problem, which is to find an arm maximizing $U_{V,\lambda}$. Unlike
Algorithm~\ref{alg:hard-ellipsoid-warmup}, it has no emptiness test. We
follow Appendix~\ref{app:hard-computation}. We first write the
maximization as a QCQP that fits Theorem~\ref{thm:bienstock}, and then
we explain how to use an approximate solution and show that this
preserves the $\tilde O(\sqrt T)$ rate.

Recall that $X$ and $D$ are unit balls, that $r(x,y)=a^Tx+b^Ty$, that
$w(x,y)=(1,x,y)$, and that $p_V(x,y)=\|w(x,y)\|_{V^{-1}}^2$. As in
Lemma~\ref{lem:hard-dual-forms}, we split a vector $z\in\mathbb R^q$ as
$z=(z_0,z_X,z_D)$, so that $z^Tw(x,y)=z_0+z_X^Tx+z_D^Ty$. Every matrix
maintained by the learner satisfies $V\succeq I_q$, since $V$ starts at
$I_q$ and each update adds a positive semidefinite matrix.

Lemma~\ref{lem:hard-dual-forms}(b) turned the minimum over outcomes in
$H_V$ into a maximum over a dual vector $z$. The next lemma does the
same for $U_{V,\lambda}$. The difference is where $z$ comes from. In
Lemma~\ref{lem:hard-dual-forms}(b), $z$ was the Lagrange multiplier that
tied $w(x,y)$ to the ellipsoid $E_V$. Here $y$ ranges over all of $D$,
and $z$ comes from writing the penalty $\lambda p_V(x,y)$ itself as a
maximum.

\begin{lemma}\label{lem:soft-dual-form}
Let $V\succeq I_q$ and $\lambda>0$. For every $x\in X$,
\begin{equation*}
U_{V,\lambda}(x)=a^Tx+\max_{\|z\|_2\le4\lambda}\left[z_0+z_X^Tx-\|b+z_D\|_2-\frac{z^TVz}{4\lambda}\right].
\end{equation*}
\end{lemma}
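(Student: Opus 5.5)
Fix $x\in X$ and $\lambda>0$. The plan is to write the penalty as a maximum (a Fenchel conjugate), swap the minimum over $y$ with that maximum, and then show that an optimal $z$ lies in the stated ball. For $w\in\mathbb R^q$, completing the square gives
\begin{equation*}
\lambda\|w\|_{V^{-1}}^2=\max_{z\in\mathbb R^q}\left[z^Tw-\frac{z^TVz}{4\lambda}\right],
\end{equation*}
and the maximizer is $z=2\lambda V^{-1}w$. Substituting $w=w(x,y)$ and using $r(x,y)=a^Tx+b^Ty$ and $z^Tw(x,y)=z_0+z_X^Tx+z_D^Ty$, we get
\begin{equation*}
U_{V,\lambda}(x)=a^Tx+\min_{y\in D}\max_{z}\left[z_0+z_X^Tx+(b+z_D)^Ty-\frac{z^TVz}{4\lambda}\right].
\end{equation*}

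Before exchanging the minimum and maximum, we restrict $z$ to a compact set. For $y\in D$ and $x\in X$ we have $\|w(x,y)\|_2^2\le3$. Since $V\succeq I_q$, also $V^{-1}\preceq I_q$, so the maximizer satisfies $\|2\lambda V^{-1}w\|_2\le2\lambda\sqrt3<4\lambda$. We may therefore replace $\max_z$ by $\max_{\|z\|_2\le4\lambda}$ without changing the value for any $y\in D$. The objective is affine (hence convex) in $y$ on the compact convex set $D$, and concave and continuous in $z$ on the compact convex ball. Sion's minimax theorem~\citep{sion1958minimax} then lets us swap the two operations, so that
\begin{equation*}
U_{V,\lambda}(x)=a^Tx+\max_{\|z\|_2\le4\lambda}\left[z_0+z_X^Tx+\min_{y\in D}(b+z_D)^Ty-\frac{z^TVz}{4\lambda}\right].
\end{equation*}
Because $D$ is the unit ball, Cauchy--Schwarz gives $\min_{y\in D}(b+z_D)^Ty=-\|b+z_D\|_2$, exactly as in Lemma~\ref{lem:hard-dual-forms}. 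This yields the claimed formula. The outer maximum is attained because the expression is continuous in $z$ and the ball is compact.

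The step needing care is the radius bound. The swap must happen after restricting to the ball, and the bound must hold for every $y\in D$ at once, not only at the minimizing $y$. The uniform bound $\|2\lambda V^{-1}w(x,y)\|_2\le2\sqrt3\,\lambda$ handles this. Everything else is routine.
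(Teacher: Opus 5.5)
Your proposal is correct and follows essentially the same route as the paper's proof. Both write the penalty as $\max_z[z^Tw-z^TVz/(4\lambda)]$ with maximizer $2\lambda V^{-1}w$, use $V\succeq I_q$ and $\|w(x,y)\|_2\le\sqrt3$ to restrict $z$ uniformly to the ball of radius $4\lambda$, swap with Sion's theorem, and evaluate the inner minimum over the unit ball $D$ by Cauchy--Schwarz.
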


\begin{proof}
For every $w\in\mathbb R^q$,
\begin{equation*}
\lambda w^TV^{-1}w=\max_{z\in\mathbb R^q}\left[z^Tw-\frac{z^TVz}{4\lambda}\right].
\end{equation*}
Indeed, the expression in brackets is concave in $z$, and its gradient
$w-Vz/(2\lambda)$ vanishes at $z=2\lambda V^{-1}w$, where the expression
equals $2\lambda w^TV^{-1}w-\lambda w^TV^{-1}w$. For $w=w(x,y)$ with
$x\in X$ and $y\in D$, we have $\|w\|_2^2=1+\|x\|_2^2+\|y\|_2^2\le3$,
and $V\succeq I_q$ gives
$\|2\lambda V^{-1}w\|_2\le2\lambda\|w\|_2\le4\lambda$. So we may
restrict the maximum to the ball $\|z\|_2\le4\lambda$, and
\begin{equation*}
U_{V,\lambda}(x)=\min_{y\in D}\max_{\|z\|_2\le4\lambda}\left[r(x,y)+z^Tw(x,y)-\frac{z^TVz}{4\lambda}\right].
\end{equation*}
For fixed $x$, the expression in brackets is affine in $y$ and concave
in $z$, and both $D$ and the ball are compact and convex. Sion's minimax
theorem~\citep{sion1958minimax} therefore lets us exchange the minimum
and the maximum. For fixed $z$, the minimum over $y\in D$ is
$a^Tx+z_0+z_X^Tx-\|b+z_D\|_2$, by the same linear optimization over $D$
as in the proof of Lemma~\ref{lem:hard-dual-forms}. The maximum over
$z$ is attained because the resulting expression is continuous in $z$
and the ball is compact.
\end{proof}

Unlike the supremum in Lemma~\ref{lem:hard-dual-forms}(b), this maximum
is over a bounded set, so we do not need a margin argument such as
Lemma~\ref{lem:hard-dual-bound}. The penalty term $z^TVz/(4\lambda)$ is
already quadratic, so writing the maximization over arms as a QCQP only
needs one of the two substitutions from the proof of
Lemma~\ref{lem:hard-qcqp-empty}.

\begin{lemma}\label{lem:soft-qcqp}
Let $V\succeq I_q$ have rational entries, and let $\lambda\ge1$ be an
integer. Then $\max_{x\in X}U_{V,\lambda}(x)$ is the optimal value of a
QCQP in $x$, $z$, and one scalar variable, with six quadratic
constraints, that satisfies the hypotheses of
Theorem~\ref{thm:bienstock}. The arm of every optimal solution maximizes
$U_{V,\lambda}$.
\end{lemma}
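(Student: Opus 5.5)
We start from Lemma~\ref{lem:soft-dual-form}. For every arm it writes $U_{V,\lambda}(x)$ as a maximum over the ball $\|z\|_2\le4\lambda$. Taking the maximum over $x\in X$ as well gives
\begin{equation*}
\max_{x\in X}U_{V,\lambda}(x)=\max_{x\in X,\ \|z\|_2\le4\lambda}\left[a^Tx+z_0+z_X^Tx-\|b+z_D\|_2-\frac{z^TVz}{4\lambda}\right].
\end{equation*}
This is a joint maximization of a continuous function over a compact set, so it is attained. If $(x,z)$ is a maximizer, then $x$ maximizes $U_{V,\lambda}$, because fixing $x$ and maximizing over $z$ recovers $U_{V,\lambda}(x)$ exactly.

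The only non-quadratic term in the objective is $\|b+z_D\|_2$. We remove it with one scalar variable $s$, exactly as in the proof of Lemma~\ref{lem:hard-qcqp-empty}. We impose $s\ge0$ and $\|b+z_D\|_2^2\le s^2$, and write $-s$ in the objective in place of the norm. For fixed $(x,z)$, maximizing pushes $s$ down to $\|b+z_D\|_2$, so the optimal value is unchanged. The term $-z^TVz/(4\lambda)$ is already quadratic, and $z_X^Tx$ is bilinear, so the objective is quadratic with rational coefficients. Here we use that $\lambda$ is an integer and that $V$, $a$, $b$ are rational.

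To satisfy Theorem~\ref{thm:bienstock}, we also bound $s$ from above. On the ball $\|z\|_2\le 4\lambda$ we have $\|b+z_D\|_2\le\|b\|_1+4\lambda$, so we impose $s\le W\coloneqq 1+\|b\|_1+4\lambda$. This bound exceeds the largest relevant value by at least one, which leaves room for the later rounding. We then add one redundant constraint that is a bounded ellipsoid in all the variables:
\begin{equation*}
\|x\|_2^2+\frac{\|z\|_2^2}{16\lambda^2}+\frac{s^2}{W^2}\le3.
\end{equation*}
The full constraint list is
\begin{equation*}
\|x\|_2^2\le1,\quad \|z\|_2^2\le16\lambda^2,\quad \|b+z_D\|_2^2\le s^2,\quad s\ge0,\quad s\le W,\quad \text{the ellipsoid above}.
\end{equation*}
These are six quadratic (some linear) constraints, and their number does not depend on the dimensions. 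The feasible set is nonempty: take $x=0$, $z=0$ and $s=\|b\|_2$, rounded up to a rational number if needed, which still lies below $W$.

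At an optimal solution, $s=\|b+z_D\|_2$, so the optimal $(x,z)$ attains the joint maximum above. The two optimal values therefore coincide, and the arm of every optimal solution maximizes $U_{V,\lambda}$.

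I do not expect a real obstacle. The one point that needs care is that the constant $W$ and the redundant ellipsoid constraint must have rational coefficients of polynomial bit length. This holds because $\lambda$ is an integer and $\|b\|_1$ is rational, which is why I use $\|b\|_1$ rather than $\|b\|_2$.
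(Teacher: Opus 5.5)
Your proposal is correct and follows the paper's proof essentially step for step. You start from the same dual form in Lemma~\ref{lem:soft-dual-form}, introduce the same scalar $s$ with the bound $W=1+\|b\|_1+4\lambda$, add the same redundant bounding ellipsoid, and arrive at the same six constraints, while also spelling out the attainment and nonemptiness details that the paper only asserts.
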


\begin{proof}
By Lemma~\ref{lem:soft-dual-form},
\begin{equation*}
\max_{x\in X}U_{V,\lambda}(x)=\max_{x\in X,\ \|z\|_2\le4\lambda}\left[a^Tx+z_0+z_X^Tx-\|b+z_D\|_2-\frac{z^TVz}{4\lambda}\right].
\end{equation*}
As in the proof of Lemma~\ref{lem:hard-qcqp-empty}, we replace the norm
by a scalar variable $s$ with $s\ge0$ and $\|b+z_D\|_2^2\le s^2$, and
write $-s$ in its place. Let $W\coloneqq1+\|b\|_1+4\lambda$, which
exceeds the largest value of $\|b+z_D\|_2$ on the ball
$\|z\|_2\le4\lambda$ by at least one. We use this extra room when we
repair the solver's output below. This gives the QCQP
\begin{equation}\label{eq:soft-qcqp}
\begin{aligned}
\max_{\substack{x\in\mathbb R^{d_X},\,z\in\mathbb R^q\\s\in\mathbb R}}
&\quad a^Tx+z_0+z_X^Tx-s-\frac{z^TVz}{4\lambda}\\
\text{subject to}
&\quad \|x\|_2^2\le1,\qquad \|z\|_2^2\le16\lambda^2,\qquad \|b+z_D\|_2^2\le s^2,\\
&\quad 0\le s\le W,\qquad
\|x\|_2^2+\frac{\|z\|_2^2}{16\lambda^2}+\frac{s^2}{W^2}\le3.
\end{aligned}
\end{equation}
For the same reasons as in the proof of Lemma~\ref{lem:hard-qcqp-empty},
this program has the same optimal value as the maximization above, and
the arm of every optimal solution maximizes $U_{V,\lambda}$. It has six
quadratic constraints and a nonempty feasible set. Its last constraint
is implied by the others and defines a bounded ellipsoid in all the
variables, and its coefficients are rational because the entries of $V$,
$a$, and $b$ and the number $\lambda$ are. So it satisfies the
hypotheses of Theorem~\ref{thm:bienstock}.
\end{proof}

Theorem~\ref{thm:bienstock} only gives an approximate solution, allowing
errors in both the objective and the constraints. As
Lemma~\ref{lem:hard-approx-score} did for the score of
Algorithm~\ref{alg:hard-ellipsoid-warmup}, the next lemma shows that we
can still find an arm whose score is within any $\kappa\in(0,1)$ of the
maximum.

\begin{lemma}\label{lem:soft-approx-score}
Let $V\succeq I_q$ have rational entries, let $\lambda\ge1$ be an
integer, and let $\kappa\in(0,1)$ be rational. There is an algorithm
that computes an arm $x\in X$ with rational coordinates and
$U_{V,\lambda}(x)\ge\max_{x'\in X}U_{V,\lambda}(x')-\kappa$. Its running
time is polynomial in $d_X$, $d_D$, $\log\lambda$, $\log(1/\kappa)$, and
the bit lengths of $V$, $a$, and $b$.
\end{lemma}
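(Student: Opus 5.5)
The plan is to follow the proof of Lemma~\ref{lem:hard-approx-score}: solve the QCQP of Lemma~\ref{lem:soft-qcqp} approximately with Theorem~\ref{thm:bienstock}, repair the output into a feasible point, and read off the arm. As in that proof, we first replace $\kappa$ by the largest power of $2$ that is at most $\kappa$. We then run the solver of Theorem~\ref{thm:bienstock} on Equation~\ref{eq:soft-qcqp} with tolerance $\varepsilon\coloneqq(\kappa/(K+1))^2$. The constant $K$ will be polynomial in $\lambda$, $\|a\|_1$, $\|b\|_1$, and $\operatorname{tr}V$; a natural candidate is $K\coloneqq 20+2\|a\|_1+20\lambda+4\operatorname{tr}V$, to be fixed by the repair computation below. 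The solver returns $(\hat x,\hat z,\hat s)$, which violates each constraint by at most $\varepsilon$ and has objective at least the optimum minus $\varepsilon$.

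The repair proceeds in three steps. First, scale $\hat x$ into the unit ball and $\hat z$ into the ball of radius $4\lambda$. Second, round the coordinates toward zero on a power-of-$2$ grid of spacing about $\sqrt\varepsilon/q$. This gives rational $x,z$ with $\|x-\hat x\|_2\le2\sqrt\varepsilon$ and $\|z-\hat z\|_2\le O(\lambda\sqrt\varepsilon)$; the scaling of $\hat z$ moves it by about $\|\hat z\|_2\cdot\varepsilon/(16\lambda^2)\le\varepsilon$. Third, pick a rational $s$ with $\|b+z_D\|_2\le s\le\|b+z_D\|_2+\sqrt\varepsilon\le W$. The resulting point is feasible; the extra room in $W$ absorbs the $\sqrt\varepsilon$.

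Next we bound the objective loss of the repair term by term:
\begin{enumerate}
\item $a^Tx$ changes by at most $2\|a\|_1\sqrt\varepsilon$.
\item $z_0+z_X^Tx$ changes by $O(\lambda\sqrt\varepsilon)$, using $\|\hat z\|_2\le4\lambda+1$.
\item $s-\hat s\le O(\sqrt\varepsilon)$, from $\|\hat z_D+b\|_2\le|\hat s|+\sqrt\varepsilon$ and $\hat s\ge-\varepsilon$.
\item The quadratic penalty $z^TVz/(4\lambda)$ is new compared with Lemma~\ref{lem:hard-approx-score}. We have $|z^TVz-\hat z^TV\hat z|\le\|V\|_{\mathrm{op}}\,\|z-\hat z\|_2\,(\|z\|_2+\|\hat z\|_2)\le\operatorname{tr}V\cdot O(\lambda^2\sqrt\varepsilon)$, so after dividing by $4\lambda$ the loss is $O(\lambda\operatorname{tr}V\sqrt\varepsilon)$.
\end{enumerate}
Because of the last item, $K$ should be taken as a polynomial in $\lambda$, $\operatorname{tr}V$, $\|a\|_1$, and $\|b\|_1$, for instance $K\coloneqq 20+2\|a\|_1+40\lambda(1+\operatorname{tr}V)$, rather than the linear candidate above. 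Its logarithm is still polynomial in $\log\lambda$ and the bit lengths of $V$, $a$, and $b$.

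To finish, let $\ell$ be the objective value of the repaired point $(x,z,s)$. Since $s\ge\|b+z_D\|_2$ and $\|z\|_2\le4\lambda$, Lemma~\ref{lem:soft-dual-form} gives $U_{V,\lambda}(x)\ge a^Tx+z_0+z_X^Tx-\|b+z_D\|_2-z^TVz/(4\lambda)\ge\ell$. Combining this with the repair bound,
\[
\ell\ge\max_{x'\in X}U_{V,\lambda}(x')-\varepsilon-K\sqrt\varepsilon\ge\max_{x'\in X}U_{V,\lambda}(x')-\kappa .
\]
For the running time, the program has $2d_X+d_D+2$ variables and six constraints, and its coefficients are $V/(4\lambda)$, $a$, $b$, $16\lambda^2$, and $W$. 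We also have $\log(1/\varepsilon)=2\log((K+1)/\kappa)$. Theorem~\ref{thm:bienstock} then gives the claimed polynomial bound, and the repair uses exact rational arithmetic plus square roots to precision $\sqrt\varepsilon$.

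The main obstacle is the repair-loss bound for the quadratic penalty term. It is the one genuinely new term, and the $\lambda$ factors must be tracked so that $K$ stays polynomial and $\log K$ is $O(\log\lambda+\log\operatorname{tr}V)$.
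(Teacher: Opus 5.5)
Your proposal is correct and follows essentially the same route as the paper: you take $\kappa$ to be a power of two, run the solver of Theorem~\ref{thm:bienstock} on Equation~\ref{eq:soft-qcqp} with $\varepsilon=(\kappa/(K+1))^2$, repair the output by scaling, rounding and choosing $s$, bound the objective loss term by term, and use $\ell\le U_{V,\lambda}(x)$ via Lemma~\ref{lem:soft-dual-form}. The only difference is bookkeeping: the scaling moves $\hat z$ by at most $\varepsilon$ and the rounding by at most $\sqrt\varepsilon$, so in fact $\|z-\hat z\|_2\le2\sqrt\varepsilon$ with no $\lambda$ factor, which is how the paper gets the additive constant $K=12+2\|a\|_1+8\lambda+6\operatorname{tr}V$; your looser $O(\lambda\sqrt\varepsilon)$ bound (which would also make your $s-\hat s$ term $O(\lambda\sqrt\varepsilon)$ rather than $O(\sqrt\varepsilon)$) only inflates $K$ to a product of $\lambda$ and $\operatorname{tr}V$, which is harmless since only $\log K$ enters the running time.
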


\begin{proof}
As in the proof of Lemma~\ref{lem:hard-approx-score}, we may assume that
$\kappa$ is a power of $2$. Let
$K\coloneqq12+2\|a\|_1+8\lambda+6\operatorname{tr}V$. We run the
algorithm of Theorem~\ref{thm:bienstock} on the program in
Equation~\ref{eq:soft-qcqp} with tolerance
$\varepsilon\coloneqq(\kappa/(K+1))^2$. It returns a point
$(\hat x,\hat z,\hat s)$ with rational coordinates that violates each
constraint by at most $\varepsilon$ and whose objective value is at
least the optimum minus $\varepsilon$. We repair this point as in the
proof of Lemma~\ref{lem:hard-approx-empty}. We scale $\hat x$ into the
unit ball and $\hat z$ into the ball of radius $4\lambda$, round their
coordinates toward zero to the grid whose spacing is the largest power
of $2$ that is at most $\sqrt\varepsilon/q$, and choose a rational $s$
with $\|b+z_D\|_2\le s\le\|b+z_D\|_2+\sqrt\varepsilon$. This gives $x$
and $z$ with rational coordinates, $\|x-\hat x\|_2\le2\sqrt\varepsilon$,
and $\|z-\hat z\|_2\le2\sqrt\varepsilon$. Since
$\|b+z_D\|_2\le\|b\|_1+4\lambda$ and $\sqrt\varepsilon\le1$, the point
$(x,z,s)$ satisfies every constraint of Equation~\ref{eq:soft-qcqp}.

The repair lowers the objective by at most $K\sqrt\varepsilon$. The
terms $a^Tx$ and $z_0$ change by at most $2\|a\|_1\sqrt\varepsilon$ and
$2\sqrt\varepsilon$. Since $\|z\|_2\le4\lambda$ and $\|\hat x\|_2\le2$,
the term $z_X^Tx$ changes by at most $(8\lambda+4)\sqrt\varepsilon$. As
in the proof of Lemma~\ref{lem:hard-approx-empty},
$s-\hat s\le6\sqrt\varepsilon$. Finally, $\|\hat z\|_2\le4\lambda+1$ and
the largest eigenvalue of $V$ is at most $\operatorname{tr}V$, so
\begin{equation*}
\frac{\left|z^TVz-\hat z^TV\hat z\right|}{4\lambda}
=\frac{\left|(z-\hat z)^TV(z+\hat z)\right|}{4\lambda}
\le\frac{2\sqrt\varepsilon\operatorname{tr}V\,(8\lambda+1)}{4\lambda}
\le6\operatorname{tr}V\sqrt\varepsilon .
\end{equation*}
Adding these bounds proves the claim.

Write $\ell$ for the objective value of $(x,z,s)$, which we compute
exactly. By Lemma~\ref{lem:soft-qcqp}, the optimum of the program is
$\max_{x'\in X}U_{V,\lambda}(x')$, so
\begin{equation*}
\ell\ge\max_{x'\in X}U_{V,\lambda}(x')-\varepsilon-K\sqrt\varepsilon\ge\max_{x'\in X}U_{V,\lambda}(x')-\kappa .
\end{equation*}
On the other hand, $s\ge\|b+z_D\|_2$ and $\|z\|_2\le4\lambda$, so
Lemma~\ref{lem:soft-dual-form} gives $\ell\le U_{V,\lambda}(x)$.

For the running time, the program has $2d_X+d_D+2$ variables and six
constraints. Its coefficients are the entries of $V$, $a$, and $b$ and
numbers computed from $\lambda$ and $\|b\|_1$. By
Theorem~\ref{thm:bienstock}, the solver runs in time polynomial in this
input size and $\log(1/\varepsilon)=2\log((K+1)/\kappa)$, which is
polynomial in $\log(1/\kappa)$, $\log\lambda$, $q$, and the bit lengths
of $V$ and $a$. As in the proof of Lemma~\ref{lem:hard-approx-empty},
the repair and the computation of $\ell$ take time polynomial in the
same quantities and in the bit length of the solver's output.
\end{proof}

At the start of each block, we choose the arm with the algorithm of
Lemma~\ref{lem:soft-approx-score}, with $\kappa\coloneqq1/T$. This
choice makes $\log(1/\kappa)=\log T$ in the running time and, as we show
next, adds at most $1$ to the regret.

The proof of Theorem~\ref{thm:soft-square-root-regret} uses the rule for
choosing arms only through the proof of
Lemma~\ref{lem:soft-outcome-regret}, and that proof uses it only in the
step $U_{V,\lambda}(x)\ge V^\star-c^2/(4\lambda)$. For an arm whose
score is within $\kappa$ of the maximum, the same step gives
$U_{V,\lambda}(x)\ge V^\star-c^2/(4\lambda)-\kappa$. The other steps,
namely the event of Lemma~\ref{lem:interval-noise}, the bound
$J\le J_{\max}'$, and Equation~\ref{eq:soft-block-cost}, depend only on
the rules for updating $V$ and ending a block, which we have not
changed. So the regret of a block of length $n$ grows by at most
$n\kappa$, and the high-probability bound of
Theorem~\ref{thm:soft-square-root-regret} grows by $T\kappa=1$. The
expected regret therefore has the same rate.

Finally, we bound the running time of the whole learner. As in
Appendix~\ref{app:hard-computation}, we assume that the observations
are rational, with bit length polynomial in $|\phi|$ and $T$.
Every matrix maintained by the learner satisfies
$I_q\preceq V\preceq(1+3T)I_q$, since $V$ starts at $I_q$, a completed
block of length $n$ adds the positive semidefinite matrix
$n\overline w\,\overline w^T$ of operator norm at most $3n$, and the
block lengths add up to at most $T$. So $\operatorname{tr}V\le q(1+3T)$,
and the grid spacings used in Lemma~\ref{lem:soft-approx-score} are
powers of $2$ whose exponents are polynomial in $q$, $\log T$, and the
bit length of $a$. Hence every stored matrix has rational entries of
polynomial bit length, and matrix inversion and the test
$np_V(x,\overline y_n)\ge1$ after each observation can be performed
exactly in polynomial time. Each block calls the algorithm of
Lemma~\ref{lem:soft-approx-score} once, and there are at most
$J_{\max}'+1$ blocks. With $\lambda=\lceil\sqrt T\rceil$ and
$\kappa=1/T$, each call takes time polynomial in $|\phi|$ and
$T$. Together with the regret bound above, this proves
Theorem~\ref{thm:efficient-upper-bound}.
}

\section{NP-hardness results}\label{app:np-hardness-proofs}

In this section we show that several natural variations of the linear robust bandits problem are NP-hard. Thus in some sense, the problem is
the hardest problem that's still tractable.

\subsection{Planning from learning}\label{app:hardness-planning-from-learning}

We first show that efficient planning can be reduced to efficient learning.
Most proofs below show that planning is already NP-hard. Due to the reduction
from planning to learning, this shows that learning is also NP-hard.

\begin{lemma}\label{lem:planning-from-learning}
  {
  Fix \draft{a robust bandits instance $\mathcal{M}_\phi=(X,D,H,r)$ from a
  parameterized family} and a known hypothesis
  $h\in H$ whose description has encoding length $|h|$.
  Suppose that, for every arm $x$ and every $\eta\in(0,1)$, there is a
  distribution $Q_{\eta} \left( x \right)\in h \left( x \right)$ satisfying
  $\mathbb{E}_{y\sim Q_{\eta} \left( x \right)} \left[ r \left( x , y \right) \right] \le v_{h} \left( x \right) + \eta$,
  from which one can sample in time polynomial in $|\phi|$, $|h|$,
  the encoding length of $x$, and $1/\eta$.
  Suppose also that, for some fixed $\alpha > 0$ and polynomial $P$, a
  polynomial-time learner guarantees
  }
  {$R_{T} \le P \left( | \phi | \right) T^{1 - \alpha}$}.
  {Then, for every $\varepsilon \in \left( 0 , 1 \right)$, one can compute in
  time $\operatorname{poly} \left( | \phi | , |h|, \frac{1}{\varepsilon} \right)$ an arm $\hat{x}$ such
  that with probability at least 2/3, $V_{h} - v_{h} \left( \hat{x} \right) \le \varepsilon$.
  }
\end{lemma}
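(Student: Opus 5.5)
We plan to simulate the learner against an adversary we control. Fix $h$ as the true hypothesis, and let the simulated adversary respond to every arm $x$ by sampling from $Q_{\eta}(x)$, with $\eta$ chosen below. Because the learner runs in polynomial time and every sample from $Q_\eta$ is drawn in time polynomial in $|\phi|$, $|h|$, the encoding length of $x$ and $1/\eta$, a run of $T$ rounds costs $\operatorname{poly}(|\phi|,|h|,T,1/\eta)$. The encoding length of the arms is at most the learner's running time. The guarantee $R_T\le P(|\phi|)T^{1-\alpha}$ holds for every true hypothesis and every adversary consistent with it. Applied to this run, it gives
\begin{equation*}
TV_h-\mathbb{E}\Bigl[\sum_{t=1}^T r(x_t,y_t)\Bigr]\le P(|\phi|)T^{1-\alpha}.
\end{equation*}

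Next we relate rewards to $v_h$. By the tower rule and the choice of $Q_\eta$, $\mathbb{E}[r(x_t,y_t)]=\mathbb{E}[\mathbb{E}_{y\sim Q_\eta(x_t)}r(x_t,y)]\le\mathbb{E}[v_h(x_t)]+\eta$. Hence
\begin{equation*}
\mathbb{E}\Bigl[\frac1T\sum_t\bigl(V_h-v_h(x_t)\bigr)\Bigr]\le P(|\phi|)T^{-\alpha}+\eta.
\end{equation*}
Let $\tau$ be uniform on $\{1,\dots,T\}$ and independent of the run. Then $\hat x\coloneqq x_\tau$ satisfies $\mathbb{E}[V_h-v_h(\hat x)]\le P(|\phi|)T^{-\alpha}+\eta$. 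The gap $V_h-v_h(\hat x)$ is nonnegative, so Markov's inequality gives $V_h-v_h(\hat x)\le3(P(|\phi|)T^{-\alpha}+\eta)$ with probability at least $2/3$. We take $\eta=\varepsilon/6$ and $T=\lceil(6P(|\phi|)/\varepsilon)^{1/\alpha}\rceil$. This $T$ is polynomial in $|\phi|$ and $1/\varepsilon$ because $\alpha$ is a fixed constant, and together the choices give the bound $\varepsilon$. We only need to run the learner for $\tau$ rounds, so we stop there and output $x_\tau$.

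The main obstacle is justifying that this simulated adversary is admissible and that the regret bound applies to it. The adversary is adaptive, in that it sees the learner's arm, and its distribution $Q_\eta(x)$ lies in $h(x)$. The paper's regret definition quantifies over all adversary strategies picking $P_t\in h^\star(x_t)$, so this adversary is covered as long as we read the learner's guarantee as holding for every adversary. A second point is that the expectation in $R_T$ includes the learner's internal randomness and the outcome sampling, and our simulation reproduces exactly this joint distribution. We also need the sampler's outputs to be valid inputs for the learner, that is, outcomes of polynomial encoding length, which holds because they are produced in polynomial time.
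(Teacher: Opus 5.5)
Your proposal is correct and follows the paper's proof essentially step for step. It uses the same simulated adversary drawing from $Q_{\eta}(x_t)$, the same choices $\eta=\varepsilon/6$ and $T=\lceil(6P(|\phi|)/\varepsilon)^{1/\alpha}\rceil$, a uniformly random round, and Markov's inequality on the nonnegative gap; the only differences are cosmetic (the paper writes $T=\lceil\max(1,(6P(|\phi|)/\varepsilon)^{1/\alpha})\rceil$ to rule out a zero horizon, and it runs all $T$ rounds instead of stopping at the sampled round, which does not change the output's distribution).
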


\draft{
\begin{proof}
Fix $\varepsilon \in \left( 0 , 1 \right)$, set $\eta \coloneqq \frac{\varepsilon}{6}$, and
simulate the learner on $\mathcal{M}_\phi$ for

\begin{equation*}
T \coloneqq \left\lceil \operatorname*{max} \left( 1 , {\left( 6 \frac{P \left( | \phi | \right)}{\varepsilon} \right)}^{\frac{1}{\alpha}} \right) \right\rceil
\end{equation*}

rounds, with true hypothesis $h$. Whenever the learner plays $x_{t}$, let
the adversary choose $P_{t} \coloneqq Q_{\eta} \left( x_{t} \right)$, and draw the outcome $y_{t}$
shown to the learner using the sampler. This is allowed because
$Q_{\eta} \left( x_{t} \right) \in h \left( x_{t} \right)$. Finally, choose $\hat{t}$ uniformly from
$\left\{ 1 , \ldots , T \right\}$ and return $\hat{x} \coloneqq x_{\hat{t}}$.

The guarantee on $Q_{\eta}$ gives
$V_{h} - v_{h} \left( x_{t} \right) \le V_{h} - \mathbb{E}_{y \sim P_{t}} \left[ r \left( x_{t} , y \right) \right] + \eta$
in every round. The expected reward of round $t$ is
$\mathbb{E} \left[ r \left( x_{t} , y_{t} \right) \right] = \mathbb{E} \left[ \mathbb{E}_{y \sim P_{t}} \left[ r \left( x_{t} , y \right) \right] \right]$,
so averaging over $\hat{t}$ and using the definition of $R_{T}$ for this
adversary policy gives

\begin{equation*}
\mathbb{E} \left[ V_{h} - v_{h} \left( \hat{x} \right) \right] = \frac{1}{T} \sum_{t = 1}^{T} \mathbb{E} \left[ V_{h} - v_{h} \left( x_{t} \right) \right] \le \frac{R_{T}}{T} + \eta \le P \left( | \phi | \right) T^{- \alpha} + \eta \le \frac{\varepsilon}{3} .
\end{equation*}

The gap $V_{h} - v_{h} \left( \hat{x} \right)$ is nonnegative, so Markov's inequality shows
that it exceeds $\varepsilon$ with probability at most $\frac{1}{3}$.
The horizon and the learner's output lengths are polynomial in
$|\phi|$ and $1/\varepsilon$, since $\alpha$ is fixed. The assumed
sampler therefore makes the simulation polynomial in
$|\phi|$, $|h|$, and $1/\varepsilon$.
\end{proof}
}

{In all our applications, the reward is linear and the hypothesis
\draft{constrains only the expected outcome. That is,}
$h(x)=\{P\in\Delta D:\mathbb{E}_{y\sim P}[y]\in K_h(x)\}$ for some set
$K_h(x)\subseteq D$. Then it suffices to compute an outcome $y\in K_h(x)$
with $r(x,y)\le v_h(x)+\eta$, since the point mass at $y$ lies in
$h(x)$ and has expected reward $r(x,y)$.}

Below we show that the ``smoothness'' of $X$ and $D$ are crucial for efficiency. It is easy to land in the NP-hard territory if we make either $X$ or $D$ a convex shape with edges and corners.

\subsection{D = simplex, X = ball}\label{app:hardness-simplex}

First we show that if, instead of a Euclidean ball, we allow $D$ to be a simplex, then even known hypothesis planning becomes NP-hard. Then, using the planning to learning reduction (Lemma~\ref{lem:planning-from-learning}), we conclude that learning is NP-hard.

\draft{We take $D$ to be the probability simplex
$\{y\in\mathbb{R}^{d_D}:y\ge0,\ \sum_{i}y_i=1\}$, which is determined by $d_D$.
So the parameters $\phi$ are those of
Section~\ref{sec:linear-imprecise-bandits} without the centre and radius
of $D$.}

\begin{theorem}
  {Consider the setting of Section~\ref{sec:linear-imprecise-bandits}
  with $D$ a simplex instead of a ball. Suppose that, for some fixed
  $\alpha>0$ and polynomial $P$, one randomized polynomial-time learner
  guarantees $R_T\le P(|\phi|)T^{1-\alpha}$ for \draft{all parameters
  $\phi$ and every true hypothesis $h^{\star}\in H$}. Then
  $\text{\normalfont NP}\subseteq\text{\normalfont BPP}$.
  If a deterministic learner achieves the same guarantee, then
  $\text{\normalfont P}=\text{\normalfont NP}$.}

 \label{thm:additive-simplex-np-hardness}

\end{theorem}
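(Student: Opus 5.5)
The plan is to show that \emph{planning} with a known hypothesis is NP-hard for this variant, and then invoke Lemma~\ref{lem:planning-from-learning}. I would reduce from MAX-CUT. Let $G=(\{1,\dots,n\},E)$ be a graph with $m=|E|$ edges, and let $N\in\mathbb{R}^{n\times m}$ be its signed incidence matrix (an arbitrary orientation per edge), so that $NN^T=L$ is the Laplacian. Then for $s\in\{\pm1\}^n$ we have $\|N^Ts\|_2^2=s^TLs=4\,\mathrm{cut}(s)$. By duality of the $\ell_2$ and $\ell_\infty/\ell_1$ norms,
\[
\max_{\|x\|_2\le1}\|Nx\|_1=\max_{s\in\{\pm1\}^n}\|N^Ts\|_2=2\sqrt{\mathrm{MAXCUT}(G)} .
\]
So it suffices to build an instance in which $V_h=c^{-1}\max_{\|x\|_2\le1}\|Nx\|_1$ for a polynomially bounded rational $c$. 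We can then decide $\mathrm{MAXCUT}\ge k$ from a $V_h$-approximation of additive accuracy $\varepsilon\approx 1/(4c\sqrt m)$, because $\sqrt k-\sqrt{k-1}\ge 1/(2\sqrt k)$ and cut values are integers.

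\textbf{The instance.} Take $d_X=m$, $X$ the unit ball, and $D$ the simplex in $\mathbb{R}^{2n+1}$ with coordinates $y=(u,v,\sigma)$. Use the $n$ affine constraints $u-v=Nx/c$ with $c\coloneqq 2\sqrt{nm}\cdot 2$, which guarantees $\|Nx\|_1/c\le 1/2$ on $X$. Set $a=0$ and take $b$ to be the indicator of the $u$ and $v$ coordinates, so that $r(x,y)=\sum_i(u_i+v_i)$. For each $x$, $K_h(x)$ is nonempty: take $u=(Nx/c)_+$, $v=(Nx/c)_-$, and the slack $\sigma=1-\|Nx\|_1/c\ge 1/2$. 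The standard LP identity $\min\{\mathbf{1}^T(u+v):u-v=z,\ u,v\ge0\}=\|z\|_1$ is attained at this point, and the slack $\sigma$ absorbs the rest, so $v_h(x)=\|Nx\|_1/c$. In addition, the minimizer is computable exactly in polynomial time, and its point mass lies in $h(x)$. This provides the sampler required in Lemma~\ref{lem:planning-from-learning} (as noted after that lemma), with $\eta$ even equal to $0$. All data are rational with polynomial size. $C_r\le 1$, so $C_{\max}=1$.

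\textbf{Transversality, which I expect to be the main obstacle.} We need $S^{-1}$ to be polynomial in $n,m$. Given $p=(u,v,\sigma)$ on the affine space $u-v=Nx/c$ but outside the simplex, I would construct a nearby point of $K_h(x)$ explicitly. First replace each pair $(u_i,v_i)$ by the canonical pair $((u_i-v_i)_+,(u_i-v_i)_-)$; this preserves $u-v$. Then reset $\sigma$ to $1-\sum_i(u_i+v_i)$, which is at least $1/2$. The displacement must be bounded by $\mathrm{poly}(n)\cdot\mathrm{dist}(p,D)$. Negative parts of coordinates are at most $\mathrm{dist}(p,D)$. The deviation of the coordinate sum from $1$ is at most $\sqrt{2n+1}\,\mathrm{dist}(p,D)$. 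The reduction of a pair $(u_i,v_i)$ to canonical form moves it by $\min(u_i,v_i)$-type amounts; these amounts must be controlled by the total mass excess. This works because the canonical pairs use mass at most $1/2$ while $p$ is close to the simplex, so any large common part of $(u_i,v_i)$ forces either a large sum excess or negative coordinates. I expect this Hoffman-type bound to give $S^{-1}=O(n)$ after some care; the slack margin $\sigma\ge1/2$ is exactly what keeps the affine space away from tangency.

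\textbf{Concluding the reduction.} Run the algorithm of Lemma~\ref{lem:planning-from-learning} with $\varepsilon=1/(8c\sqrt m)$. This produces $\hat x$ with $V_h-v_h(\hat x)\le\varepsilon$ with probability $2/3$, in time polynomial in $n,m$. We then compute $v_h(\hat x)=\|N\hat x\|_1/c$ exactly and compare it with the threshold $(\sqrt k-\varepsilon')\cdot 2/c$ for a suitable $\varepsilon'$ lying between the gaps. This yields a BPP algorithm for MAX-CUT, so $\mathrm{NP}\subseteq\mathrm{BPP}$. For a deterministic learner, the simulation in Lemma~\ref{lem:planning-from-learning} is deterministic except for the uniform choice of $\hat t$. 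I would instead check all $T$ arms $x_1,\dots,x_T$ and keep the best one; since their average gap is at most $\varepsilon/3$, the best arm satisfies the bound deterministically, which gives $\mathrm{P}=\mathrm{NP}$.
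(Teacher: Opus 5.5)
Your reduction is the same as the paper's: MAX-CUT, the simplex with coordinates $(u,v,\sigma)$, the constraint $u-v=Nx/c$, the reward $\sum_i(u_i+v_i)$, $v_h(x)=\|Nx\|_1/c$, and then Lemma~\ref{lem:planning-from-learning}. However, the transversality step fails as you propose it. It also cannot be skipped, because the learner's guarantee only covers $h^\star\in H$ for a cutoff $S$ that is part of $\phi$. So $S$ must be independent of $G$, with $S^{-1}$ polynomial.

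The problem is that canonicalizing the pairs can move $p$ by a constant amount while $\operatorname{dist}(p,D)$ is arbitrarily small. Take $x=0$, so the constraint reads $u=v$. Let $u_1=v_1=\frac12$, all other $u_i=v_i=0$, and $\sigma=\epsilon$. This $p$ satisfies the constraint, lies outside $D$, and is within $\epsilon$ of $D$. Yet your repair sends it to the point with $u=v=0$ and $\sigma=1$, which is at distance more than $1$ from $p$. Your claim that a large common part $\min(u_i,v_i)$ forces a large mass excess or a negative coordinate is false, because that mass can come out of $\sigma$. The bound $\sigma\ge\frac12$ holds for the canonical point, not for $p$. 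The condition itself is not violated here, since $p/(1+\epsilon)\in K_h(0)$; it is your witness that is too far away.

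The missing idea is to repair only the infeasibility instead of canonicalizing. The paper first projects onto $D$ and then fixes the constraint by mixing. Write $\Pi(u,v,\sigma)\coloneqq u-v$ and $t\coloneqq Nx/c$. Let $p'$ be a nearest point of $D$ to $p$, and put $e\coloneqq t-\Pi p'=\Pi(p-p')$, so that $a\coloneqq\|e\|_1\le\sqrt{2n}\,\|p-p'\|_2$. If $a>0$, the margin $\|t\|_1\le\frac12$ gives $\|t+e/(2a)\|_1\le1$, so some $w\in D$ has $\Pi w=t+e/(2a)$. Then $(p'+2aw)/(1+2a)$ lies in $K_h(x)$, within $\|p-p'\|_2+2\sqrt2\,a\le(1+4\sqrt n)\operatorname{dist}(p,D)$ of $p$. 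This yields the explicit cutoff $S=1/(1+4\lceil\sqrt n\rceil)$.

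A repair closer to yours also works:
\begin{enumerate}
\item Raise both entries of a pair only by the negative part of $\min(u_i,v_i)$, and clip $\sigma$ at $0$.
\item Remove any mass excess proportionally from $\sigma$ and the common parts, or add any deficit to $\sigma$.
\end{enumerate}
Each step moves $p$ by $O(\sqrt n)\operatorname{dist}(p,D)$.

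Separately, $c=4\sqrt{nm}$ is irrational in general, contrary to your claim that all data are rational. You need rationality to compute the minimizing outcome and $v_h(\hat x)$ exactly, so use a rational scale such as $c=4nm$ (the paper takes $\gamma=1/(2mn)$). Your handling of the deterministic case, evaluating all $T$ played arms and keeping the best, is correct.
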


\begin{proof}
We reduce from MAX-CUT\draft{, which asks, given a graph $G$ and an integer $k$, whether some partition of the vertices into two sides has at least $k$ edges between the sides. It is NP-complete, even for unweighted graphs~\citep{karp1972reducibility,garey1976simplified}}. Let $\left( G , k \right)$ be an
instance, where $G = \left( V , E \right)$ has $n$ vertices and $m \ge 1$ edges and
$k \in \left\{ 1 , \ldots , m \right\}$. Orient the edges arbitrarily and let
$B_{G} \in \mathbb{R}^{m \times n}$ be the resulting edge-vertex incidence matrix, i.e., if
$e = \left\{ i , j \right\}$ is oriented from $j$ to $i$, then row $e$ has $1$ in column $i$,
$- 1$ in column $j$, and zeros elsewhere. Put

\begin{equation*}
X \coloneqq \left\{ x \in \mathbb{R}^{m} : {\left\Vert x \right\Vert}_{2} \le 1 \right\} .
\end{equation*}

Write an outcome as $y = \left( p , q , s \right) \in \mathbb{R}^{n} \times \mathbb{R}^{n} \times \mathbb{R}$ and take

\begin{equation*}
D \coloneqq \left\{ \left( p , q , s \right) : p_{i} \ge 0 , q_{i} \ge 0 \text{\normalfont  for } i = 1 , \ldots , n , s \ge 0 , \sum_{i = 1}^{n} \left( p_{i} + q_{i} \right) + s = 1 \right\} .
\end{equation*}

\draft{Take $d_C=n$, the cutoff $S\coloneqq1/(1+4\lceil\sqrt n\rceil)$, and
$C_{\max}\coloneqq1$, and let $H$ be the hypothesis class of
Section~\ref{sec:linear-imprecise-bandits} with these sets $X$ and $D$. For a scale $\gamma>0$ to be chosen later, define the true
hypothesis $h_G$ by}

\begin{equation*}
\draft{K_{h_G} \left( x \right) \coloneqq \left\{ \left( p , q , s \right) \in D : p - q = \gamma B_{G}^{T} x \right\} .}
\end{equation*}

\draft{Its coefficients are
$B_{h_G}=-\gamma B_G^T$, $C_{h_G}=(I_n\ -I_n\ 0)$, and $d_{h_G}=0$,
as in Equation~\ref{eq:setting-compatible-set}.}

Let the reward be

\begin{equation*}
r \left( x , \left( p , q , s \right) \right) \coloneqq \sum_{i = 1}^{n} \left( p_{i} + q_{i} \right) .
\end{equation*}

\draft{This reward takes values in $[0,1]$ on $D$, so $C_r\le C_{\max}$.
For fixed $n,m$, the parameters $\phi$, and hence the sets $X$ and $D$,
the reward, and the class $H$, do not depend on the graph. Only the true hypothesis $h_G$ does.
We next show that $(G,k)$ is a yes-instance of MAX-CUT if and only if
there is an arm $x\in X$ satisfying}

\begin{equation*}
v_{h_G} \left( x \right) \ge 2 \gamma \sqrt{k} .
\end{equation*}

A vector $\sigma \in {\left\{ - 1 , 1 \right\}}^{n}$ represents a partition of the vertices of $G$
into the two sides $\left\{ i : \sigma_{i} = 1 \right\}$ and $\left\{ i : \sigma_{i} = - 1 \right\}$. Let
$c_{G} \left( \sigma \right)$ denote the number of edges between these sides.

\begin{lemma}\label{lem:simplex-cut-norm}
  For every $\sigma \in {\left\{ - 1 , 1 \right\}}^{n}$,

\begin{equation*}
{\left\Vert B_{G} \sigma \right\Vert}_{2} = 2 \sqrt{c_{G} \left( \sigma \right)} .
\end{equation*}

\end{lemma}

\begin{proof}
Fix $\sigma \in {\left\{ - 1 , 1 \right\}}^{n}$. For every edge $e = \left\{ i , j \right\}$,

\begin{equation*}
{\left( B_{G} \sigma \right)}_{e}^{2} = {\left( \sigma_{i} - \sigma_{j} \right)}^{2} = \begin{cases}
4 & \text{\normalfont if } \sigma_{i} \ne \sigma_{j} \\
0 & \text{\normalfont otherwise}
\end{cases} .
\end{equation*}

Summing over $e \in E$ and taking the square root proves the identity.
\end{proof}

For the forward implication, suppose $\left( G , k \right)$ is a yes-instance. Choose
$\sigma$ such that $c_{G} \left( \sigma \right) \ge k$, and define the arm

\begin{equation*}
x_{\sigma} \coloneqq B_{G} \frac{\sigma}{{\left\Vert B_{G} \sigma \right\Vert}_{2}} .
\end{equation*}

Since $c_{G} \left( \sigma \right) \ge k \ge 1$, Lemma~\ref{lem:simplex-cut-norm} shows that $x_{\sigma}$ is
well-defined. Moreover, we get the following inequality.

\begin{lemma}\label{lem:simplex-duality}
  For every $\sigma \in {\left\{ - 1 , 1 \right\}}^{n}$ such that $B_{G} \sigma \ne 0$, we have
  ${\left\Vert B_{G}^{T} x_{\sigma} \right\Vert}_{1} \ge {\left\Vert B_{G} \sigma \right\Vert}_{2}$. Moreover, equality holds when
  $\sigma$ maximizes ${\left\Vert B_{G} \sigma \right\Vert}_{2}$ over ${\left\{ - 1 , 1 \right\}}^{n}$; in particular,
  equality holds for at least one $\sigma$.

\end{lemma}

\draft{
\begin{proof}
\draft{
Choosing the sign of each coordinate gives
}

\draft{

\begin{equation*}
{\left\Vert B_{G}^{T} x_{\sigma} \right\Vert}_{1} = \operatorname*{max}_{\tau \in {\left\{ - 1 , 1 \right\}}^{n}} \tau^{T} B_{G}^{T} x_{\sigma} = \operatorname*{max}_{\tau \in {\left\{ - 1 , 1 \right\}}^{n}} x_{\sigma}^{T} B_{G} \tau .
\end{equation*}

}

\draft{
Taking $\tau = \sigma$ yields
${\left\Vert B_{G}^{T} x_{\sigma} \right\Vert}_{1} \ge x_{\sigma}^{T} B_{G} \sigma = {\left\Vert B_{G} \sigma \right\Vert}_{2}$.
Now let $\sigma^{\star}$ maximize ${\left\Vert B_{G} \sigma \right\Vert}_{2}$. Since $m \ge 1$, this maximum
is positive, so $x_{\sigma^{\star}}$ is well-defined. For every
$\tau \in {\left\{ - 1 , 1 \right\}}^{n}$, Cauchy--Schwarz gives
}

\draft{

\begin{equation*}
x_{\sigma^{\star}}^{T} B_{G} \tau \le {\left\Vert x_{\sigma^{\star}} \right\Vert}_{2} {\left\Vert B_{G} \tau \right\Vert}_{2} \le {\left\Vert B_{G} \sigma^{\star} \right\Vert}_{2} .
\end{equation*}

}

\draft{
Taking the maximum over $\tau$ gives the reverse inequality, proving
equality.
}
\end{proof}
}

The lemma above gives us a way to link properties of cuts with properties of arms. The missing ingredient is to show that the value of an arm has something to do with ${\left\Vert B_{G}^{T} x \right\Vert}_{1}$. We show this in the next two lemmas.

By construction, $x_{\sigma} \in X$. \draft{Write
$K(x)\coloneqq K_{h_G}(x)$ for the feasible outcomes under this true hypothesis.}

\begin{lemma}\label{lem:simplex-l1-bound}
  For every $x \in X$, the vector $t \coloneqq \gamma B_{G}^{T} x$ satisfies
  \draft{${\left\Vert t \right\Vert}_{1} \le 2 \gamma \sqrt{m}$}. Therefore, if
  \draft{$\gamma \le \frac{1}{2 \sqrt{m}}$}, then ${\left\Vert t \right\Vert}_{1} \le 1$.

\end{lemma}

\draft{
\begin{proof}
\draft{
Each edge coordinate $x_{e}$ contributes $x_{e}$ and $- x_{e}$ to two
coordinates of $B_{G}^{T} x$ before contributions from different edges are
added. The triangle inequality therefore gives
}

\draft{

\begin{equation*}
{\left\Vert t \right\Vert}_{1} = \gamma {\left\Vert B_{G}^{T} x \right\Vert}_{1} \le 2 \gamma {\left\Vert x \right\Vert}_{1} \le 2 \gamma \sqrt{m} {\left\Vert x \right\Vert}_{2} \le 2 \gamma \sqrt{m} ,
\end{equation*}

}

\draft{
where the second inequality is Cauchy--Schwarz and the third uses
${\left\Vert x \right\Vert}_{2} \le 1$.
}
\end{proof}
}

\begin{lemma}\label{lem:simplex-arm-value}
  Suppose \draft{$\gamma \le \frac{1}{2 \sqrt{m}}$}. For each arm $x \in X$, set
  $t \coloneqq \gamma B_{G}^{T} x$. Then $K \left( x \right)$ is nonempty, and
  $v_{h_G} \left( x \right) = \operatorname*{min}_{y \in K \left( x \right)} r \left( x , y \right) = {\left\Vert t \right\Vert}_{1} = \gamma {\left\Vert B_{G}^{T} x \right\Vert}_{1}$. Moreover, the minimum is attained at the outcome $y = \left( p , q , s \right)$ given by
  $p_{i} \coloneqq \operatorname*{max} \left( t_{i} , 0 \right)$, $q_{i} \coloneqq \operatorname*{max} \left( - t_{i} , 0 \right)$, and $s \coloneqq 1 - {\left\Vert t \right\Vert}_{1}$.

\end{lemma}

\draft{
\begin{proof}
\draft{
Fix $x \in X$ and set $t \coloneqq \gamma B_{G}^{T} x$. Lemma~\ref{lem:simplex-l1-bound}
gives ${\left\Vert t \right\Vert}_{1} \le 1$. For the outcome in the statement, $p , q , s$ are
nonnegative, $p - q = t$, and
}

\draft{

\begin{equation*}
\sum_{i = 1}^{n} \left( p_{i} + q_{i} \right) + s = {\left\Vert t \right\Vert}_{1} + 1 - {\left\Vert t \right\Vert}_{1} = 1 .
\end{equation*}

}

\draft{
Thus this outcome belongs to $K \left( x \right)$. For any $\left( p , q , s \right) \in K \left( x \right)$,
nonnegativity gives $p_{i} + q_{i} \ge \left| p_{i} - q_{i} \right| = \left| t_{i} \right|$ for every $i$, and hence
$r \left( x , \left( p , q , s \right) \right) \ge {\left\Vert t \right\Vert}_{1}$. The outcome in the statement attains equality in
every coordinate, so its reward is the minimum.
}
\end{proof}
}

\draft{Given $G$ and an arm $x$, we can therefore compute both its value
and a minimizing outcome, as needed for Lemma~\ref{lem:planning-from-learning}.}

For the final step, apply Lemma~\ref{lem:simplex-arm-value}, Lemma~\ref{lem:simplex-duality}, and
Lemma~\ref{lem:simplex-cut-norm} to obtain

\begin{equation*}
v_{h_G} \left( x_{\sigma} \right) = \gamma {\left\Vert B_{G}^{T} x_{\sigma} \right\Vert}_{1} \ge \gamma {\left\Vert B_{G} \sigma \right\Vert}_{2} = 2 \gamma \sqrt{c_{G} \left( \sigma \right)} \ge 2 \gamma \sqrt{k} .
\end{equation*}

For the reverse implication, suppose there is an arm $x \in X$ satisfying
$v_{h_G} \left( x \right) \ge 2 \gamma \sqrt{k}$. By $\ell_{1}$--$\ell_{\infty}$ duality, choose
$\sigma \in {\left\{ - 1 , 1 \right\}}^{n}$ such that

\begin{equation*}
{\left\Vert B_{G}^{T} x \right\Vert}_{1} = \sigma^{T} B_{G}^{T} x .
\end{equation*}

Using Lemma~\ref{lem:simplex-arm-value} and Lemma~\ref{lem:simplex-cut-norm},

\begin{equation*}
2 \gamma \sqrt{k} \le v_{h_G} \left( x \right) = \gamma {\left\Vert B_{G}^{T} x \right\Vert}_{1} = \gamma x^{T} B_{G} \sigma \le \gamma {\left\Vert x \right\Vert}_{2} {\left\Vert B_{G} \sigma \right\Vert}_{2} \le 2 \gamma \sqrt{c_{G} \left( \sigma \right)} .
\end{equation*}

Since $\gamma > 0$, this implies $c_{G} \left( \sigma \right) \ge k$, so $\left( G , k \right)$ is a yes-instance.
This proves the claimed equivalence.

To apply Lemma~\ref{lem:planning-from-learning}, we need to show that \emph{approximate}
planning is hard. Although the lemma returns only an approximate optimizer,
this is sufficient because the maximum cut size $M_{G}$ is an integer between
$0$ and $m$, \draft{and $V_{h_G} = 2 \gamma \sqrt{M_{G}}$}. Taking $\gamma \coloneqq \frac{1}{2 m n}$ ensures that every
$K_{h_G} \left( x \right)$ is nonempty and that \draft{the values of $V_{h_G}$ for consecutive values of
$M_G$} are separated by
at least $\frac{1}{2 m^{2} n}$. Thus an additive approximation with error
$\varepsilon \coloneqq \frac{1}{6 m^{2} n}$ distinguishes $M_{G} \ge k$ from $M_{G} \le k - 1$.

\draft{Both $|\phi|$ and $|h_G|$ are polynomial in $n,m$. Run the learner on this instance,
using Lemma~\ref{lem:simplex-arm-value} to supply
minimizing outcomes for $h_G$. Lemma~\ref{lem:planning-from-learning}
then returns an approximately optimal arm in polynomial time.}
We can therefore decide MAX-CUT
by comparing the returned arm's value with the midpoint between
$2 \gamma \sqrt{k - 1}$ and $2 \gamma \sqrt{k}$, computed to sufficient precision.
Hence Lemma~\ref{lem:planning-from-learning} implies that a randomized polynomial-time
learner would place MAX-CUT in $\text{\normalfont BPP}$, and therefore that
$\text{\normalfont NP} \subseteq \text{\normalfont BPP}$. A deterministic learner would instead imply
$\text{\normalfont P} = \text{\normalfont NP}$.
\draft{Finally, the constructed hypothesis satisfies the transversality
condition with constant $(1+4\sqrt n)^{-1}\ge S$, so $h_G\in H$. To see this, fix $x$ and put
$t=\gamma B_G^Tx$, so $\|t\|_1\le2\gamma\sqrt m\le1/2$ by
Lemma~\ref{lem:simplex-l1-bound} and the choice of $\gamma$.
For any $u$ with $C_{h_G}u=t$, let $v$ be a nearest point in $D$, and set
$e=t-C_{h_G}v$ and $a=\|e\|_1\le\sqrt{2n}\|u-v\|_2$.
If $a=0$, then $v\in K(x)$. Otherwise,
$\|t+e/(2a)\|_1\le1$, so there is a $w\in D$ with
$C_{h_G}w=t+e/(2a)$. The point
$v'=(v+2aw)/(1+2a)$ lies in $K(x)$.
Since $D$ has diameter $\sqrt2$, we obtain
\begin{equation*}
\operatorname{dist}(u,K(x))
\le\|u-v\|_2+\frac{2a\sqrt2}{1+2a}
\le(1+4\sqrt n)\operatorname{dist}(u,D),
\end{equation*}
which proves the condition.}
\end{proof}

\subsection{A Euclidean outcome ball and a polytope of arms}\label{app:hardness-polytope}

\draft{
We now keep the Euclidean outcome ball and the affine constraints of
Section~\ref{sec:linear-imprecise-bandits}, but allow the arm set to be a polytope given by inequalities with rational coefficients.
The parameters $\phi$ are those of Section~\ref{sec:linear-imprecise-bandits},
with these inequalities in place of the centre and radius of $X$.
Planning is NP-hard even when the hypothesis is known and the arm set is
the cube. The cube has only $2 n$ defining inequalities, but $2^{n}$ vertices.
}

\begin{theorem}\label{thm:additive-polytope-np-hardness}
  \draft{
  Consider the setting of Section~\ref{sec:linear-imprecise-bandits} with
  $X$ a polytope instead of a ball. If, for some fixed $\alpha > 0$ and
  polynomial $P$, one randomized polynomial-time learner
  guarantees $R_{T} \le P \left( | \phi | \right) T^{1 - \alpha}$ for all
  parameters $\phi$ and every true hypothesis $h^{\star} \in H$, then
  $\text{\normalfont NP} \subseteq \text{\normalfont BPP}$. If a deterministic learner achieves the same
  regret guarantee, then $\text{\normalfont P} = \text{\normalfont NP}$.
  }

\end{theorem}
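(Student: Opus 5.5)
We reduce from MAX-CUT, following the proof of Theorem~\ref{thm:additive-simplex-np-hardness}: we show that approximate planning for a known hypothesis is NP-hard, and then conclude with Lemma~\ref{lem:planning-from-learning}. The new ingredient is the geometry of a Euclidean ball. Minimizing a linear reward over the slice of a ball cut out by an affine subspace gives a value that depends only on how far the slice is from the centre. With a suitable hypothesis, maximizing this value becomes maximization of a convex quadratic over the cube, and that is hard.

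Let $(G,k)$ be an instance with $n$ vertices, $m\ge1$ edges and incidence matrix $B_G\in\mathbb R^{m\times n}$. We make the following choices.
\begin{itemize}
\item Arms: $X=[-1,1]^n$, given by $2n$ inequalities.
\item Outcomes: $D$ is the unit ball in $\mathbb R^{m+1}$, and we write $y=(u,s)$ with $u\in\mathbb R^m$, $s\in\mathbb R$.
\item Reward: $r(x,(u,s))=s$, so $a=0$, $b=e_{m+1}$ and $C_{\max}=2$.
\item True hypothesis: $K_{h_G}(x)=\{(u,s)\in D:u=\gamma B_Gx\}$, so $C=(I_m\ 0)$, $B=-\gamma B_G$ and $d=0$.
\end{itemize}
For $x\in X$ each row of $B_Gx$ has absolute value at most $2$, so $\|B_Gx\|_2\le2\sqrt m$. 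Taking $\gamma=1/(4m)$ gives $\|u\|_2\le1/2$. Hence every slice is a nondegenerate segment, and
\begin{equation*}
v_{h_G}(x)=-\sqrt{1-\gamma^2\|B_Gx\|_2^2}.
\end{equation*}
This value is increasing in $\|B_Gx\|_2^2$, which is a convex function of $x$. Its maximum over the cube is therefore attained at a vertex $\sigma\in\{-1,1\}^n$. By Lemma~\ref{lem:simplex-cut-norm}, $\|B_G\sigma\|_2^2=4c_G(\sigma)$, so
\begin{equation*}
V_{h_G}=-\sqrt{1-4\gamma^2M_G},
\end{equation*}
where $M_G$ is the maximum cut size. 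The argument $4\gamma^2M_G$ stays in $[0,1/4]$, where the square root has derivative bounded above and below. So consecutive values of $M_G$ give values of $V_{h_G}$ separated by at least $\Omega(\gamma^2)=\Omega(m^{-2})$. An additive $\varepsilon=\Theta(m^{-2})$ approximation then decides whether $M_G\ge k$. To decide, we compute $v_{h_G}(\hat x)$ for the returned arm to sufficient precision and compare it with the midpoint.

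Next we check membership in $H$. The slice is nonempty for every arm. For transversality, fix $x$ and a point $p=(u,s)$ on the vertical line through $u$ with $p\notin D$. Write $\rho=\sqrt{1-\|u\|_2^2}\ge\sqrt3/2$. Then
\begin{equation*}
\operatorname{dist}(p,K)=|s|-\rho,\qquad \operatorname{dist}(p,D)=\sqrt{\|u\|_2^2+s^2}-1.
\end{equation*}
Near the slice endpoints the ratio of these distances is governed by the angle between the line and the sphere. Since $\|u\|_2\le1/2$ this angle is bounded away from tangency, and an elementary calculation, which we still have to carry out, should give a ratio of at least $1/2$ everywhere. We therefore take $S=1/2$. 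All parameters are polynomial in $n$ and $m$, and only $h_G$ depends on the graph.

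The remaining hypothesis of Lemma~\ref{lem:planning-from-learning} is a polynomial-time sampler for a distribution that is $\eta$-close to optimal. The exact minimizer $(\gamma B_Gx,-\rho)$ is irrational. Instead we take the point mass at $(\gamma B_Gx,-\tilde\rho)$, where $\tilde\rho$ is a rational with $\rho-\eta\le\tilde\rho\le\rho$, which we compute by bisection. This point lies in $K_{h_G}(x)$ and its reward is within $\eta$ of $v_{h_G}(x)$. Lemma~\ref{lem:planning-from-learning} then turns a learner into a randomized polynomial-time procedure for MAX-CUT, giving $\mathrm{NP}\subseteq\mathrm{BPP}$. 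A deterministic learner gives $\mathrm{P}=\mathrm{NP}$. I expect the main obstacle to be the transversality constant together with the precision bookkeeping: checking that $S=1/2$ holds uniformly, and that the $\eta$-rounding and the evaluation of $v_{h_G}(\hat x)$ fit inside the $\Theta(m^{-2})$ gap.
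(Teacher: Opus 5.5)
Your proposal is correct and is essentially the paper's proof. It uses the same MAX-CUT construction ($X=[-1,1]^n$, $D$ the unit ball, $r=s$, $C=(I_m\ 0)$, $B=-\gamma B_G$, $\gamma=1/(4m)$, $S=1/2$), the same value formula and $\Omega(m^{-2})$ gap, and the same rational bisection sampler. The differences are cosmetic: you threshold $v_{h_G}(\hat x)$ where the paper rounds $\hat x$ coordinatewise to a sign vector, and you invoke convexity over the cube where the paper argues one coordinate at a time.

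The transversality step you left open closes exactly as in the paper. Writing $\sqrt{\|u\|_2^2+s^2}-1=(s^2-\rho^2)/(\sqrt{\|u\|_2^2+s^2}+1)$, the ratio $\operatorname{dist}(p,D)/\operatorname{dist}(p,K)$ equals $(|s|+\rho)/(\sqrt{\|u\|_2^2+s^2}+1)$. This is at least $\rho\ge\sqrt3/2\ge S$, because the inequality $(|s|+\rho)/(\sqrt{\|u\|_2^2+s^2}+1)\ge\rho$ reduces to $(1-\rho^2)(s^2-\rho^2)\ge0$.
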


\draft{
\begin{proof}
\draft{
We again reduce from MAX-CUT. Let $G = \left( V , E \right)$ have $n$ vertices and
$m \ge 1$ edges, and let $B_{G} \in \mathbb{R}^{m \times n}$ be its edge-vertex
incidence matrix, as in Appendix~\ref{app:hardness-simplex}. Put
}

\draft{

\begin{equation*}
\gamma \coloneqq \frac{1}{4 m} , \quad X \coloneqq {\left[ - 1 , 1 \right]}^{n} .
\end{equation*}

}

\draft{
Write outcomes as $y = \left( u , s \right) \in \mathbb{R}^{m} \times \mathbb{R}$ and take
}

\draft{

\begin{equation*}
D \coloneqq \left\{ \left( u , s \right) : {\left\Vert u \right\Vert}_{2}^{2} + s^{2} \le 1 \right\} .
\end{equation*}

}

\draft{
Take $d_X=n$, $d_D=m+1$, $d_C=m$, the cutoff $S\coloneqq\frac{1}{2}$, and
$C_{\max}\coloneqq2$, let $H$ be the hypothesis class of
Section~\ref{sec:linear-imprecise-bandits} with these sets $X$ and $D$, and let
$r \left( x , \left( u , s \right) \right) \coloneqq s$, so that $C_r=2$.
The parameters $\phi$, and hence the sets $X$ and $D$, the reward $r$, and
the class $H$, depend only on $n,m$, not on $G$. The graph determines only the true hypothesis. We let $h_G$ be the
hypothesis with coefficients $C_{h_G}=\left[I_m\ \ 0\right]$,
$B_{h_G}=-\gamma B_G$, and $d_{h_G}=0$. These coefficients have total
encoding length polynomial in $n,m$, and
}

\draft{

\begin{equation*}
K_{h_G} \left( x \right) = \left\{ \left( u , s \right) \in D : u = \gamma B_G x \right\} .
\end{equation*}

}

\draft{
Every $x \in X$ satisfies
}

\draft{

\begin{equation*}
{\left\Vert B_{G} x \right\Vert}_{2}^{2} = \sum_{\left\{ i , j \right\} \in E} {\left( x_{i} - x_{j} \right)}^{2} \le 4 m ,
\end{equation*}

}

\draft{
so
}

\draft{

\begin{equation*}
{\left\Vert \gamma B_{G} x \right\Vert}_{2}^{2} \le \frac{1}{4 m} \le \frac{1}{4} .
\end{equation*}

}

\draft{
Thus $K_{h_G} \left( x \right)$ is nonempty for every $x \in X$, so $h_G$
belongs to $H$. Minimizing $s$ over this set gives
}

\draft{

\begin{equation*}
v_{h_G} \left( x \right) = - \sqrt{1 - \gamma^{2} {\left\Vert B_{G} x \right\Vert}_{2}^{2}} .
\end{equation*}

}

\draft{
Set $q \left( x \right) \coloneqq {\left\Vert B_{G} x \right\Vert}_{2}^{2}$. With every coordinate except $x_{i}$ fixed,
$q \left( x \right)$ is a convex quadratic in $x_{i}$, so replacing $x_{i}$ by one of the
endpoints of $\left[ - 1 , 1 \right]$ does not decrease $q$. Applying this replacement one
coordinate at a time produces a sign vector $\sigma \in {\left\{ - 1 , 1 \right\}}^{n}$ with
$q \left( \sigma \right) \ge q \left( x \right)$. By Lemma~\ref{lem:simplex-cut-norm},
$q \left( \sigma \right) = 4 c_{G} \left( \sigma \right)$, where $c_{G} \left( \sigma \right)$
is the number of edges between the two sides of $\sigma$. Consequently, the maximum cut
size $M_G$ satisfies
}

\draft{

\begin{equation*}
\operatorname*{max}_{x \in {\left[ - 1 , 1 \right]}^{n}} q \left( x \right) = 4 M_{G} .
\end{equation*}

}

\draft{
Since $v_{h_G} \left( x \right)$ is strictly increasing in $q \left( x \right)$, if the maximum cut has
size $k$, then $V_{h_G} = V_{k}$, where
}

\draft{

\begin{equation*}
V_{k} = - \sqrt{1 - \frac{k}{4 m^{2}}} .
\end{equation*}

}

\draft{
For $k = 1 , \ldots , m$, consecutive possible values satisfy
}

\draft{

\begin{equation*}
V_{k} - V_{k - 1} = \frac{\frac{1}{4 m^{2}}}{\sqrt{1 - \frac{k - 1}{4 m^{2}}} + \sqrt{1 - \frac{k}{4 m^{2}}}} \ge \frac{1}{8 m^{2}} .
\end{equation*}

}

\draft{
Hence an estimate of $V_{h_G}$ within
}

\draft{

\begin{equation*}
\Delta \coloneqq \frac{1}{32 m^{2}}
\end{equation*}

}

\draft{
identifies $k$ after we compute the $m + 1$ candidate values to polynomially
many bits. Thus approximating $V_{h_G}$ is NP-hard.
}

\draft{
We now run the same learner for all graphs with $n$ vertices and $m$ edges,
using $h_G$ only to simulate its observations. To apply
Lemma~\ref{lem:planning-from-learning}, we need, for every arm $x$ and every
$\eta \in \left( 0 , 1 \right)$, an outcome in $K_{h_G} \left( x \right)$ that has rational coordinates and whose reward is within $\eta$
of $v_{h_G} \left( x \right)$. The outcome attaining $v_{h_G} \left( x \right)$ need not have rational coordinates, so we
compute an approximation that does. For an arm $x$ with rational coordinates, binary
search gives, in time polynomial in $| \phi |$, $|h_G|$, the encoding
length of $x$, and $\operatorname{log} \left( \frac{1}{\eta} \right)$, a rational
$a \ge 0$ such that

\begin{equation*}
0 \le \sqrt{1 - \gamma^{2} q \left( x \right)} - a \le \eta , \quad a^{2} \le 1 - \gamma^{2} q \left( x \right) .
\end{equation*}

The point $y \coloneqq \left( \gamma B_{G} x , - a \right)$ then lies in $K_{h_G} \left( x \right)$, and its reward
satisfies $0 \le r \left( x , y \right) - v_{h_G} \left( x \right) \le \eta$. So
Lemma~\ref{lem:planning-from-learning}, applied with accuracy $\Delta$, returns with probability at
least $\frac{2}{3}$ an arm $\hat{x}$ with $V_{h_G} - v_{h_G} \left( \hat{x} \right) \le \Delta$.

Round the coordinates of $\hat{x}$ to endpoints without decreasing $q$. If the
maximum cut has size $k$, the resulting sign vector $\sigma$ satisfies
$v_{h_G} \left( \sigma \right) \ge V_{k} - \Delta$. If $c_{G} \left( \sigma \right) \le k - 1$, it would instead satisfy
$v_{h_G} \left( \sigma \right) \le V_{k - 1} \le V_{k} - \frac{1}{8 m^{2}} < V_{k} - \Delta$. The rounded sign vector therefore encodes
a maximum cut. The binary search, the simulation, and the rounding all take
polynomial time, proving $\text{\normalfont NP} \subseteq \text{\normalfont BPP}$. If the
learner is deterministic, the same argument succeeds with certainty and gives
$\text{\normalfont P} = \text{\normalfont NP}$.
}

\draft{
It remains to verify the transversality condition. For a fixed $x$, put
}

\draft{

\begin{equation*}
a \coloneqq {\left\Vert \gamma B_{G} x \right\Vert}_{2} , \quad \rho \coloneqq \sqrt{1 - a^{2}} \ge \frac{\sqrt{3}}{2} .
\end{equation*}

}

\draft{
The outcomes satisfying the affine constraint form the line $L_{h_G} \left( x \right)$,
whose intersection with $D$ is $K_{h_G} \left( x \right)$:
}

\draft{

\begin{equation*}
L_{h_G} \left( x \right) = \left\{ \left( \gamma B_{G} x , s \right) : s \in \mathbb{R} \right\} , \quad K_{h_G} \left( x \right) = \left\{ \left( \gamma B_{G} x , s \right) : \left| s \right| \le \rho \right\} .
\end{equation*}

}

\draft{
If $p = \left( \gamma B_{G} x , s \right) \in L_{h_G} \left( x \right)$ lies outside $D$ and $t \coloneqq \left| s \right| > \rho$,
then
}

\draft{

\begin{equation*}
\operatorname{dist} \left( p , K_{h_G} \left( x \right) \right) = t - \rho , \quad \operatorname{dist} \left( p , D \right) = \sqrt{a^{2} + t^{2}} - 1 .
\end{equation*}

}

\draft{
Factoring the second expression gives
}

\draft{

\begin{equation*}
\frac{\operatorname{dist} \left( p , D \right)}{\operatorname{dist} \left( p , K_{h_G} \left( x \right) \right)} = \frac{t + \rho}{\sqrt{a^{2} + t^{2}} + 1} \ge \rho \ge \frac{\sqrt{3}}{2} .
\end{equation*}

}

\draft{
To check the first inequality, multiply by the positive denominator,
subtract $\rho$ from both sides, and square. This gives the equivalent inequality
$\left( 1 - \rho^{2} \right) \left( t^{2} - \rho^{2} \right) \ge 0$. Thus the transversality condition holds with
constant $\frac{\sqrt{3}}{2} \ge S$, so $h_G \in H$.
}
\end{proof}
}

\subsection{Trilinear constraints with Euclidean arm and outcome balls}\label{app:hardness-trilinear}

\draft{
In this section, we keep $X$ and $D$ Euclidean balls and the reward
linear, but make the constraints more general. We now allow the
coefficients of the outcome in the constraints to depend on the arm. The resulting
constraints are still a special case of those in the linear robust
bandits of \citet[Section~2.2]{kosoy2025imprecise}. We show that
this change already makes planning NP-hard, even approximately.

We first recall Kosoy's constraints. There, outcomes are vectors in a
space $Y$, and $D$ lies in a hyperplane $\{y\in Y:y_0=1\}$ given by one of
the outcome coordinates. A hypothesis is specified by a parameter $z$ in a
compact set $Z\subseteq\mathbb{R}^{d_Z}$, through a map

\begin{equation*}
F : \mathbb{R}^{d_{X}} \times \mathbb{R}^{d_{Z}} \times Y \to W
\end{equation*}

into a constraint space $W$. This map must be bilinear in $(z,y)$, but it
may depend on the arm in any continuous way. Writing
$F_{x , z} \left( y \right) \coloneqq F \left( x , z , y \right)$, the
hypothesis $h_z$ allows the distributions whose expected value lies in
the kernel of $F_{x,z}$:

\begin{equation*}
L_{z} \left( x \right) \coloneqq \operatorname{ker} F_{x , z} , \quad K_{z} \left( x \right) \coloneqq L_{z} \left( x \right) \cap D , \quad h_{z} \left( x \right) \coloneqq \left\{ P \in \Delta D : \mathbb{E}_{y \sim P} \left[ y \right] \in K_{z} \left( x \right) \right\} .
\end{equation*}

The constraints of Section~\ref{sec:linear-imprecise-bandits} are of this form. To see this, add an outcome coordinate $y_0$
that is fixed to one, write $z=(B,C,d)$, and set
$F(x,z,(y_0,y))\coloneqq Cy+(Bx+d)y_0$. Then $K_z(x)$ is the set in
Equation~\ref{eq:setting-compatible-set}. In this map, the arm enters only
through the term $(Bx+d)y_0$, so the coefficients $C$ of the outcome do
not depend on $x$.

We now let these coefficients depend on $x$ by allowing $F$ to be
linear in each of $x$, $z$, and $y$ separately, that is, trilinear. Such a
map is still allowed in Kosoy's framework. An instance now consists of a
Euclidean ball $X$, a Euclidean ball $D$ in the hyperplane $y_0=1$, a
linear reward $r$, a Euclidean ball $Z$ of parameters, and a trilinear map
$F$. As in
Section~\ref{sec:linear-imprecise-bandits}, the parameters $\phi$ of the
family are rational data, namely the centres and radii of $X$, $D$, and
$Z$, the coefficients of $r$ and $F$, the cutoff $S$, and the bound
$C_{\max}\ge C_r$, with $S^{-1}$ and $C_{\max}$ integers given in unary. Since the reward is linear, the value functions of
$h_z$ are

\begin{equation*}
v_{z} \left( x \right) \coloneqq v_{h_z} \left( x \right) = \operatorname*{min}_{y \in K_{z} \left( x \right)} r \left( x , y \right) , \quad V_{z} \coloneqq \operatorname*{max}_{x \in X} v_{z} \left( x \right) .
\end{equation*}

The hypothesis class $H$ consists of the $h_z$ with $z\in Z$ that satisfy
the transversality condition of
Section~\ref{sec:linear-imprecise-bandits} with the cutoff $S$. For these
constraints, the condition asks that
$\operatorname{dist}(p,D)\ge S\operatorname{dist}(p,K_{z}(x))$
for every arm $x$ and every point $p\in L_{z}(x)$ outside $D$ with
$p_0=1$. The
proof also checks Kosoy's requirements that every $K_z(x)$ is nonempty
and every $F_{x,z}$ is onto.
}

\begin{theorem}\label{thm:trilinear-balls-np-hardness}
  \draft{Consider the setting of Section~\ref{sec:linear-imprecise-bandits}
  with the affine constraints replaced by the trilinear constraints described
  above. If, for some fixed $\alpha > 0$ and polynomial $P$, one randomized
  polynomial-time learner guarantees
  $R_{T} \le P \left( | \phi | \right) T^{1 - \alpha}$ for all
  parameters $\phi$ and every true hypothesis $h^{\star} \in H$, then
  $\text{\normalfont NP} \subseteq \text{\normalfont BPP}$. If a deterministic
  learner achieves the same regret guarantee, then
  $\text{\normalfont P} = \text{\normalfont NP}$.}
\end{theorem}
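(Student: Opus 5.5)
We reduce from the maximum clique problem rather than MAX-CUT. The extra power of trilinear constraints is that a product $x_i t_j$ of an arm coordinate and an outcome coordinate may appear in a constraint. Chaining two constraint layers then forces an outcome block to equal degree-two monomials in $x$. As a result, the value function becomes a monotone function of a \emph{quartic} form in $x$ rather than a quadratic one. Maximizing a suitable quartic over the sphere is NP-hard by the Motzkin--Straus theorem: for a graph $G=(V,E)$ on $n$ vertices with clique number $\omega$,
\[
\max_{\|x\|_2=1}\sum_{\{i,j\}\in E}x_i^2x_j^2=\frac12\Bigl(1-\frac1\omega\Bigr),
\]
which follows by substituting $u_i=x_i^2$ into the simplex version. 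As in Appendices~\ref{app:hardness-simplex} and~\ref{app:hardness-polytope}, we show that approximate planning for a known hypothesis $h_G$ is NP-hard. We then conclude with Lemma~\ref{lem:planning-from-learning}.

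\textbf{Construction.} Take $X$ to be the unit ball in $\mathbb{R}^n$. Write outcomes as $(y_0,t,u,s)$ with $t\in\mathbb{R}^n$, $u\in\mathbb{R}^m$ (one coordinate per edge, $m=|E|$) and $s\in\mathbb{R}$. Let $D$ be the unit ball in the coordinates $(t,u,s)$ inside the hyperplane $y_0=1$, and let the reward be $r(x,y)\coloneqq s$. The hypothesis $z_G$ imposes two families of constraints:
\begin{itemize}
\item $t_k-\gamma x_ky_0=0$ for every vertex $k$;
\item $u_e y_0-x_it_j=0$ for every edge $e=\{i,j\}$.
\end{itemize}
Every term of each constraint is trilinear in $(x,z,y)$ when $z$ collects the coefficients $\gamma$ and the $0/1$ incidence data. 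We let $Z$ be a ball, depending only on $n,m$, that contains all such coefficient vectors.

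On the hyperplane $y_0=1$ the constraints force $t=\gamma x$ and $u_e=\gamma x_ix_j$. So $L_{z_G}(x)$ is the line $\{(1,\gamma x,u(x),s):s\in\mathbb{R}\}$, and
\[
v_{z_G}(x)=-\sqrt{1-\gamma^2\|x\|_2^2-\gamma^2 Q(x)},\qquad Q(x)\coloneqq\sum_{\{i,j\}\in E}x_i^2x_j^2 .
\]
Taking $\gamma=1/4$, say, keeps $\gamma^2(\|x\|^2+Q(x))\le1/4$, so every $K_{z_G}(x)$ is nonempty. The map $F_{x,z_G}$ is onto because its coefficient blocks on $t$ and $u$ are identity matrices. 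Transversality with $S=1/2$ follows verbatim from the computation at the end of Appendix~\ref{app:hardness-polytope}, since $L_{z_G}(x)$ is again a line in the $s$ direction at distance $a\le1/2$ from the centre. The value is increasing in both $\|x\|^2$ and $Q$, both of which are homogeneous. Hence the maximum is attained on the unit sphere, where
\[
V_{z_G}=-\sqrt{1-\gamma^2-\tfrac{\gamma^2}{2}\bigl(1-1/\omega\bigr)}.
\]

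\textbf{Separation and sampling.} Consecutive candidate values of $1-1/\omega$ differ by at least $1/n^2$. Because the square root has derivative bounded on $[3/4,1]$, the candidate values of $V_{z_G}$ are separated by $\Omega(n^{-2})$. An additive accuracy $\varepsilon=\Theta(n^{-2})$ therefore determines $\omega$. In particular, it decides whether $\omega\ge k$: we compare $v_{z_G}(\hat x)$, computed to polynomial precision, against the midpoint of the two candidate values. For the sampler required by Lemma~\ref{lem:planning-from-learning}, we proceed as in Appendix~\ref{app:hardness-polytope}. The outcome $(1,\gamma x,u(x),-a)$ has rational coordinates for rational $x$, where $a$ is a binary-search rational approximation from below of $\sqrt{1-\gamma^2\|x\|^2-\gamma^2Q(x)}$. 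Its point mass lies in $h_{z_G}(x)$ and is $\eta$-optimal. The parameters $\phi$ and the encoding $|z_G|$ are polynomial in $n$. The randomized conclusion $\text{NP}\subseteq\text{BPP}$ and the deterministic conclusion $\text{P}=\text{NP}$ then follow exactly as in Theorem~\ref{thm:additive-simplex-np-hardness}.

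\textbf{Main obstacle.} The step I expect to need the most care is fitting the construction into the exact trilinear format. We must check the following:
\begin{itemize}
\item the bilinear-in-$y_0$ terms $\gamma x_ky_0$ and the cross terms $x_it_j$ are both genuinely trilinear with a single parameter vector $z$;
\item $Z$, $S$ and $C_{\max}$ depend only on $n,m$;
\item transversality holds for \emph{every} point of $L_{z_G}(x)$ with $p_0=1$, not only for the points considered in the polytope case.
\end{itemize}
If the ball $Z$ is forced to contain hypotheses that violate the requirements, this is harmless, because $H$ only keeps the transversal ones and our $z_G$ is among them.
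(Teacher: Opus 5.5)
\textbf{There is a genuine gap: your constraints are not trilinear, and no trilinear map can realize the value function you derive.}

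In $t_k-\gamma x_ky_0$, the term $t_k$ contains no arm coordinate. In $u_ey_0-x_it_j$, the term $u_ey_0$ contains no arm coordinate and is quadratic in the outcome. The terms you flag under ``main obstacle'' ($\gamma x_ky_0$ and $x_it_j$) are the harmless ones.

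This is not cosmetic. If $F$ is linear in $x$, then $F_{cx,z}=cF_{x,z}$, so $L_z(cx)=L_z(x)$ and $K_z(cx)=K_z(x)$ for every $c\neq0$. With your $x$-independent reward $r=s$, every $v_z$ is therefore constant along rays through the origin. At $x=0$ one even has $F_{0,z}=0$ and $K_z(0)=D$. Your $v_{z_G}(x)=-\sqrt{1-\gamma^2\|x\|_2^2-\gamma^2Q(x)}$ strictly increases along rays, so it cannot arise from any trilinear $F$ over a ball centred at the origin. Even if you replace $u_ey_0$ by $u_e$, which is what $y_0=1$ suggests, you get a map that is bilinear in $(z,y)$ and affine in $x$. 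That is Kosoy's original framework, not the trilinear class in the theorem, so the assumed learner need not apply to your instance.

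\textbf{The missing idea is to treat the arm projectively.} Shift the arm ball so that one coordinate $x_0$ stays bounded away from zero, and use $x_0$ as the homogenizing factor in every term. The paper takes $X=\{(x_0,\xi):(x_0-\frac53)^2+\|\xi\|_2^2\le1\}$. It checks that $q=\xi/x_0$ ranges exactly over the ball $\|q\|_2\le\frac34$, and writes each constraint with an $x_0$ factor, e.g.\ $z(x_0\Theta_{ij}-\xi_i\eta_j)$.

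With this device your construction can be repaired. Use $x_0t_k-\gamma\xi_ky_0$ and $x_0u_\ell-\xi_{i_\ell}t_{j_\ell}$, so that $t=\gamma q$ and $u_\ell=\gamma q_{i_\ell}q_{j_\ell}$. The maximum is then attained on the sphere $\|q\|_2=\frac34$ rather than the unit sphere. The Motzkin--Straus separation, the transversality computation from Appendix~\ref{app:hardness-polytope}, and the sampler then go through essentially as you describe.

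That repaired route would genuinely differ from the paper's. The paper chains three layers to obtain the cubic $p_G$ in $q=(u,w)$ and puts the graph into the linear reward rather than the hypothesis. It also makes $K_z(x)$ a singleton independent of $z\in Z=[1,2]$, which sidesteps any question about other members of $Z$. Your version obtains a quartic from only two layers, via the squared norm hidden in $s=-\sqrt{1-\|\cdot\|^2}$, so it uses a much smaller outcome dimension.
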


\draft{
Let $G = \left( V , E \right)$ be an undirected graph with $V = \left\{ 1 , \ldots , n \right\}$ and
$E = \left\{ e_{1} , \ldots , e_{m} \right\}$, where $m \ge 1$ and $e_{\ell} = \left\{ i_{\ell} , j_{\ell} \right\}$. Put $d \coloneqq n + m$.
For $q = \left( u , w \right) \in \mathbb{R}^{n} \times \mathbb{R}^{m}$, define the homogeneous cubic
}

\draft{

\begin{equation*}
p_{G} \left( q \right) \coloneqq \sum_{\ell = 1}^{m} u_{i_{\ell}} u_{j_{\ell}} w_{\ell} .
\end{equation*}

}

\begin{lemma}\label{lem:clique-cubic}
  \draft{
  If $\omega \left( G \right)$ is the clique number of $G$, then
  }

  \draft{

\begin{equation*}
\operatorname*{max}_{{\left\Vert q \right\Vert}_{2} = 1} p_{G} {\left( q \right)}^{2} = \frac{2}{27} \left( 1 - \frac{1}{\omega \left( G \right)} \right) .
\end{equation*}

  }

\end{lemma}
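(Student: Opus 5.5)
We reduce the statement to the Motzkin--Straus theorem: for a graph $G$,
\begin{equation*}
\max_{x\in\Delta_n}\sum_{\{i,j\}\in E}x_ix_j=\frac12\left(1-\frac1{\omega(G)}\right),
\end{equation*}
where $\Delta_n$ is the probability simplex. We do this by optimizing out $w$ and then the norm split between $u$ and $w$.

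First fix $u$ and maximize over $w$. The cubic $p_G(u,w)=\sum_\ell c_\ell w_\ell$ is linear in $w$, with $c_\ell\coloneqq u_{i_\ell}u_{j_\ell}$. By Cauchy--Schwarz, the maximum of $p_G^2$ over $\|w\|_2=\beta$ is $\beta^2\|c\|_2^2$. So we must maximize $\beta^2\sum_\ell u_{i_\ell}^2u_{j_\ell}^2$ subject to $\|u\|_2^2+\beta^2=1$.

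Next write $\|u\|_2^2=\alpha$ and substitute $x_i\coloneqq u_i^2/\alpha$, so that $x\in\Delta_n$. Every point of $\Delta_n$ arises this way, by taking $u_i=\sqrt{\alpha x_i}$. Then
\begin{equation*}
\sum_\ell u_{i_\ell}^2u_{j_\ell}^2=\alpha^2 g(x),\qquad g(x)\coloneqq\sum_{\{i,j\}\in E}x_ix_j .
\end{equation*}
The problem therefore separates into
\begin{equation*}
\max_{\alpha\in[0,1]}\alpha^2(1-\alpha)\cdot\max_{x\in\Delta_n}g(x).
\end{equation*}
The first factor is maximized at $\alpha=2/3$, where it equals $4/27$.

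By Motzkin--Straus, the second factor equals $\frac12(1-1/\omega(G))$. Multiplying gives $\frac{2}{27}(1-1/\omega(G))$. The maximum is attained because the sphere is compact, and the separated optimum is realized by $u_i=\sqrt{2x_i^\star/3}$ and $w=\beta c/\|c\|_2$ with $\beta^2=1/3$. If $c=0$, the value is $0$, which matches $\omega=1$, although $m\ge1$ gives $\omega\ge2$ anyway.

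The only real obstacle is Motzkin--Straus itself. We would cite it rather than reprove it. If a self-contained argument were wanted, the plan would be as follows.
\begin{itemize}
\item For the lower bound, take the uniform distribution on a maximum clique, which gives $\binom{\omega}{2}/\omega^2$.
\item For the upper bound, take a maximizer with minimal support. If two support vertices are nonadjacent, shifting mass between them changes $g$ linearly, so we can zero one out without decreasing $g$. Hence the support is a clique of size $k\le\omega$, and on a clique $g\le\frac12(1-1/k)$.
\end{itemize}
Everything else is the elementary bookkeeping above.
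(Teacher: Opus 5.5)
Your proposal is correct and follows essentially the same route as the paper: Cauchy--Schwarz to eliminate $w$, the substitution of squared coordinates to reach the simplex, Motzkin--Straus, and a one-variable optimization of the norm split that gives the factor $4/27$. The only cosmetic difference is that you maximize $p_G^2$ directly, so you do not need the paper's closing remark that $p_G$ is odd.
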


\draft{
\begin{proof}
\draft{
Write $\rho \coloneqq {\left\Vert u \right\Vert}_{2}$ and $\sigma \coloneqq {\left\Vert w \right\Vert}_{2}$. For fixed $u$ and
$\sigma$, Cauchy--Schwarz gives
}

\draft{

\begin{equation*}
\operatorname*{max}_{{\left\Vert w \right\Vert}_{2} = \sigma} p_{G} \left( u , w \right) = \sigma \sqrt{\sum_{\ell = 1}^{m} u_{i_{\ell}}^{2} u_{j_{\ell}}^{2}} .
\end{equation*}

}

\draft{
Write $u = \rho s$ with ${\left\Vert s \right\Vert}_{2} = 1$ and set $\pi_{i} \coloneqq s_{i}^{2}$. The
Motzkin--Straus theorem \citep{motzkin1965maxima} gives
}

\draft{

\begin{equation*}
\operatorname*{max}_{{\left\Vert s \right\Vert}_{2} = 1} \sum_{\left\{ i , j \right\} \in E} s_{i}^{2} s_{j}^{2} = \operatorname*{max}_{\pi_{i} \ge 0 , \sum_{i} \pi_{i} = 1} \sum_{\left\{ i , j \right\} \in E} \pi_{i} \pi_{j} = \frac{1}{2} \left( 1 - \frac{1}{\omega \left( G \right)} \right) .
\end{equation*}

}

\draft{
Finally,
}

\draft{

\begin{equation*}
\operatorname*{max}_{\rho^{2} + \sigma^{2} = 1 , \rho \ge 0 , \sigma \ge 0} \rho^{2} \sigma = \frac{2}{3 \sqrt{3}} .
\end{equation*}

}

\draft{
Substituting these bounds into the first display and squaring proves
the claim. Since $p_{G}$ is odd, its
maximum equals its maximum absolute value.
}
\end{proof}
}

\draft{
\begingroup
\renewcommand{\proofname}{Proof of Theorem~\ref{thm:trilinear-balls-np-hardness}}
\begin{proof}
\draft{
We reduce from CLIQUE. Given a graph $G$ and an integer $k$, we construct
an instance in which $X$ is a full-dimensional Euclidean ball, $D$ is a
Euclidean ball in the hyperplane $y_0=1$, and $Z=[1,2]$. Every set
$K_z(x)$ is a single point that does not depend on $z$, and the
inequality of the transversality condition holds with a constant greater
than $0.88$, even for
the points of $L_z(x)$ with $p_0\ne1$. So we can take the cutoff
$S\coloneqq\frac{1}{2}$, and every $h_z$ belongs to $H$. We also take
$C_{\max}\coloneqq2$. Nevertheless, approximating $V_z$
to inverse-polynomial additive accuracy decides whether $G$ has a
$k$-clique, and Lemma~\ref{lem:planning-from-learning} turns a learner
with the assumed regret into such an approximation.

We may restrict CLIQUE to instances with $m \ge 1$ and
$3 \le k \le n$, since we can decide the other cases in polynomial time. Given
$\left( G , k \right)$, use the cubic $p_{G}$ above and write an arm as
$x = \left( x_{0} , \xi \right) \in \mathbb{R} \times \mathbb{R}^{d}$. Take
}

\draft{

\begin{equation*}
X \coloneqq \left\{ \left( x_{0} , \xi \right) : {\left( x_{0} - \frac{5}{3} \right)}^{2} + {\left\Vert \xi \right\Vert}_{2}^{2} \le 1 \right\} .
\end{equation*}

}

\draft{
This is a full-dimensional Euclidean ball and $x_{0} \ge \frac{2}{3}$ throughout $X$.
The ratio $q \coloneqq \frac{\xi}{x_{0}}$ ranges over exactly the ball
${\left\Vert q \right\Vert}_{2} \le \frac{3}{4}$. Indeed,
}

\draft{

\begin{equation*}
1 - {\left( x_{0} - \frac{5}{3} \right)}^{2} - \frac{9}{16} x_{0}^{2} = - \frac{{\left( 15 x_{0} - 16 \right)}^{2}}{144} \le 0 ,
\end{equation*}

}

\draft{
so every ratio has norm at most $\frac{3}{4}$. Conversely, for any such
$q$, the choice $x_{0} = \frac{16}{15}$ and $\xi = x_{0} q$ belongs to $X$.
}

\draft{
Let $Z \coloneqq \left[ 1 , 2 \right]$, and take the outcome and constraint spaces to be
}

\draft{

\begin{equation*}
Y \coloneqq \mathbb{R} \times \mathbb{R}^{d} \times \mathbb{R}^{d \times d} \times \mathbb{R}^{d \times d \times d} ,
\end{equation*}

}

\draft{

\begin{equation*}
W \coloneqq \mathbb{R}^{d} \times \mathbb{R}^{d \times d} \times \mathbb{R}^{d \times d \times d} ,
\end{equation*}

}

\draft{
and write $y = \left( y_{0} , \eta , \Theta , \Xi \right) \in Y$. We fix $y_{0} = 1$ and take the
remaining coordinates to lie in the Euclidean unit ball:
}

\draft{

\begin{equation*}
D \coloneqq \left\{ \left( 1 , \eta , \Theta , \Xi \right) : {\left\Vert \eta \right\Vert}_{2}^{2} + {\left\Vert \Theta \right\Vert}_{F}^{2} + {\left\Vert \Xi \right\Vert}_{F}^{2} \le 1 \right\} .
\end{equation*}

}

\draft{
For $x = \left( x_{0} , \xi \right)$, $z \in \mathbb{R}$, and $y \in Y$, define $F \left( x , z , y \right)$ coordinatewise
by
}

\draft{

\begin{equation*}
F_{i} \left( x , z , y \right) \coloneqq z \left( x_{0} \eta_{i} - \frac{1}{2} \xi_{i} y_{0} \right) ,
\end{equation*}

}

\draft{

\begin{equation*}
F_{i j} \left( x , z , y \right) \coloneqq z \left( x_{0} \Theta_{i j} - \xi_{i} \eta_{j} \right) ,
\end{equation*}

}

\draft{

\begin{equation*}
F_{i j \ell} \left( x , z , y \right) \coloneqq z \left( x_{0} \Xi_{i j \ell} - \xi_{i} \Theta_{j \ell} \right) .
\end{equation*}

}

\draft{
Each term contains one coordinate from each of $x$, $z$, and $y$, so $F$ is
trilinear. Define the reward by
}

\draft{

\begin{equation*}
r \left( x , y \right) \coloneqq \frac{1}{m} \sum_{\ell = 1}^{m} \Xi_{i_{\ell} , j_{\ell} , n + \ell} .
\end{equation*}

}

\draft{
It is linear and independent of $x$; its coefficient vector has Euclidean
norm $\frac{1}{\sqrt{m}} \le 1$, so $C_{r} = \frac{2}{\sqrt{m}} \le 2$.
}

\draft{
Fix $x \in X$ and $z \in Z$. Since $z x_{0} \ne 0$ and $y_{0} = 1$ on $D$, we can solve
$F \left( x , z , y \right) = 0$ first for $\eta$, then for $\Theta$, and then for $\Xi$. This gives
}

\draft{

\begin{equation*}
\eta_{i} = \frac{1}{2} q_{i} , \quad \Theta_{i j} = \frac{1}{2} q_{i} q_{j} , \quad \Xi_{i j \ell} = \frac{1}{2} q_{i} q_{j} q_{\ell} .
\end{equation*}

}

\draft{
Writing $t \coloneqq {\left\Vert q \right\Vert}_{2} \le \frac{3}{4}$, the squared norm of $\left( \eta , \Theta , \Xi \right)$ is
}

\draft{

\begin{equation*}
\frac{1}{4} \left( t^{2} + t^{4} + t^{6} \right) \le \frac{4329}{16384} < 1 .
\end{equation*}

}

\draft{
Thus this point lies in $D$, so $K_{z} \left( x \right)$ is a singleton independent of $z$.
The map $F_{x , z} : Y \to W$ is also onto. For any right-hand side, we can set
$y_{0} = 0$ and solve successively for $\eta$, $\Theta$, and $\Xi$.
}

\draft{
Within the hyperplane $y_{0} = 1$, the equations have just one solution, and
that solution lies in $D$. So there are no solutions outside $D$ to check
in the transversality condition. We can also prove the distance inequality
for points in the full kernel $L_{z} \left( x \right)$. Since $F_{x , z}$ is onto and the
dimension of $Y$ is one more than that of $W$, this kernel is the line
spanned by $y \left( x \right) = \left( 1 , g \left( q \right) \right)$, where $g \left( q \right) \coloneqq \left( \eta , \Theta , \Xi \right)$ is given by the
equations above. If $p = \tau y \left( x \right) \in L_{z} \left( x \right)$ lies outside $D$, then
}

\draft{

\begin{equation*}
\operatorname{dist} \left( p , D \right) \ge \left| \tau - 1 \right| , \quad \operatorname{dist} \left( p , K_{z} \left( x \right) \right) = \left| \tau - 1 \right| \sqrt{1 + {\left\Vert g \left( q \right) \right\Vert}_{2}^{2}} .
\end{equation*}

}

\draft{
Thus, for all these points, $\operatorname{dist} \left( p , D \right)$ is at least
$\operatorname{dist} \left( p , K_{z} \left( x \right) \right)$ times
}

\draft{

\begin{equation*}
\frac{1}{\sqrt{1 + \frac{4329}{16384}}} = \frac{128}{\sqrt{20713}} > 0.88 \ge S .
\end{equation*}

}

\draft{
Evaluating the reward at the unique point in $K_{z} \left( x \right)$ gives
}

\draft{

\begin{equation*}
v_{z} \left( x \right) = \frac{1}{2 m} p_{G} \left( q \right) .
\end{equation*}

}

\draft{
Since $q$ ranges over the radius-$\frac{3}{4}$ ball, homogeneity and
Lemma~\ref{lem:clique-cubic} imply, for every $z \in Z$,
}

\draft{

\begin{equation*}
V_{z} = \frac{27}{128 m} \operatorname*{max}_{{\left\Vert h \right\Vert}_{2} = 1} p_{G} \left( h \right) = \frac{27}{128 m} \sqrt{\frac{2}{27}} \sqrt{1 - \frac{1}{\omega \left( G \right)}} .
\end{equation*}

}

\draft{
For $j = 2 , \ldots , n$, let
}

\draft{

\begin{equation*}
U_{j} \coloneqq \frac{27}{128 m} \sqrt{\frac{2}{27}} \sqrt{1 - \frac{1}{j}} .
\end{equation*}

}

\draft{
If $\omega \left( G \right) \ge k$, then $V_{z} \ge U_{k}$; if $\omega \left( G \right) \le k - 1$, then
$V_{z} \le U_{k - 1}$. The gap between these bounds satisfies
}

\draft{

\begin{equation*}
U_{k} - U_{k - 1} \ge \frac{27}{128 m} \sqrt{\frac{2}{27}} \frac{1}{2 k \left( k - 1 \right)} = \Omega \left( \frac{1}{m k^{2}} \right) .
\end{equation*}

}

\draft{
Since $\sqrt{\frac{2}{27}} > \frac{1}{4}$ and $k \le n$, the gap $\Delta_{k} \coloneqq U_{k} - U_{k - 1}$
satisfies
}

\draft{

\begin{equation*}
\Delta_{k} > \frac{1}{40 m k \left( k - 1 \right)} \ge \frac{1}{40 m n^{2}} .
\end{equation*}

}

\draft{
Set $\varepsilon \coloneqq \frac{1}{500 m n^{2}}$. Using polynomially many bits, we can compute a
rational $\tau_{k}$ such that
}

\draft{

\begin{equation*}
\left| \tau_{k} - \frac{1}{2} \left( U_{k} + U_{k - 1} \right) \right| \le \varepsilon .
\end{equation*}

}

\draft{
An estimate of $V_{z}$ with additive error at most $\varepsilon$ lies above
$\tau_{k}$ when $\omega \left( G \right) \ge k$ and below it when $\omega \left( G \right) \le k - 1$, because
$2 \varepsilon < \frac{\Delta_{k}}{2}$. It therefore decides whether $G$ contains a $k$-clique.
The constructed instance has dimension $O \left( d^{3} \right)$, and all its defining
coefficients are rational with polynomial encoding length. This proves that
approximating $V_{z}$ is NP-hard even when $z$ is known.
}

\draft{
Given any arm $x$ with rational coordinates, we can compute the unique point in $K_{z} \left( x \right)$
exactly in polynomial time using the equations above; $x_{0} \ge \frac{2}{3}$ ensures
that we can divide by $x_{0}$. We can therefore simulate the adversary by returning
this point. Applying Lemma~\ref{lem:planning-from-learning} with the chosen $\varepsilon$
gives, with probability at least $\frac{2}{3}$, an arm $\hat{x}$ with
$V_{z} - v_{z} \left( \hat{x} \right) \le \varepsilon$. We compute $v_{z} \left( \hat{x} \right)$ exactly from the unique point
in $K_{z} \left( \hat{x} \right)$, which gives an estimate of $V_{z}$ with additive error at most
$\varepsilon$. Comparing it with $\tau_{k}$ decides CLIQUE, so
$\text{\normalfont NP} \subseteq \text{\normalfont BPP}$. For a deterministic learner, the estimate is
deterministic and gives
$\text{\normalfont P} = \text{\normalfont NP}$.
}
\end{proof}
\endgroup
}

\subsection{Efficient planning does not imply efficient learning}\label{app:hardness-easy-planning}

\draft{
An efficient planner for every hypothesis does not always give an
efficient learner, so the converse of Lemma~\ref{lem:planning-from-learning} fails.
We show this with instances outside the setting of
Section~\ref{sec:linear-imprecise-bandits}. They have finite arm and
hypothesis sets with short descriptions, and an outcome set that is a
polytope. We also allow the reward to be linear in the arm and in the
outcome separately, so it can contain products of arm and outcome
coordinates.
}

\begin{theorem}\label{thm:easy-planning-hard-learning}
  \draft{There is a family of instances with parameters $\phi=(n,k)$, for
  integers $2\le k\le n$ given in unary, in which planning for a known
  hypothesis takes $O(n^2)$ time, however, if for some fixed $\alpha > 0$ and
  polynomial $P$, one randomized polynomial-time learner guarantees
  $R_{T} \le P \left( | \phi | \right) T^{1 - \alpha}$ for all
  parameters $\phi$ and every true hypothesis $h^{\star} \in H$, then
  $\text{\normalfont NP} = \text{\normalfont RP}$. If a deterministic
  learner achieves the same regret guarantee, then
  $\text{\normalfont P} = \text{\normalfont NP}$.}
\end{theorem}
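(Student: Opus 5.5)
The plan is to hide an NP-hard search problem inside the \emph{identity} of the true hypothesis, while making each hypothesis carry its own optimal arm in its description. For parameters $\phi=(n,k)$, let the arms be the indicator vectors $x_A\in\{0,1\}^n$ of the $k$-subsets $A\subseteq\{1,\ldots,n\}$, a finite set. Let the outcome set be the cube $D\coloneqq[0,1]^{\binom{n}{2}}$, a polytope with one coordinate $y_{ij}$ per vertex pair. Let the reward be
\begin{equation*}
r(x,y)\coloneqq\binom{k}{2}^{-1}\sum_{i<j}x_ix_jy_{ij},
\end{equation*}
which is linear in $y$ for fixed $x$ and contains arm--outcome products, as the statement allows. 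A hypothesis is indexed by a pair $(G,S)$, where $G$ is a graph on $n$ vertices and $S$ is a $k$-clique of $G$. Its credal set is the singleton-mean set $h_{G,S}(x)\coloneqq\{P\in\Delta D:\mathbb{E}_{y\sim P}[y]=\mathbf 1_G\}$, where $\mathbf 1_G$ is the adjacency vector of $G$. The description has length $O(n^2)$. I would also check that these sets are nonempty, closed and convex, so that $H$ is a legitimate class of credal-set-valued maps.

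The easy half is planning. For a known $h_{G,S}$ the value is $v_{h}(x_A)=\binom{k}{2}^{-1}|E(G[A])|\le1$, with equality exactly when $A$ is a clique. Hence the planner simply returns $x_S$, and if it must also report the value, it computes $v_h$ in $O(n^2)$ time. So $V_h=1$ for every $h\in H$.

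For hardness, suppose a polynomial-time learner guarantees $R_T\le P(|\phi|)T^{1-\alpha}$. Given a graph $G$ and an integer $k$, I would simulate the learner for $T\coloneqq\lceil(3\binom{k}{2}P(|\phi|))^{1/\alpha}\rceil$ rounds, which is polynomial because $n$ and $k$ are in unary. The adversary always returns the point mass at $\mathbf 1_G$. The key observation is that this adversary does not depend on $S$. Therefore, if $G$ has some $k$-clique $S$, the simulation is a faithful run against the true hypothesis $h_{G,S}$, so the regret bound applies.

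Every round in which the learner plays a non-clique arm costs at least $\binom{k}{2}^{-1}$, since the reward is then at most $1-\binom{k}{2}^{-1}$. So the expected number of non-clique rounds is at most $\binom{k}{2}P(|\phi|)T^{1-\alpha}\le T/3$, and with probability at least $2/3$ some round plays a clique arm. After the simulation I check every played arm in $O(Tn^2)$ time and accept if and only if one of them is a $k$-clique. If $G$ has no $k$-clique, this procedure never accepts, so the error is one-sided. This places CLIQUE in RP, hence $\text{NP}\subseteq\text{RP}$, and since $\text{RP}\subseteq\text{NP}$ always holds, $\text{NP}=\text{RP}$. For a deterministic learner, the same counting argument shows that the single deterministic run must contain a clique arm, which gives $\text{P}=\text{NP}$.

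The step I expect to need the most care is making sure this is a genuine instance of the framework and that the simulation is legitimate. The concern is that the class $H$ and the encoding $\phi$ depend only on $(n,k)$, while the adversary strategy must be valid for the unknown $S$. The singleton-mean construction handles this directly, because every $h_{G,S}$ with the same $G$ has the identical credal set at every arm. That is exactly why the learner's observations reveal $G$ but carry no information about $S$. A secondary point is bookkeeping: I must confirm that the regret definition uses the expected reward, which here equals the deterministic reward of $\mathbf 1_G$, so the per-round loss bound is exact rather than only in expectation over outcomes.
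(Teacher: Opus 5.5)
Your hardness half is fine and matches the paper's argument: simulate the learner against the constant outcome $\mathbf 1_G$, apply Markov's inequality to the number of non-clique rounds, check the played arms, and get one-sided error. The gap is in the easy-planning half.

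In the framework, $H$ is a set of maps $X\to\credal D$, and your $h_{G,S}$ does not depend on $S$. So $h_{G,S}=h_{G,S'}$ for any two $k$-cliques of $G$, and the label $S$ carries no information about the hypothesis; your planner just reads off a planted answer. Given the hypothesis itself, say through its constraint data $C=I$, $B=0$, $d=-\mathbf 1_G$, finding an optimal arm means finding a $k$-clique of $G$. Even approximating $V$ to within less than $1/\binom{k}{2}$ is NP-hard.

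Worse, $D$ is the cube and $\mathbf 1_G$ is a vertex, so the only distribution on $D$ with mean $\mathbf 1_G$ is the point mass. Each $h_{G,S}(x)$ is therefore a singleton: your environments are precise, and the first observation identifies the hypothesis completely. A genuine planner for $H$ would then immediately yield an efficient learner. Your instance is really another ``planning is hard'' reduction, like those before Lemma~\ref{lem:planning-from-learning}, and it does not show that the converse of that lemma fails. With the same relabelling trick, any instance with NP-hard planning could be declared easy to plan, which makes the claim vacuous.

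The missing idea is to use the imprecision of the credal sets. The hypothesis should determine its own optimal arm, yet a single outcome should be compatible with many hypotheses. Index hypotheses by $k$-sets $U$ alone, and let $h_U$ pin down only the coordinates inside $U$: the mean must have $y_e=1$ for every $e\subseteq U$, and all other coordinates are free in $[0,1]$. Then:
\begin{itemize}
\item The adversary sets the free coordinates to $0$, so $v_{h_U}(x_A)=\binom{|U\cap A|}{2}/\binom{k}{2}$.
\item Distinct $U$ give distinct maps, and the unique optimal arm $x_U$ can be read off the constraint data in $O(n^2)$ time.
\item $\mathbf 1_G$ is feasible under $h_U$ for every $k$-clique $U$ of $G$, so your reduction goes through unchanged.
\end{itemize}
This is the paper's construction. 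It also adds the $s$ coordinates so that the constraint map is onto, and uses arm vectors with $x^A_e=1/q$ on the edges of $A$ so that the reward is bilinear.
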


\draft{
\begin{proof}
\draft{
We reduce from CLIQUE. Both the arms and the hypotheses will correspond to
the $k$-element subsets of $[n]$, and the optimal arm for the hypothesis
corresponding to a set $U$ is the arm corresponding to $U$. So planning is easy once the hypothesis is
known. But an adversary can make every observation consistent with every
$k$-clique of a graph $G$, so a learner with low regret must find a
$k$-clique of $G$ on its own.

Fix $2 \le k \le n$, and let
}

\draft{

\begin{equation*}
E_{n} \coloneqq \left\{ \left\{ i , j \right\} : 1 \le i < j \le n \right\} , \quad m \coloneqq \binom{n}{2} , \quad q \coloneqq \binom{k}{2} .
\end{equation*}

}

\draft{
For every $k$-element set $A \subseteq \left[ n \right]$, define $x^{A} \in \mathbb{R}^{m}$ by
}

\draft{

\begin{equation*}
x_{e}^{A} \coloneqq \begin{cases}
\frac{1}{q} & \text{\normalfont if } e \subseteq A \\
0 & \text{\normalfont otherwise}
\end{cases} , \quad e \in E_{n} ,
\end{equation*}

}

\draft{
and take
}

\draft{

\begin{equation*}
X \coloneqq \left\{ x^{A} : A \subseteq \left[ n \right] , \left| A \right| = k \right\} .
\end{equation*}

}

\draft{
An arm is represented succinctly by the vertex set $A$. For every
$k$-element set $U \subseteq \left[ n \right]$, define $z^{U} \in \mathbb{R}^{1 + m}$ by
}

\draft{

\begin{equation*}
z_{0}^{U} \coloneqq 1 , \quad z_{e}^{U} \coloneqq \begin{cases}
1 & \text{\normalfont if } e \subseteq U \\
0 & \text{\normalfont otherwise}
\end{cases} ,
\end{equation*}

}

\draft{
and let $Z \coloneqq \left\{ z^{U} : U \subseteq \left[ n \right] , \left| U \right| = k \right\}$.
}

\draft{
Write an outcome as $y = \left( y_{0} , g , s \right) \in \mathbb{R}^{1 + 2 m}$ and set
}

\draft{

\begin{equation*}
D \coloneqq \left\{ \left( 1 , g , s \right) : g \in {\left[ 0 , 1 \right]}^{m} , s \in {\left[ - 1 , 1 \right]}^{m} \right\} .
\end{equation*}

}

\draft{
With constraint space $W = \mathbb{R}^{m}$, define the linear map $C_{z} : \mathbb{R}^{1 + 2 m} \to W$ by
}

\draft{

\begin{equation*}
{\left( C_{z} y \right)}_{e} \coloneqq z_{e} \left( g_{e} - y_{0} - s_{e} \right) + z_{0} s_{e} , \quad e \in E_{n} .
\end{equation*}

}

\draft{
For $z \in Z$, set
}

\draft{

\begin{equation*}
K_{z} \left( x \right) \coloneqq \left\{ y \in D : C_{z} y = 0 \right\} , \quad h_{z} \left( x \right) \coloneqq \left\{ Q \in \Delta D : \mathbb{E}_{y \sim Q} \left[ y \right] \in K_{z} \left( x \right) \right\} ,
\end{equation*}

}

\draft{
and let $H \coloneqq \left\{ h_{z} : z \in Z \right\}$.
}

\draft{
The map $\left( y , z \right) \mapsto C_{z} y$ is bilinear. Since $y_{0}$ is an outcome
coordinate, the constraint has the form of Equation~\ref{eq:setting-compatible-set}
with $B_{z} = 0$, $d_{z} = 0$, and $z \mapsto C_{z}$ linear. For $z = z^{U}$,
}

\draft{

\begin{equation*}
{\left( C_{z^{U}} y \right)}_{e} = \begin{cases}
g_{e} - y_{0} & \text{\normalfont if } e \subseteq U \\
s_{e} & \text{\normalfont otherwise} .
\end{cases}
\end{equation*}

}

\draft{
Thus, inside $D$, the constraints force $g_{e} = 1$ for every
$e \subseteq U$ and $s_{e} = 0$ for every $e \nsubseteq U$. The remaining
$g_{e}$ coordinates are free to vary in $\left[ 0 , 1 \right]$. The feasible set is nonempty.
It contains the outcome with $g_{e} = 1$ on the edges of $U$, $g_{e} = 0$ elsewhere, and $s = 0$. Moreover,
$C_{z^{U}}$ is onto $W$. Given $w \in W$, we can set $y_{0} = 0$, use
$g_{e} = w_{e}$ and $s_{e} = 0$ on the edges of $U$, and use $g_{e} = 0$ and $s_{e} = w_{e}$
elsewhere.
}

\draft{
Define the reward
}

\draft{

\begin{equation*}
r \left( x , \left( y_{0} , g , s \right) \right) \coloneqq \sum_{e \in E_{n}} x_{e} g_{e} .
\end{equation*}

}

\draft{
The parameters $\phi=(n,k)$ determine $X$, $D$, $r$, and $H$, with the maps
$C_z$ built into the learner. Only the true $U$ is hidden.
The reward lies in $\left[ 0 , 1 \right]$ on $X \times D$. For an arm $x^{A}$, the adversary
minimizes it by setting every coordinate $g_{e}$ not forced by $U$ to zero, so
}

\draft{

\begin{equation*}
v_{z^{U}} \left( x^{A} \right) = \frac{\binom{\left| U \cap A \right|}{2}}{q} .
\end{equation*}

}

\draft{
Thus $V_{z^{U}} = 1$, uniquely attained by $x^{U}$. Given $z^{U}$, an exact planner
returns $x^{U}$ by setting $x_{e}^{U} = \frac{z_{e}^{U}}{q}$ for every $e \in E_{n}$, which takes
$O \left( m \right) = O \left( n^{2} \right)$ time.
}

\draft{
We now show that a learner with the stated regret bound would solve
CLIQUE. Given a graph $G = \left( \left[ n \right] , E \left( G \right) \right)$ and target clique size $k$
\citep{garey1979computers}, let
}

\draft{

\begin{equation*}
g_{e}^{G} \coloneqq \begin{cases}
1 & \text{\normalfont if } e \in E \left( G \right) \\
0 & \text{\normalfont otherwise}
\end{cases} , \quad y^{G} \coloneqq \left( 1 , g^{G} , 0 \right) \in D .
\end{equation*}

}

\draft{
If $U$ is a $k$-clique of $G$, the adversary can return $y^{G}$ on every round
under hypothesis $z^{U}$. Indeed, every edge inside $U$ has $g_{e}^{G} = 1$, and all the
$s_{e}$ coordinates are zero. For every candidate $A$,
}

\draft{

\begin{equation*}
r \left( x^{A} , y^{G} \right) = \frac{\left| E \left( G \left[ A \right] \right) \right|}{q} .
\end{equation*}

}

\draft{
This reward is $1$ exactly when $A$ is a $k$-clique, and is at most
$1 - \frac{1}{q}$ otherwise.
}

\draft{
Assume the learner in the theorem exists, and give it the outcome $y^{G}$
on every round. If $G$ contains a $k$-clique $U$, these outcomes are
compatible with $z^{U}$, whose optimal value is $1$. If the learner plays
$x^{A_{1}} , \ldots , x^{A_{T}}$, the total reward it loses relative to this value is
}

\draft{

\begin{equation*}
{\mathcal{L}}_{T} \coloneqq T - \sum_{t = 1}^{T} r \left( x^{A_{t}} , y^{G} \right) .
\end{equation*}

}

\draft{
This quantity is nonnegative. Let $\mathcal{E}$ be the event that none of
$A_{1} , \ldots , A_{T}$ is a clique. On this event, the learner loses at least
$\frac{1}{q}$ on every round, so ${\mathcal{L}}_{T} \ge \frac{T}{q}$. Markov's inequality and the
assumed regret bound give
}

\draft{

\begin{equation*}
\operatorname{Pr} \left( \mathcal{E} \right) \le q \frac{E \left[ {\mathcal{L}}_{T} \right]}{T} \le q P \left( | \phi | \right) T^{- \alpha} .
\end{equation*}

}

\draft{
Choose
}

\draft{

\begin{equation*}
T \coloneqq \left\lceil {\left( 4 q P \left( | \phi | \right) \right)}^{\frac{1}{\alpha}} \right\rceil .
\end{equation*}

}

\draft{
Since $|\phi|=O(n)$, both $T$ and the
time needed to simulate the learner are polynomial in the CLIQUE input
length. Check each $k$-set proposed by the learner and accept as soon as
one is a clique in $G$. If $G$ has a $k$-clique, the preceding bound gives
acceptance probability at least $\frac{3}{4}$. If $G$ has none, none of the
proposed sets can pass this check. In that case, the outcomes may be
incompatible with every hypothesis, but the learner still runs
in polynomial time on every history of arms in $X$ and outcomes in $D$. Thus the
simulation also takes polynomial time when $G$ has no $k$-clique.
}

\draft{
This is an RP algorithm for CLIQUE. Since CLIQUE is NP-complete, it implies
$\text{\normalfont NP} = \text{\normalfont RP}$. If the learner is deterministic, the same verified search gives
$\text{\normalfont P} = \text{\normalfont NP}$.
The adversary in this argument returns the same outcome $y^{G}$ on every
round, so the conclusion holds even if the regret guarantee is required
only against such adversaries.
}
\end{proof}
}

\draft{
This example shows that efficient planning alone does not suffice for
efficient learning. It uses exponentially large finite sets $X$ and $Z$
represented by $k$-subsets, a polytope $D$, and the bilinear reward
$r \left( x , y \right) = \sum_{e} x_{e} g_{e}$. It therefore falls outside our setting of Euclidean
balls and linear rewards.
}

%% file: arxiv.bbl
\begin{thebibliography}{81}
\providecommand{\natexlab}[1]{#1}
\providecommand{\url}[1]{\texttt{#1}}
\expandafter\ifx\csname urlstyle\endcsname\relax
  \providecommand{\doi}[1]{doi: #1}\else
  \providecommand{\doi}{doi: \begingroup \urlstyle{rm}\Url}\fi

\bibitem[Alon et~al.(2022)Alon, Hanneke, Holzman, and Moran]{alon2021theory}
Noga Alon, Steve Hanneke, Ron Holzman, and Shay Moran.
\newblock A theory of {PAC} learnability of partial concept classes.
\newblock In \emph{2021 IEEE 62nd Annual Symposium on Foundations of Computer
  Science (FOCS)}, pages 658--671, 2022.
\newblock \doi{10.1109/FOCS52979.2021.00070}.

\bibitem[Amodei et~al.(2016)Amodei, Olah, Steinhardt, Christiano, Schulman, and
  Man{\'e}]{amodei2016concrete}
Dario Amodei, Chris Olah, Jacob Steinhardt, Paul Christiano, John Schulman, and
  Dan Man{\'e}.
\newblock Concrete problems in {AI} safety.
\newblock arXiv preprint arXiv:1606.06565, 2016.

\bibitem[Appel and Kosoy(2025)]{appel2025regret}
Alexander Appel and Vanessa Kosoy.
\newblock Regret bounds for robust online decision making.
\newblock In \emph{Proceedings of the 38th Conference on Learning Theory},
  volume 291 of \emph{Proceedings of Machine Learning Research}, pages 64--146,
  2025.
\newblock URL \url{https://proceedings.mlr.press/v291/appel25a.html}.

\bibitem[Armstrong and Mindermann(2018)]{armstrong2018occam}
Stuart Armstrong and S{\"o}ren Mindermann.
\newblock {Occam's} razor is insufficient to infer the preferences of
  irrational agents.
\newblock In \emph{Advances in Neural Information Processing Systems},
  volume~31, 2018.
\newblock URL \url{https://arxiv.org/abs/1712.05812}.

\bibitem[Auer et~al.(2002)Auer, Cesa-Bianchi, Freund, and
  Schapire]{auer2002nonstochastic}
Peter Auer, Nicol{\`o} Cesa-Bianchi, Yoav Freund, and Robert~E. Schapire.
\newblock The nonstochastic multiarmed bandit problem.
\newblock \emph{SIAM Journal on Computing}, 32\penalty0 (1):\penalty0 48--77,
  2002.
\newblock \doi{10.1137/S0097539701398375}.

\bibitem[Bai et~al.(2022)]{bai2022training}
Yuntao Bai et~al.
\newblock Training a helpful and harmless assistant with reinforcement learning
  from human feedback.
\newblock arXiv preprint arXiv:2204.05862, 2022.

\bibitem[B{\'a}r{\'a}sz et~al.(2014)B{\'a}r{\'a}sz, Christiano, Fallenstein,
  Herreshoff, LaVictoire, and Yudkowsky]{barasz2014robust}
Mih{\'a}ly B{\'a}r{\'a}sz, Paul Christiano, Benja Fallenstein, Marcello
  Herreshoff, Patrick LaVictoire, and Eliezer Yudkowsky.
\newblock Robust cooperation in the {Prisoner's Dilemma}: Program equilibrium
  via provability logic.
\newblock arXiv preprint arXiv:1401.5577, 2014.

\bibitem[Bart{\'o}k et~al.(2014)Bart{\'o}k, Foster, P{\'a}l, Rakhlin, and
  Szepesv{\'a}ri]{bartok2014partial}
G{\'a}bor Bart{\'o}k, Dean~P. Foster, D{\'a}vid P{\'a}l, Alexander Rakhlin, and
  Csaba Szepesv{\'a}ri.
\newblock Partial monitoring---classification, regret bounds, and algorithms.
\newblock \emph{Mathematics of Operations Research}, 39\penalty0 (4):\penalty0
  967--997, 2014.
\newblock \doi{10.1287/moor.2014.0663}.

\bibitem[Ben-David et~al.(2009)Ben-David, P{\'a}l, and
  Shalev-Shwartz]{bendavid2009agnostic}
Shai Ben-David, D{\'a}vid P{\'a}l, and Shai Shalev-Shwartz.
\newblock Agnostic online learning.
\newblock In \emph{Proceedings of the 22nd Conference on Learning Theory},
  2009.
\newblock URL
  \url{https://home.ttic.edu/~shai/papers/Ben-DavidPalShalev09.pdf}.

\bibitem[Bengio et~al.(2024)]{bengio2024managing}
Yoshua Bengio et~al.
\newblock Managing extreme {AI} risks amid rapid progress.
\newblock \emph{Science}, 384\penalty0 (6698):\penalty0 842--845, 2024.
\newblock \doi{10.1126/science.adn0117}.

\bibitem[Bienstock(2016)]{bienstock2016cdt}
Daniel Bienstock.
\newblock A note on polynomial solvability of the {CDT} problem.
\newblock \emph{SIAM Journal on Optimization}, 26\penalty0 (1):\penalty0
  488--498, 2016.
\newblock \doi{10.1137/15M1009871}.

\bibitem[Block et~al.(2022)Block, Dagan, Golowich, and
  Rakhlin]{block2022smoothed}
Adam Block, Yuval Dagan, Noah Golowich, and Alexander Rakhlin.
\newblock Smoothed online learning is as easy as statistical learning.
\newblock In \emph{Proceedings of the 35th Conference on Learning Theory},
  volume 178 of \emph{Proceedings of Machine Learning Research}, pages
  1716--1786, 2022.
\newblock URL \url{https://proceedings.mlr.press/v178/block22a.html}.

\bibitem[Bostrom(2014)]{bostrom2014superintelligence}
Nick Bostrom.
\newblock \emph{Superintelligence: Paths, Dangers, Strategies}.
\newblock Oxford University Press, 2014.

\bibitem[Bubeck et~al.(2011{\natexlab{a}})Bubeck, Munos, and
  Stoltz]{bubeck2011pure}
S{\'e}bastien Bubeck, R{\'e}mi Munos, and Gilles Stoltz.
\newblock Pure exploration in finitely-armed and continuous-armed bandits.
\newblock \emph{Theoretical Computer Science}, 412\penalty0 (19):\penalty0
  1832--1852, 2011{\natexlab{a}}.
\newblock \doi{10.1016/j.tcs.2010.12.059}.

\bibitem[Bubeck et~al.(2011{\natexlab{b}})Bubeck, Munos, Stoltz, and
  Szepesv{\'a}ri]{bubeck2011xarmed}
S{\'e}bastien Bubeck, R{\'e}mi Munos, Gilles Stoltz, and Csaba Szepesv{\'a}ri.
\newblock {$X$}-armed bandits.
\newblock \emph{Journal of Machine Learning Research}, 12\penalty0
  (46):\penalty0 1655--1695, 2011{\natexlab{b}}.
\newblock URL \url{https://www.jmlr.org/papers/v12/bubeck11a.html}.

\bibitem[Casper et~al.(2023)]{casper2023open}
Stephen Casper et~al.
\newblock Open problems and fundamental limitations of reinforcement learning
  from human feedback.
\newblock \emph{Transactions on Machine Learning Research}, 2023.
\newblock URL \url{https://arxiv.org/abs/2307.15217}.

\bibitem[Christiano et~al.(2017)Christiano, Leike, Brown, Martic, Legg, and
  Amodei]{christiano2017deep}
Paul~F. Christiano, Jan Leike, Tom~B. Brown, Miljan Martic, Shane Legg, and
  Dario Amodei.
\newblock Deep reinforcement learning from human preferences.
\newblock In \emph{Advances in Neural Information Processing Systems},
  volume~30, 2017.
\newblock URL \url{https://arxiv.org/abs/1706.03741}.

\bibitem[Critch(2019)]{critch2019bounded}
Andrew Critch.
\newblock A parametric, resource-bounded generalization of {L{\"o}b's} theorem,
  and a robust cooperation criterion for open-source game theory.
\newblock \emph{The Journal of Symbolic Logic}, 84\penalty0 (4):\penalty0
  1368--1381, 2019.
\newblock \doi{10.1017/jsl.2017.42}.

\bibitem[Dalrymple(2026)]{dalrymple2026imprecise}
David~A. Dalrymple.
\newblock Imprecise beliefs: A tiny introduction.
\newblock AI Alignment Forum, July 2026.
\newblock URL
  \url{https://www.alignmentforum.org/posts/e7Pd4Q9TF7jFdmPgz/imprecise-beliefs-a-tiny-introduction}.
\newblock Published as davidad.

\bibitem[Dani et~al.(2008)Dani, Hayes, and Kakade]{dani2008stochastic}
Varsha Dani, Thomas~P. Hayes, and Sham~M. Kakade.
\newblock Stochastic linear optimization under bandit feedback.
\newblock In \emph{Proceedings of the 21st Conference on Learning Theory},
  pages 355--366, 2008.

\bibitem[Daniely(2016)]{daniely2016halfspaces}
Amit Daniely.
\newblock Complexity theoretic limitations on learning halfspaces.
\newblock In \emph{Proceedings of the 48th Annual ACM Symposium on Theory of
  Computing}, pages 105--117, 2016.
\newblock \doi{10.1145/2897518.2897520}.

\bibitem[Daniely and Shalev-Shwartz(2016)]{daniely2016dnf}
Amit Daniely and Shai Shalev-Shwartz.
\newblock Complexity theoretic limitations on learning {DNF's}.
\newblock In \emph{Proceedings of the 29th Conference on Learning Theory},
  volume~49 of \emph{Proceedings of Machine Learning Research}, pages 815--830,
  2016.
\newblock URL \url{https://proceedings.mlr.press/v49/daniely16.html}.

\bibitem[Demski and Garrabrant(2019)]{demski2019embedded}
Abram Demski and Scott Garrabrant.
\newblock Embedded agency.
\newblock arXiv preprint arXiv:1902.09469, 2019.

\bibitem[Dewey(2011)]{dewey2011learning}
Daniel Dewey.
\newblock Learning what to value.
\newblock In \emph{Artificial General Intelligence}, volume 6830 of
  \emph{Lecture Notes in Computer Science}, pages 309--314, Berlin, 2011.
  Springer.
\newblock \doi{10.1007/978-3-642-22887-2_35}.

\bibitem[Ellsberg(1961)]{ellsberg1961risk}
Daniel Ellsberg.
\newblock Risk, ambiguity, and the {Savage} axioms.
\newblock \emph{The Quarterly Journal of Economics}, 75\penalty0 (4):\penalty0
  643--669, 1961.
\newblock \doi{10.2307/1884324}.

\bibitem[Everitt et~al.(2018)Everitt, Lea, and Hutter]{everitt2018agi}
Tom Everitt, Gary Lea, and Marcus Hutter.
\newblock {AGI} safety literature review.
\newblock In \emph{Proceedings of the 27th International Joint Conference on
  Artificial Intelligence}, pages 5441--5449, 2018.
\newblock \doi{10.24963/ijcai.2018/768}.

\bibitem[Eyraud et~al.(2016)Eyraud, Heinz, and Yoshinaka]{eyraud2016efficiency}
R{\'e}mi Eyraud, Jeffrey Heinz, and Ryo Yoshinaka.
\newblock Efficiency in the identification in the limit learning paradigm.
\newblock In Jeffrey Heinz and Jos{\'e}~M. Sempere, editors, \emph{Topics in
  Grammatical Inference}, pages 25--46. Springer, Berlin, Heidelberg, 2016.
\newblock \doi{10.1007/978-3-662-48395-4_2}.

\bibitem[Fallenstein et~al.(2015)Fallenstein, Taylor, and
  Christiano]{fallenstein2015reflective}
Benja Fallenstein, Jessica Taylor, and Paul~F. Christiano.
\newblock Reflective oracles: A foundation for classical game theory.
\newblock arXiv preprint arXiv:1508.04145, 2015.

\bibitem[Foster et~al.(2016)Foster, Kale, and Karloff]{foster2016online}
Dean Foster, Satyen Kale, and Howard Karloff.
\newblock Online sparse linear regression.
\newblock In \emph{Proceedings of the 29th Conference on Learning Theory},
  volume~49 of \emph{Proceedings of Machine Learning Research}, pages 960--970,
  2016.
\newblock URL \url{https://proceedings.mlr.press/v49/foster16.html}.

\bibitem[Foster et~al.(2021)Foster, Kakade, Qian, and
  Rakhlin]{foster2021statistical}
Dylan~J. Foster, Sham~M. Kakade, Jian Qian, and Alexander Rakhlin.
\newblock The statistical complexity of interactive decision making.
\newblock arXiv preprint arXiv:2112.13487, 2021.

\bibitem[Garey and Johnson(1979)]{garey1979computers}
Michael~R. Garey and David~S. Johnson.
\newblock \emph{Computers and Intractability: A Guide to the Theory of
  {NP}-Completeness}.
\newblock W. H. Freeman, San Francisco, 1979.

\bibitem[Garey et~al.(1976)Garey, Johnson, and Stockmeyer]{garey1976simplified}
Michael~R. Garey, David~S. Johnson, and Larry Stockmeyer.
\newblock Some simplified {NP}-complete graph problems.
\newblock \emph{Theoretical Computer Science}, 1\penalty0 (3):\penalty0
  237--267, 1976.
\newblock \doi{10.1016/0304-3975(76)90059-1}.

\bibitem[Gilboa and Schmeidler(1989)]{gilboa1989maxmin}
Itzhak Gilboa and David Schmeidler.
\newblock Maxmin expected utility with non-unique prior.
\newblock \emph{Journal of Mathematical Economics}, 18\penalty0 (2):\penalty0
  141--153, 1989.
\newblock \doi{10.1016/0304-4068(89)90018-9}.

\bibitem[Hadfield-Menell et~al.(2016)Hadfield-Menell, Dragan, Abbeel, and
  Russell]{hadfieldmenell2016cooperative}
Dylan Hadfield-Menell, Anca Dragan, Pieter Abbeel, and Stuart Russell.
\newblock Cooperative inverse reinforcement learning.
\newblock In \emph{Advances in Neural Information Processing Systems},
  volume~29, pages 3909--3917, 2016.
\newblock URL \url{https://arxiv.org/abs/1606.03137}.

\bibitem[Hadfield-Menell et~al.(2017)Hadfield-Menell, Milli, Abbeel, Russell,
  and Dragan]{hadfieldmenell2017inverse}
Dylan Hadfield-Menell, Smitha Milli, Pieter Abbeel, Stuart Russell, and Anca
  Dragan.
\newblock Inverse reward design.
\newblock In \emph{Advances in Neural Information Processing Systems},
  volume~30, 2017.
\newblock URL \url{https://arxiv.org/abs/1711.02827}.

\bibitem[Haghtalab et~al.(2022)Haghtalab, Han, Shetty, and
  Yang]{haghtalab2022oracle}
Nika Haghtalab, Yanjun Han, Abhishek Shetty, and Kunhe Yang.
\newblock Oracle-efficient online learning for smoothed adversaries.
\newblock In \emph{Advances in Neural Information Processing Systems},
  volume~35, 2022.

\bibitem[Haghtalab et~al.(2024)Haghtalab, Roughgarden, and
  Shetty]{haghtalab2024smoothed}
Nika Haghtalab, Tim Roughgarden, and Abhishek Shetty.
\newblock Smoothed analysis with adaptive adversaries.
\newblock \emph{Journal of the ACM}, 71\penalty0 (3):\penalty0 19:1--19:34,
  2024.
\newblock \doi{10.1145/3656638}.

\bibitem[Hubinger et~al.(2019)Hubinger, van Merwijk, Mikulik, Skalse, and
  Garrabrant]{hubinger2019risks}
Evan Hubinger, Chris van Merwijk, Vladimir Mikulik, Joar Skalse, and Scott
  Garrabrant.
\newblock Risks from learned optimization in advanced machine learning systems.
\newblock arXiv preprint arXiv:1906.01820, 2019.

\bibitem[Hutter(2009)]{hutter2009open}
Marcus Hutter.
\newblock Open problems in universal induction \& intelligence.
\newblock \emph{Algorithms}, 2\penalty0 (3):\penalty0 879--906, 2009.
\newblock \doi{10.3390/a2030879}.

\bibitem[Jaynes(2003)]{jaynes2003probability}
Edwin~T. Jaynes.
\newblock \emph{Probability Theory: The Logic of Science}.
\newblock Cambridge University Press, 2003.
\newblock Edited by G. Larry Bretthorst.

\bibitem[Jeon et~al.(2020)Jeon, Milli, and Dragan]{jeon2020reward}
Hong~Jun Jeon, Smitha Milli, and Anca Dragan.
\newblock Reward-rational (implicit) choice: A unifying formalism for reward
  learning.
\newblock In \emph{Advances in Neural Information Processing Systems},
  volume~33, 2020.
\newblock URL \url{https://arxiv.org/abs/2002.04833}.

\bibitem[Ji et~al.(2026)]{ji2023alignment}
Jiaming Ji et~al.
\newblock {AI} alignment: A contemporary survey.
\newblock \emph{ACM Computing Surveys}, 58\penalty0 (5):\penalty0
  132:1--132:38, 2026.
\newblock \doi{10.1145/3770749}.

\bibitem[Jia et~al.(2023)Jia, Li, Rakhlin, Sekhari, and
  Srebro]{jia2023agnostic}
Zeyu Jia, Gene Li, Alexander Rakhlin, Ayush Sekhari, and Nathan Srebro.
\newblock When is agnostic reinforcement learning statistically tractable?
\newblock In \emph{Advances in Neural Information Processing Systems},
  volume~36, 2023.
\newblock URL \url{https://arxiv.org/abs/2310.06113}.

\bibitem[Kalai and Lehrer(1993)]{kalai1993rational}
Ehud Kalai and Ehud Lehrer.
\newblock Rational learning leads to {Nash} equilibrium.
\newblock \emph{Econometrica}, 61\penalty0 (5):\penalty0 1019--1045, 1993.
\newblock \doi{10.2307/2951492}.

\bibitem[Karp(1972)]{karp1972reducibility}
Richard~M. Karp.
\newblock Reducibility among combinatorial problems.
\newblock In Raymond~E. Miller, James~W. Thatcher, and Jean~D. Bohlinger,
  editors, \emph{Complexity of Computer Computations}, pages 85--103. Plenum
  Press, New York, 1972.
\newblock \doi{10.1007/978-1-4684-2001-2_9}.

\bibitem[Kearns et~al.(1994)Kearns, Schapire, and Sellie]{kearns1994agnostic}
Michael~J. Kearns, Robert~E. Schapire, and Linda~M. Sellie.
\newblock Toward efficient agnostic learning.
\newblock \emph{Machine Learning}, 17\penalty0 (2--3):\penalty0 115--141, 1994.
\newblock \doi{10.1007/BF00993468}.

\bibitem[Kirschner et~al.(2020)Kirschner, Lattimore, and
  Krause]{kirschner2020information}
Johannes Kirschner, Tor Lattimore, and Andreas Krause.
\newblock Information directed sampling for linear partial monitoring.
\newblock In \emph{Proceedings of the 33rd Conference on Learning Theory},
  volume 125 of \emph{Proceedings of Machine Learning Research}, pages
  2328--2369, 2020.
\newblock URL \url{https://proceedings.mlr.press/v125/kirschner20a.html}.

\bibitem[Kosoy(2018)]{kosoy2018learningagenda}
Vanessa Kosoy.
\newblock The learning-theoretic {AI} alignment research agenda.
\newblock AI Alignment Forum, July 2018.
\newblock URL
  \url{https://www.alignmentforum.org/posts/5bd75cc58225bf0670375575/the-learning-theoretic-ai-alignment-research-agenda}.

\bibitem[Kosoy(2023)]{kosoy2023agenda}
Vanessa Kosoy.
\newblock The learning-theoretic agenda: Status 2023.
\newblock AI Alignment Forum, April 2023.
\newblock URL
  \url{https://www.alignmentforum.org/posts/ZwshvqiqCvXPsZEct/the-learning-theoretic-agenda-status-2023}.

\bibitem[Kosoy(2024)]{kosoy2024thesis}
Vanessa Kosoy.
\newblock Imprecise multi-armed bandits.
\newblock M.sc. thesis, The Rachel and Selim Benin School of Computer Science,
  The Hebrew University of Jerusalem, 2024.
\newblock URL \url{https://arxiv.org/abs/2405.05673v1}.

\bibitem[Kosoy(2025)]{kosoy2025imprecise}
Vanessa Kosoy.
\newblock Imprecise multi-armed bandits: Representing irreducible uncertainty
  as a zero-sum game.
\newblock \emph{Journal of Machine Learning Research}, 26\penalty0
  (184):\penalty0 1--75, 2025.
\newblock URL \url{https://www.jmlr.org/papers/v26/24-2001.html}.

\bibitem[Kosoy(2026)]{kosoy2026ambiguous}
Vanessa Kosoy.
\newblock Ambiguous online learning.
\newblock In \emph{Proceedings of the 39th Conference on Learning Theory},
  volume 336 of \emph{Proceedings of Machine Learning Research}, pages
  4229--4266, 2026.
\newblock URL \url{https://proceedings.mlr.press/v336/kosoy26a.html}.

\bibitem[Krishnamurthy et~al.(2016)Krishnamurthy, Agarwal, and
  Langford]{krishnamurthy2016pac}
Akshay Krishnamurthy, Alekh Agarwal, and John Langford.
\newblock {PAC} reinforcement learning with rich observations.
\newblock In \emph{Advances in Neural Information Processing Systems},
  volume~29, 2016.
\newblock URL
  \url{https://proceedings.neurips.cc/paper/2016/hash/2387337ba1e0b0249ba90f55b2ba2521-Abstract.html}.

\bibitem[Leike et~al.(2016)Leike, Taylor, and Fallenstein]{leike2016grain}
Jan Leike, Jessica Taylor, and Benya Fallenstein.
\newblock A formal solution to the grain of truth problem.
\newblock In \emph{Proceedings of the 32nd Conference on Uncertainty in
  Artificial Intelligence}, pages 427--436, 2016.
\newblock URL \url{https://arxiv.org/abs/1609.05058}.

\bibitem[Leike et~al.(2018)Leike, Krueger, Everitt, Martic, Maini, and
  Legg]{leike2018scalable}
Jan Leike, David Krueger, Tom Everitt, Miljan Martic, Vishal Maini, and Shane
  Legg.
\newblock Scalable agent alignment via reward modeling: A research direction.
\newblock arXiv preprint arXiv:1811.07871, 2018.

\bibitem[Levi(1974)]{levi1974indeterminate}
Isaac Levi.
\newblock On indeterminate probabilities.
\newblock \emph{The Journal of Philosophy}, 71\penalty0 (13):\penalty0
  391--418, 1974.
\newblock \doi{10.2307/2025161}.

\bibitem[Levi(1980)]{levi1980enterprise}
Isaac Levi.
\newblock \emph{The Enterprise of Knowledge: An Essay on Knowledge, Credal
  Probability, and Chance}.
\newblock MIT Press, Cambridge, MA, 1980.

\bibitem[Li(2025)]{li2025agnostic}
Gene~X. Li.
\newblock \emph{Agnostic Reinforcement Learning: Foundations and Algorithms}.
\newblock PhD thesis, Toyota Technological Institute at Chicago, Chicago, IL,
  2025.
\newblock URL \url{https://arxiv.org/abs/2506.01884}.

\bibitem[Littlestone(1988)]{littlestone1988learning}
Nick Littlestone.
\newblock Learning quickly when irrelevant attributes abound: A new
  linear-threshold algorithm.
\newblock \emph{Machine Learning}, 2\penalty0 (4):\penalty0 285--318, 1988.
\newblock \doi{10.1007/BF00116827}.

\bibitem[Milli et~al.(2017)Milli, Hadfield-Menell, Dragan, and
  Russell]{milli2017obedient}
Smitha Milli, Dylan Hadfield-Menell, Anca Dragan, and Stuart Russell.
\newblock Should robots be obedient?
\newblock In \emph{Proceedings of the 26th International Joint Conference on
  Artificial Intelligence}, pages 4754--4760, 2017.
\newblock \doi{10.24963/ijcai.2017/662}.

\bibitem[Motzkin and Straus(1965)]{motzkin1965maxima}
Theodore~S. Motzkin and Ernst~G. Straus.
\newblock Maxima for graphs and a new proof of a theorem of {Tur{\'a}n}.
\newblock \emph{Canadian Journal of Mathematics}, 17:\penalty0 533--540, 1965.
\newblock \doi{10.4153/CJM-1965-053-6}.

\bibitem[Nachbar(1997)]{nachbar1997prediction}
John~H. Nachbar.
\newblock Prediction, optimization, and learning in repeated games.
\newblock \emph{Econometrica}, 65\penalty0 (2):\penalty0 275--309, 1997.
\newblock \doi{10.2307/2171894}.

\bibitem[Nachbar(2005)]{nachbar2005beliefs}
John~H. Nachbar.
\newblock Beliefs in repeated games.
\newblock \emph{Econometrica}, 73\penalty0 (2):\penalty0 459--480, 2005.
\newblock \doi{10.1111/j.1468-0262.2005.00585.x}.

\bibitem[Ng and Russell(2000)]{ng2000algorithms}
Andrew~Y. Ng and Stuart Russell.
\newblock Algorithms for inverse reinforcement learning.
\newblock In \emph{Proceedings of the 17th International Conference on Machine
  Learning}, pages 663--670, 2000.

\bibitem[Ngo et~al.(2024)Ngo, Chan, and Mindermann]{ngo2024alignment}
Richard Ngo, Lawrence Chan, and S{\"o}ren Mindermann.
\newblock The alignment problem from a deep learning perspective.
\newblock In \emph{International Conference on Learning Representations}, 2024.
\newblock URL \url{https://arxiv.org/abs/2209.00626v6}.

\bibitem[Noguchi(2015)]{noguchi2015bayesian}
Yuichi Noguchi.
\newblock {Bayesian} learning, smooth approximate optimal behavior, and
  convergence to $\varepsilon$-{Nash} equilibrium.
\newblock \emph{Econometrica}, 83\penalty0 (1):\penalty0 353--373, 2015.
\newblock \doi{10.3982/ECTA9332}.

\bibitem[Oesterheld(2019)]{oesterheld2019robust}
Caspar Oesterheld.
\newblock Robust program equilibrium.
\newblock \emph{Theory and Decision}, 86\penalty0 (1):\penalty0 143--159, 2019.
\newblock \doi{10.1007/s11238-018-9679-3}.

\bibitem[Ouyang et~al.(2022)]{ouyang2022training}
Long Ouyang et~al.
\newblock Training language models to follow instructions with human feedback.
\newblock In \emph{Advances in Neural Information Processing Systems},
  volume~35, 2022.
\newblock URL \url{https://arxiv.org/abs/2203.02155}.

\bibitem[Pour et~al.(2026)Pour, Mansouri, and Ben-David]{pour2026learning}
Alireza~F. Pour, Farnam Mansouri, and Shai Ben-David.
\newblock Learning with multiple correct answers -- regret bounds under
  different feedback models.
\newblock arXiv preprint arXiv:2602.09402, 2026.

\bibitem[Rakhlin et~al.(2011)Rakhlin, Sridharan, and Tewari]{rakhlin2011online}
Alexander Rakhlin, Karthik Sridharan, and Ambuj Tewari.
\newblock Online learning: Stochastic, constrained, and smoothed adversaries.
\newblock In \emph{Advances in Neural Information Processing Systems},
  volume~24, 2011.

\bibitem[Russell(2019)]{russell2019human}
Stuart Russell.
\newblock \emph{Human Compatible: Artificial Intelligence and the Problem of
  Control}.
\newblock Viking, New York, 2019.

\bibitem[Russell et~al.(2015)Russell, Dewey, and Tegmark]{russell2015research}
Stuart Russell, Daniel Dewey, and Max Tegmark.
\newblock Research priorities for robust and beneficial artificial
  intelligence.
\newblock \emph{AI Magazine}, 36\penalty0 (4):\penalty0 105--114, 2015.
\newblock \doi{10.1609/aimag.v36i4.2577}.

\bibitem[Savage(1954)]{savage1954foundations}
Leonard~J. Savage.
\newblock \emph{The Foundations of Statistics}.
\newblock Wiley, New York, 1954.

\bibitem[Sekhari et~al.(2021)Sekhari, Dann, Mohri, Mansour, and
  Sridharan]{sekhari2021agnostic}
Ayush Sekhari, Christoph Dann, Mehryar Mohri, Yishay Mansour, and Karthik
  Sridharan.
\newblock Agnostic reinforcement learning with low-rank {MDPs} and rich
  observations.
\newblock In \emph{Advances in Neural Information Processing Systems},
  volume~34, pages 19033--19045, 2021.
\newblock URL
  \url{https://proceedings.neurips.cc/paper/2021/hash/9eed867b73ab1eab60583c9d4a789b1b-Abstract.html}.

\bibitem[Shah et~al.(2019)Shah, Krasheninnikov, Alexander, Abbeel, and
  Dragan]{shah2019preferences}
Rohin Shah, Dmitrii Krasheninnikov, Jordan Alexander, Pieter Abbeel, and Anca
  Dragan.
\newblock Preferences implicit in the state of the world.
\newblock In \emph{International Conference on Learning Representations}, 2019.
\newblock URL \url{https://arxiv.org/abs/1902.04198}.

\bibitem[Sion(1958)]{sion1958minimax}
Maurice Sion.
\newblock On general minimax theorems.
\newblock \emph{Pacific Journal of Mathematics}, 8\penalty0 (1):\penalty0
  171--176, 1958.
\newblock \doi{10.2140/pjm.1958.8.171}.

\bibitem[Soares(2016)]{soares2016value}
Nate Soares.
\newblock The value learning problem.
\newblock In \emph{Ethics for Artificial Intelligence Workshop at IJCAI-16},
  2016.
\newblock URL \url{https://intelligence.org/files/ValueLearningProblem.pdf}.

\bibitem[Spielman and Teng(2004)]{spielman2004smoothed}
Daniel~A. Spielman and Shang-Hua Teng.
\newblock Smoothed analysis of algorithms: Why the simplex algorithm usually
  takes polynomial time.
\newblock \emph{Journal of the ACM}, 51\penalty0 (3):\penalty0 385--463, 2004.
\newblock \doi{10.1145/990308.990310}.

\bibitem[Tennenholtz(2004)]{tennenholtz2004program}
Moshe Tennenholtz.
\newblock Program equilibrium.
\newblock \emph{Games and Economic Behavior}, 49\penalty0 (2):\penalty0
  363--373, 2004.
\newblock \doi{10.1016/j.geb.2004.02.002}.

\bibitem[Walley(1991)]{walley1991statistical}
Peter Walley.
\newblock \emph{Statistical Reasoning with Imprecise Probabilities}.
\newblock Chapman and Hall, London, 1991.

\bibitem[Wyeth et~al.(2025)Wyeth, Hutter, Leike, and Taylor]{wyeth2025grain}
Cole Wyeth, Marcus Hutter, Jan Leike, and Jessica Taylor.
\newblock Limit-computable grains of truth for arbitrary computable
  extensive-form (un)known games.
\newblock arXiv preprint arXiv:2508.16245, 2025.

\end{thebibliography}
